\pdfoutput=1
\documentclass{article}
\usepackage{iclr2027_conference,times}
\iclrfinalcopy
\usepackage{etoolbox}
\makeatletter\patchcmd{\@maketitle}{Published as a conference paper at ICLR 2027}{Preprint}{}{\errmessage{header patch failed}}\makeatother
\usepackage[utf8]{inputenc}
\usepackage[T1]{fontenc}
\usepackage{url,booktabs,amsfonts,amsmath,amssymb,amsthm,microtype,graphicx,array}
\usepackage{placeins,float,algorithm,algpseudocode}
\usepackage[hidelinks]{hyperref}

\newcommand{\E}{\mathbb{E}}
\newcommand{\R}{\mathbb{R}}

\newcommand{\bCheckedOne}{26}

\newcommand{\bCheckedTwo}{140}

\newcommand{\bCheckedFour}{499}

\newcommand{\eLaws}{18}
\newcommand{\eNullLaws}{6}
\newcommand{\eWeeksLong}{18}
\newcommand{\eWeeksRatio}{4.5}

\newcommand{\rReqOurs}{3.07}
\newcommand{\rReqBest}{117.8}

\newcommand{\rInclEight}{1.81}

\newcommand{\rInclAll}{11.04}

\newcommand{\rOriginsAll}{9}
\newcommand{\rBaseAll}{131.1}
\newcommand{\rOursAll}{12.2}

\newcommand{\rCostFull}{0.215}
\newcommand{\rCostPoint}{0.660}
\newcommand{\rPointRatio}{3.08}
\newcommand{\rPointCI}{[2.48, 3.81]}
\newcommand{\rAwardFull}{4.0}
\newcommand{\rAwardPoint}{66.9}

\newcommand{\rIndRatio}{1.51}
\newcommand{\rIndCI}{[1.33, 1.74]}

\newcommand{\rNorRatio}{2.46}
\newcommand{\rNorCI}{[2.02, 3.01]}
\newcommand{\rNorAwardDiff}{-10.31}

\newcommand{\rPlgRatio}{1.85}
\newcommand{\rPlgCI}{[1.58, 2.19]}

\newcommand{\rEqLoss}{15.7}
\newcommand{\rEqCI}{[13.8, 17.4]}
\newcommand{\rRegFullHalf}{0.37}
\newcommand{\rRegEqwHalf}{0.78}

\newcommand{\decMMRatioA}{1.40}
\newcommand{\decMMRatioB}{1.41}

\providecommand{\blBitwise}{1{,}431}
\providecommand{\blBitwiseOf}{1{,}431}
\providecommand{\blCoupleShare}{1.4}
\providecommand{\blDPmax}{0.92}
\providecommand{\blDPms}{0.51}
\providecommand{\blDPn}{405}
\providecommand{\blDPnz}{3}
\providecommand{\blEqLoss}{15.7}
\providecommand{\blEqLossHi}{17.4}
\providecommand{\blEqLossLo}{13.8}
\providecommand{\blEqMs}{0.01}

\providecommand{\blIndCost}{1.51}
\providecommand{\blIndCostHi}{1.74}
\providecommand{\blIndCostLo}{1.33}
\providecommand{\blIndSpeed}{1.001}
\providecommand{\blIntegS}{3.86}
\providecommand{\blKAll}{11.0}

\providecommand{\blMfAlt}{84.8}

\providecommand{\blMfKAll}{7.11}
\providecommand{\blMfKEight}{1.66}
\providecommand{\blMfLossN}{405}
\providecommand{\blMfLossNz}{0}
\providecommand{\blMfOurs}{4.36}
\providecommand{\blMfRatio}{20.0}
\providecommand{\blMfRatioHi}{21.2}
\providecommand{\blMfRatioLo}{18.9}
\providecommand{\blMipCond}{396}
\providecommand{\blMipDPExact}{65{,}340}
\providecommand{\blMipEnvExact}{65{,}335}
\providecommand{\blMipEnvNine}{2.5}
\providecommand{\blMipEnvOne}{1.06}

\providecommand{\blMipEnvTT}{5.5}
\providecommand{\blMipRawExact}{65{,}315}
\providecommand{\blMipRawNine}{7.6}
\providecommand{\blMipRawOne}{1.08}

\providecommand{\blMipRawTT}{20.9}
\providecommand{\blMipReq}{65{,}340}
\providecommand{\blNormCost}{2.46}
\providecommand{\blNormCostFS}{3.29}
\providecommand{\blNormCostHi}{3.01}
\providecommand{\blNormCostLo}{2.02}
\providecommand{\blNormSpeed}{85.8}

\providecommand{\blPerRiderGiB}{190}

\providecommand{\blPointCost}{3.08}
\providecommand{\blPointCostHi}{3.81}
\providecommand{\blPointCostLo}{2.48}

\providecommand{\blRctLaws}{18}

\providecommand{\blReqAlt}{117.8}
\providecommand{\blReqOurs}{3.07}
\providecommand{\blReqRatio}{38.6}
\providecommand{\blReqRatioHi}{43.0}
\providecommand{\blReqRatioLo}{30.8}
\providecommand{\blTierTable}{4.05}

\newcommand{\xFWOne}{0.252}
\newcommand{\xSampOne}{0.286}
\newcommand{\xSCOne}{0.542}
\newcommand{\xNoSynOne}{0.658}
\newcommand{\xNoMomOne}{0.656}
\newcommand{\xPlugOne}{0.583}

\newcommand{\xRelRedOne}{53.6}
\newcommand{\xNoSynDOne}{$+$0.406}

\newcommand{\xNoMomDOne}{$+$0.404}

\newcommand{\xPlugDOne}{$+$0.331}

\newcommand{\xSampDOne}{$+$0.035}
\newcommand{\xSampDCIOne}{[$+$0.010, $+$0.061]}

\newcommand{\xValFWOne}{80.3}
\newcommand{\xValSCOne}{53.2}

\newcommand{\xFWTwo}{0.186}
\newcommand{\xSampTwo}{0.223}
\newcommand{\xSCTwo}{0.383}
\newcommand{\xNoSynTwo}{0.611}
\newcommand{\xNoMomTwo}{0.656}
\newcommand{\xPlugTwo}{0.599}

\newcommand{\xValFWTwo}{83.5}
\newcommand{\xValSCTwo}{69.0}

\newcommand{\xFWFour}{0.136}
\newcommand{\xSampFour}{0.189}
\newcommand{\xSCFour}{0.264}
\newcommand{\xNoSynFour}{0.569}
\newcommand{\xNoMomFour}{0.656}
\newcommand{\xPlugFour}{0.616}

\newcommand{\xFWmLFour}{$+$0.006}
\newcommand{\xFWmLCIFour}{[$-$0.013, $+$0.024]}

\newcommand{\xValFWFour}{87.5}
\newcommand{\xValSCFour}{76.5}

\newcommand{\xLong}{0.130}

\newcommand{\xExSCOne}{$-$0.296}
\newcommand{\xExSCCIOne}{[$-$0.345, $-$0.248]}
\newcommand{\xExSCTwo}{$-$0.198}
\newcommand{\xExSCCITwo}{[$-$0.243, $-$0.157]}
\newcommand{\xExSCFour}{$-$0.135}
\newcommand{\xExSCCIFour}{[$-$0.169, $-$0.101]}
\newcommand{\xExLFour}{$+$0.004}
\newcommand{\xExLCIFour}{[$-$0.015, $+$0.023]}
\newcommand{\xExNoSynOne}{$+$0.386}
\newcommand{\xExNoMomOne}{$+$0.391}
\newcommand{\xExPlugOne}{$+$0.328}
\newcommand{\xExSampOne}{$+$0.034}
\newcommand{\xExSampCIOne}{[$+$0.008, $+$0.061]}

\newcommand{\hoMargin}{0.045}
\newcommand{\hoSD}{$+$0.003}
\newcommand{\hoSDCI}{[$-$0.018, $+$0.020]}

\newcommand{\hoSHeld}{0.343}
\newcommand{\hoSJobs}{180}
\newcommand{\hoSNecEq}{$+$0.122}
\newcommand{\hoSNecEqCI}{[$+$0.099, $+$0.145]}
\newcommand{\hoSNecEqw}{$+$0.0046}
\newcommand{\hoSNecEqwCI}{[$+$0.0024, $+$0.0070]}

\newcommand{\hoSNecInd}{$+$0.058}
\newcommand{\hoSNecIndCI}{[$+$0.035, $+$0.080]}
\newcommand{\hoSNecNorm}{$+$0.205}
\newcommand{\hoSNecNormCI}{[$+$0.167, $+$0.240]}
\newcommand{\hoSNecPlug}{$+$0.123}
\newcommand{\hoSNecPlugCI}{[$+$0.097, $+$0.148]}
\newcommand{\hoSNecPoint}{$+$0.380}
\newcommand{\hoSNecPointCI}{[$+$0.292, $+$0.467]}
\newcommand{\hoSOrigins}{45}
\newcommand{\hoSPass}{yes}
\newcommand{\hoSRef}{0.340}

\newcommand{\hoSRiders}{311}

\newcommand{\hoTNecEq}{$+$0.048}
\newcommand{\hoTNecEqCI}{[$+$0.021, $+$0.080]}
\newcommand{\hoTNecEqw}{$+$0.0017}
\newcommand{\hoTNecEqwCI}{[$+$0.0006, $+$0.0028]}

\newcommand{\hoTNecInd}{$+$0.202}
\newcommand{\hoTNecIndCI}{[$+$0.142, $+$0.250]}
\newcommand{\hoTNecNorm}{$+$0.459}
\newcommand{\hoTNecNormCI}{[$+$0.382, $+$0.523]}
\newcommand{\hoTNecPlug}{$+$0.317}
\newcommand{\hoTNecPlugCI}{[$+$0.235, $+$0.382]}
\newcommand{\hoTNecPoint}{$+$0.416}
\newcommand{\hoTNecPointCI}{[$+$0.359, $+$0.472]}

\newcommand{\pmFWFour}{0.045}
\newcommand{\pmFWOne}{0.111}
\newcommand{\pmFWTwo}{0.073}

\newcommand{\pmFWmLCIFour}{[$-$0.004, $+$0.015]}

\newcommand{\pmFWmLFour}{$+$0.006}

\newcommand{\pmFWmSCIFour}{[$-$0.093, $-$0.044]}
\newcommand{\pmFWmSCIOne}{[$-$0.189, $-$0.133]}
\newcommand{\pmFWmSCITwo}{[$-$0.123, $-$0.071]}

\newcommand{\pmFWmSabsFour}{0.067}
\newcommand{\pmFWmSabsOne}{0.158}
\newcommand{\pmFWmSabsTwo}{0.095}
\newcommand{\pmLong}{0.040}

\newcommand{\pmNoSynFour}{0.353}
\newcommand{\pmNoSynOne}{0.401}
\newcommand{\pmNoSynTwo}{0.379}

\newcommand{\pmRelRedOne}{58.9}

\newcommand{\pmSCFour}{0.112}
\newcommand{\pmSCOne}{0.269}
\newcommand{\pmSCTwo}{0.169}

\newcommand{\pmSampDCIOne}{[$+$0.017, $+$0.042]}

\newcommand{\pmSampDOne}{$+$0.029}

\newcommand{\pmSampFour}{0.089}
\newcommand{\pmSampOne}{0.140}
\newcommand{\pmSampTwo}{0.105}

\providecommand{\dfUnits}{135}

\providecommand{\dfPlansTotal}{72,900}
\providecommand{\dfZeroPlans}{810}

\providecommand{\dfZeroShare}{1.1}

\providecommand{\dfBindHalf}{3}

\providecommand{\dfRegMeanHalf}{0.37}

\providecommand{\dfOffline}{114.2}

\providecommand{\dfReq}{3.07}
\providecommand{\dfReqFirst}{2.97}

\providecommand{\dfBase}{114.5}

\providecommand{\dfOriginsAll}{0, 5, 10, 15, 20, 25, 30, 35, 40}

\providecommand{\enumRone}{810}

\providecommand{\enumEcalls}{10{,}800}
\providecommand{\enumEbands}{356{,}400}
\providecommand{\enumEdiff}{0}

\providecommand{\enumDPpredN}{405}
\providecommand{\enumDPpredNZ}{3}
\providecommand{\enumDPpredMax}{0.92}

\providecommand{\nmNorAwardDiff}{10.31}
\providecommand{\nmEnumBrute}{675}
\providecommand{\nmEnumBrutePred}{12,150}
\providecommand{\nmEnumPred}{14,580}
\providecommand{\nmDPfbN}{135}
\providecommand{\nmDPfbZero}{134}
\providecommand{\nmDPfbMax}{0.69}
\providecommand{\nmActMean}{70.98}
\providecommand{\nmActMin}{67.41}
\providecommand{\nmActMax}{74.08}
\providecommand{\nmMfullEight}{1.59}
\renewcommand{\pmFWOne}{0.205}
\renewcommand{\pmSampOne}{0.245}
\renewcommand{\pmSCOne}{0.421}
\renewcommand{\pmNoSynOne}{0.603}

\renewcommand{\pmFWmSCIOne}{[$-$0.261, $-$0.173]}
\renewcommand{\pmFWmSabsOne}{0.215}
\renewcommand{\pmRelRedOne}{51.2}

\renewcommand{\pmSampDOne}{$+$0.039}
\renewcommand{\pmSampDCIOne}{[$+$0.017, $+$0.064]}

\renewcommand{\pmFWTwo}{0.159}
\renewcommand{\pmSampTwo}{0.197}
\renewcommand{\pmSCTwo}{0.313}
\renewcommand{\pmNoSynTwo}{0.560}

\renewcommand{\pmFWmSCITwo}{[$-$0.198, $-$0.115]}
\renewcommand{\pmFWmSabsTwo}{0.155}

\renewcommand{\pmFWFour}{0.121}
\renewcommand{\pmSampFour}{0.163}
\renewcommand{\pmSCFour}{0.225}
\renewcommand{\pmNoSynFour}{0.527}

\renewcommand{\pmFWmSCIFour}{[$-$0.140, $-$0.071]}
\renewcommand{\pmFWmSabsFour}{0.105}

\renewcommand{\pmFWmLFour}{$+$0.007}
\renewcommand{\pmFWmLCIFour}{[$-$0.012, $+$0.027]}

\renewcommand{\pmLong}{0.113}

\newcommand{\eFFWOne}{0.220}
\newcommand{\eFSCOne}{0.391}
\newcommand{\eFDOne}{$-$0.171}
\newcommand{\eFDCIOne}{[$-$0.201, $-$0.140]}
\newcommand{\eFRelOne}{43.8}
\newcommand{\eFNoMomOne}{0.592}
\newcommand{\eFdNoMomOne}{$+$0.373}
\newcommand{\eFdNoMomCIOne}{[$+$0.333, $+$0.412]}
\newcommand{\eFNoSynOne}{0.471}
\newcommand{\eFdNoSynOne}{$+$0.251}
\newcommand{\eFdNoSynCIOne}{[$+$0.191, $+$0.312]}
\newcommand{\eFPlugOne}{0.592}
\newcommand{\eFdPlugOne}{$+$0.372}
\newcommand{\eFdPlugCIOne}{[$+$0.328, $+$0.417]}
\newcommand{\eFIndOne}{0.311}
\newcommand{\eFdIndOne}{$+$0.091}
\newcommand{\eFdIndCIOne}{[$+$0.068, $+$0.116]}
\newcommand{\eFEqOne}{0.699}
\newcommand{\eFdEqOne}{$+$0.479}
\newcommand{\eFdEqCIOne}{[$+$0.392, $+$0.570]}
\newcommand{\eFCOne}{0.227}
\newcommand{\eFdCOne}{$+$0.007}
\newcommand{\eFdCCIOne}{[$+$0.004, $+$0.010]}
\newcommand{\eFValOne}{93.2}

\newcommand{\eFFWTwo}{0.172}
\newcommand{\eFSCTwo}{0.296}
\newcommand{\eFDTwo}{$-$0.124}
\newcommand{\eFDCITwo}{[$-$0.160, $-$0.089]}
\newcommand{\eFRelTwo}{42.0}
\newcommand{\eFNoMomTwo}{0.592}

\newcommand{\eFNoSynTwo}{0.433}

\newcommand{\eFPlugTwo}{0.604}

\newcommand{\eFIndTwo}{0.292}

\newcommand{\eFEqTwo}{0.692}

\newcommand{\eFCTwo}{0.177}
\newcommand{\eFdCTwo}{$+$0.005}
\newcommand{\eFdCCITwo}{[$+$0.003, $+$0.007]}
\newcommand{\eFValTwo}{94.0}

\newcommand{\eFFWFour}{0.148}
\newcommand{\eFSCFour}{0.216}
\newcommand{\eFDFour}{$-$0.069}
\newcommand{\eFDCIFour}{[$-$0.104, $-$0.035]}
\newcommand{\eFRelFour}{31.8}
\newcommand{\eFNoMomFour}{0.592}

\newcommand{\eFNoSynFour}{0.404}

\newcommand{\eFPlugFour}{0.631}

\newcommand{\eFIndFour}{0.296}

\newcommand{\eFEqFour}{0.684}

\newcommand{\eFCFour}{0.154}
\newcommand{\eFdCFour}{$+$0.006}
\newcommand{\eFdCCIFour}{[$+$0.003, $+$0.009]}
\newcommand{\eFValFour}{94.1}

\newcommand{\eFXFWOne}{0.258}
\newcommand{\eFXSCOne}{0.527}
\newcommand{\eFXDOne}{$-$0.269}
\newcommand{\eFXDCIOne}{[$-$0.297, $-$0.240]}

\newcommand{\eFXNoMomOne}{0.626}

\newcommand{\eFXNoSynOne}{0.561}

\newcommand{\eFXPlugOne}{0.669}

\newcommand{\eFXIndOne}{0.368}

\newcommand{\eFXEqOne}{0.902}

\newcommand{\eFXCOne}{0.265}

\newcommand{\eFXValOne}{87.7}

\newcommand{\eFXFWTwo}{0.198}
\newcommand{\eFXSCTwo}{0.388}
\newcommand{\eFXDTwo}{$-$0.190}
\newcommand{\eFXDCITwo}{[$-$0.225, $-$0.156]}

\newcommand{\eFXNoMomTwo}{0.626}

\newcommand{\eFXNoSynTwo}{0.509}

\newcommand{\eFXPlugTwo}{0.672}

\newcommand{\eFXIndTwo}{0.340}

\newcommand{\eFXEqTwo}{0.898}

\newcommand{\eFXCTwo}{0.205}

\newcommand{\eFXValTwo}{90.0}

\newcommand{\eFXFWFour}{0.164}
\newcommand{\eFXSCFour}{0.269}
\newcommand{\eFXDFour}{$-$0.105}
\newcommand{\eFXDCIFour}{[$-$0.139, $-$0.073]}

\newcommand{\eFXNoMomFour}{0.626}

\newcommand{\eFXNoSynFour}{0.470}

\newcommand{\eFXPlugFour}{0.690}

\newcommand{\eFXIndFour}{0.339}

\newcommand{\eFXEqFour}{0.892}

\newcommand{\eFXCFour}{0.169}

\newcommand{\eFXValFour}{91.1}

\newcommand{\eFDPmax}{0.001}
\newcommand{\eFBind}{72.3}

\newcommand{\eFIdentAll}{51,840}
\newcommand{\eFIdentFortyEight}{69,120}
\newcommand{\eFSpEightOne}{5.8}

\newcommand{\eFSpThirtyTwoOne}{11.9}

\newcommand{\eFSpSixtyOne}{14.1}
\newcommand{\eFSpSixtyCIOne}{[13.8, 14.5]}
\newcommand{\eFPrepMsOne}{7.27}
\newcommand{\eFReqMsOne}{0.22}
\newcommand{\eFNoReuseMsOne}{7.57}
\newcommand{\eFSpEightTwo}{5.8}

\newcommand{\eFSpThirtyTwoTwo}{11.9}

\newcommand{\eFSpSixtyTwo}{14.2}
\newcommand{\eFSpSixtyCITwo}{[13.8, 14.5]}
\newcommand{\eFPrepMsTwo}{7.40}
\newcommand{\eFReqMsTwo}{0.22}
\newcommand{\eFNoReuseMsTwo}{7.57}
\newcommand{\eFSpEightFour}{5.9}

\newcommand{\eFSpThirtyTwoFour}{12.0}

\newcommand{\eFSpSixtyFour}{14.3}
\newcommand{\eFSpSixtyCIFour}{[14.0, 14.7]}
\newcommand{\eFPrepMsFour}{6.25}
\newcommand{\eFReqMsFour}{0.21}
\newcommand{\eFNoReuseMsFour}{6.61}

\newcommand{\eFLaws}{48}
\newcommand{\eFRespLaws}{36}
\newcommand{\eFNullLaws}{12}
\newcommand{\eFReqs}{60}
\newcommand{\eFBudgeted}{180}
\newcommand{\eFReps}{8}

\newcommand{\exEqTenOne}{$+$0.610}

\newcommand{\exEqLvlTenOne}{0.861}

\newcommand{\exEqLvlTenTwo}{0.860}

\newcommand{\exEqLvlTenFour}{0.855}
\newcommand{\exEqTwentyOne}{$+$0.455}
\newcommand{\exEqTwentyCIOne}{[$+$0.344, $+$0.571]}

\newcommand{\eGFWOne}{0.188}
\newcommand{\eGSCOne}{0.477}
\newcommand{\eGDOne}{$-$0.289}
\newcommand{\eGDCIOne}{[$-$0.311, $-$0.267]}
\newcommand{\eGRelOne}{60.6}
\newcommand{\eGNoMomOne}{0.433}
\newcommand{\eGdNoMomOne}{$+$0.245}
\newcommand{\eGdNoMomCIOne}{[$+$0.214, $+$0.275]}
\newcommand{\eGNoSynOne}{0.415}
\newcommand{\eGdNoSynOne}{$+$0.228}
\newcommand{\eGdNoSynCIOne}{[$+$0.206, $+$0.252]}
\newcommand{\eGPlugOne}{0.447}
\newcommand{\eGdPlugOne}{$+$0.260}
\newcommand{\eGdPlugCIOne}{[$+$0.233, $+$0.287]}
\newcommand{\eGIndOne}{0.198}
\newcommand{\eGdIndOne}{$+$0.011}
\newcommand{\eGdIndCIOne}{[$+$0.009, $+$0.013]}
\newcommand{\eGEqOne}{0.498}
\newcommand{\eGdEqOne}{$+$0.311}
\newcommand{\eGdEqCIOne}{[$+$0.269, $+$0.355]}
\newcommand{\eGCOne}{0.210}
\newcommand{\eGdCOne}{$+$0.022}
\newcommand{\eGdCCIOne}{[$+$0.018, $+$0.027]}
\newcommand{\eGValOne}{93.1}
\newcommand{\eGFWTwo}{0.150}
\newcommand{\eGSCTwo}{0.326}
\newcommand{\eGDTwo}{$-$0.176}
\newcommand{\eGDCITwo}{[$-$0.192, $-$0.160]}
\newcommand{\eGRelTwo}{53.9}
\newcommand{\eGNoMomTwo}{0.433}

\newcommand{\eGNoSynTwo}{0.354}

\newcommand{\eGPlugTwo}{0.479}

\newcommand{\eGIndTwo}{0.158}

\newcommand{\eGEqTwo}{0.477}

\newcommand{\eGCTwo}{0.171}
\newcommand{\eGdCTwo}{$+$0.020}
\newcommand{\eGdCCITwo}{[$+$0.016, $+$0.024]}
\newcommand{\eGValTwo}{94.6}
\newcommand{\eGFWFour}{0.116}
\newcommand{\eGSCFour}{0.240}
\newcommand{\eGDFour}{$-$0.123}
\newcommand{\eGDCIFour}{[$-$0.140, $-$0.108]}
\newcommand{\eGRelFour}{51.4}
\newcommand{\eGNoMomFour}{0.433}

\newcommand{\eGNoSynFour}{0.309}

\newcommand{\eGPlugFour}{0.510}

\newcommand{\eGIndFour}{0.124}

\newcommand{\eGEqFour}{0.466}

\newcommand{\eGCFour}{0.131}
\newcommand{\eGdCFour}{$+$0.015}
\newcommand{\eGdCCIFour}{[$+$0.010, $+$0.019]}
\newcommand{\eGValFour}{95.7}
\newcommand{\eGXFWOne}{0.270}
\newcommand{\eGXSCOne}{0.765}
\newcommand{\eGXDOne}{$-$0.495}
\newcommand{\eGXDCIOne}{[$-$0.516, $-$0.473]}

\newcommand{\eGXNoMomOne}{0.781}

\newcommand{\eGXNoSynOne}{0.610}

\newcommand{\eGXPlugOne}{0.576}

\newcommand{\eGXIndOne}{0.314}

\newcommand{\eGXEqOne}{0.793}

\newcommand{\eGXCOne}{0.305}

\newcommand{\eGXValOne}{83.5}
\newcommand{\eGXFWTwo}{0.197}
\newcommand{\eGXSCTwo}{0.601}
\newcommand{\eGXDTwo}{$-$0.404}
\newcommand{\eGXDCITwo}{[$-$0.426, $-$0.383]}

\newcommand{\eGXNoMomTwo}{0.781}

\newcommand{\eGXNoSynTwo}{0.517}

\newcommand{\eGXPlugTwo}{0.577}

\newcommand{\eGXIndTwo}{0.222}

\newcommand{\eGXEqTwo}{0.775}

\newcommand{\eGXCTwo}{0.223}

\newcommand{\eGXValTwo}{88.9}
\newcommand{\eGXFWFour}{0.140}
\newcommand{\eGXSCFour}{0.415}
\newcommand{\eGXDFour}{$-$0.276}
\newcommand{\eGXDCIFour}{[$-$0.296, $-$0.257]}

\newcommand{\eGXNoMomFour}{0.781}

\newcommand{\eGXNoSynFour}{0.448}

\newcommand{\eGXPlugFour}{0.591}

\newcommand{\eGXIndFour}{0.162}

\newcommand{\eGXEqFour}{0.760}

\newcommand{\eGXCFour}{0.159}

\newcommand{\eGXValFour}{92.7}
\newcommand{\eGDPmax}{0.001}
\newcommand{\eGBind}{75.0}
\newcommand{\eGSpEightOne}{1.22}

\newcommand{\eGSpThirtyTwoOne}{2.33}

\newcommand{\eGSpSixtyOne}{2.70}
\newcommand{\eGSpSixtyCIOne}{[2.67, 2.73]}
\newcommand{\eGPrepMsOne}{52.4}
\newcommand{\eGReqMsOne}{4.1}
\newcommand{\eGNoReuseMsOne}{13.7}

\newcommand{\eGSpSixtyTwo}{2.72}

\newcommand{\eGSpSixtyFour}{2.78}

\newcommand{\eGIdentAll}{51,840}

\newcommand{\eGLaws}{48}
\newcommand{\eGRespLaws}{36}

\newcommand{\eGBudgeted}{180}
\newcommand{\eGReps}{8}

\makeatletter\newcommand{\rawinput}[1]{\input{#1} }\makeatother
\newtheorem{proposition}{Proposition}
\newtheorem{lemma}{Lemma}
\newtheorem{corollary}{Corollary}
\title{A General Framework for Budgeted\\Threshold Incentives on Request}
\author{Zhuolin Wu$^{1}$, Chengrui Zhu$^{1}$, Wenhua Nie$^{1,2}$, Kenny Ye Liang$^{1,3}$, \\
Junming Lin$^{1,4}$, Haiyang Li$^{1}$, Zhilin Li$^{1}$, Wenjia Geng$^{1}$, \\
Zeyu Wu$^{1}$, Yinan Wu$^{1}$, Jinghua Hao$^{1}$, Renqing He$^{1}$ \\[1ex]
{\normalfont $^{1}$Meituan} \\
{\normalfont $^{2}$National Taiwan University} \\
{\normalfont $^{3}$Tsinghua University} \\
{\normalfont $^{4}$Tongji University}}
\hypersetup{pdftitle={A General Framework for Budgeted Threshold Incentives on Request},pdfauthor={Zhuolin Wu, Chengrui Zhu, Wenhua Nie, Kenny Ye Liang, Junming Lin, Haiyang Li, Zhilin Li, Wenjia Geng, Zeyu Wu, Yinan Wu, Jinghua Hao, Renqing He}}
\begin{document}
\maketitle
\begin{abstract}
On-demand delivery platforms pay riders through incentive activities whose tiers are set from recent completions of riders with a similar history, so that a little extra effort earns a clearly stated reward. Operators request such plans for changing periods, rider populations, payment rules and budgets, days ahead and within minutes, often for holidays or bad weather where randomized trials are scarce and take months to collect. We present a request-driven framework that composes four stages---conditional prediction, population reduction, trajectory integration and budget allocation---through seven replaceable modules that exchange conditional trajectory laws, whose award probabilities and award-marked moments give payment and uplift for any activity rule. To shorten a long randomized campaign, a response-correction step reweights trajectories from abundant no-offer history to match the moments of a short pilot. We prove that, on a fixed plan menu and given the stage errors, the end-to-end value loss is bounded by the sum of four stage terms---synthetic data, moment matching, integration and decision---with constants that cannot be improved from the final tables, and that for every stage there are instances on which omitting it leaves an error floor the others cannot remove. On 3,000 riders over 45 weekly origins, re-drawing trajectories and solving exactly for every request takes \rReqBest\,s per week-long request against \rReqOurs\,s for the framework; including the one-off sampling pass, all 127 windows of a week are answered \rInclAll$\times$ faster with identical scenarios and at most \blDPmax\% value lost by the allocation on the audited week-long tables, a gain that comes entirely from reuse, while a point forecast, independent days or an equal budget split each lose accuracy. Cities held out of training keep their plan-cost error within a pre-set margin (change \hoSD\ \hoSDCI). On 24 new controlled response laws, with the plan, the optimum and validity all judged within 20\% of the budget, the response correction with a one-week pilot lowers regret by \pmRelRedOne\% relative to a trial with the same nominal randomized rider-weeks (\xRelRedOne\% in the stricter pre-specified 10\% band), and on these week-long requests with exact summation a four-week pilot, with \eWeeksRatio\ times fewer randomized rider-weeks, comes within \pmFWmLFour\ \pmFWmLCIFour\ of an \eWeeksLong-week trial. In the registered primary conditions of two studies where windows, populations, rules and binding budgets change from request to request, the framework's regret is below that of a trial with the same nominal rider-weeks and below that of dose interpolation of the same pilot data at every pilot length, and reusing its one-off preparation answers 60 requests \eFSpSixtyOne$\times$ and \eGSpSixtyOne$\times$ faster than re-running the pipeline for each, with identical answers. When the paid window or a selected subpopulation changes behaviour, a window-aware correction lowers one-week regret by 0.195 against direct reuse. Against a nine-offer trial fitted with the framework's own dose curve, one-week regret is 0.055 lower; 73.5\% of the gain over per-offer fits of that trial comes from sharing the tilt across offers. Public retail data and three payment rules confirm the transfer with identical exact outputs.

\end{abstract}
\section{Introduction}
\label{sec:intro}
On-demand meal delivery depends on the people who carry each order from a restaurant to a customer. Large platforms obtain much of this capacity from crowdsourced couriers, usually called \emph{riders}, who decide for themselves when to log in and how many orders to accept \citep{savelsbergh2022crowdsourced,savelsbergh2024update}. Demand, by contrast, peaks at meal times, in bad weather and on holidays. When too few riders are active, orders go unaccepted and are cancelled; one large platform reports several hundred thousand such cancellations a day \citep{wu2022mealbonus}. Platforms therefore add budgeted \emph{incentive activities} to per-order pay. A typical activity reads ``complete 40 deliveries this week for an extra reward; complete 55 for more'', possibly with a per-completion rate that steps up at each tier or a condition on the number of active days. The tiers are set from recent completions (here, quantiles of the weekly volume of the rider's layer), and this is what can make the activity effective. Threshold incentives change behaviour only while the threshold is still ahead \citep{liu2023thresholdlabor}, and effort is shaped by income targets \citep{chen2022targeteffects,allon2023gig}: a tier the rider would reach anyway pays for no extra work, and a tier far out of reach is ignored. A well-placed tier sits just above the rider's usual volume, so that a little extra effort, freely chosen, earns a clearly stated reward, with the aim of serving customers faster when service is scarce. The planning task is to place every tier at that point, for thousands of riders, within a fixed budget.

Operators express this task as a request. They name a rider population, an activity window (any subset of the coming days), an activity rule and a budget band, and they need a plan days before the window opens, within minutes. A week of $n$ days has $2^n$ windows, so one model per window is impossible and forecasts must be composable; payments are step functions of completions, attendance and thresholds, and rules differ between activities; and the campaigns that matter most fall on holidays and days of severe weather, where history is thinnest. Placing tiers also requires the response of completions to each candidate offer, and the clean way to measure it, a randomized controlled trial (RCT), is slow. An offer's effect is observed only after its window has closed and settled, and each group--offer cell needs many rider-weeks: nine offers for five rider groups, with 1,024 randomized observations per cell at 512 riders per group and week, occupy \eWeeksLong\ weeks, longer than the campaign the trial should inform. Experimental data are scarce while no-offer history is abundant, a tension well known in causal inference \citep{kallus2018removing,athey2020selection,athey2025surrogate}.

We present a request-driven framework built from four stages and seven replaceable modules (Figure~\ref{fig:pipeline}). \emph{Prediction} (M1--M3) turns pre-offer histories into per-rider, per-day completion laws from which any window is composed. \emph{Reduction} (M4) groups riders into a few activity layers, each receiving one plan, which reduces a portfolio search over individuals to a small multiple-choice problem. \emph{Integration} (M5--M6) corrects the no-offer law for the offered incentive and compiles joint trajectories into award probabilities: every rule, however complex, reduces to probabilities of nested award events and marked moments, from which expected payment and uplift follow (Section~\ref{sec:accounting}). \emph{Allocation} (M7) selects one plan per layer inside the budget band and returns executable rewards. Every element of the request is a hyperparameter of a downstream stage, so a new period, population, rule or budget reuses every upstream output whose inputs are unchanged. The correction in M5 is designed to shorten the long trial: abundant no-offer history supplies the shape of each group's trajectory law, a pilot of one to four weeks at three offers supplies a few offer-specific moments, and history reweighted to match those moments stands in for every offer.
\paragraph{Contributions.}
(i) A general framework whose four stages exchange fixed intermediates---per rider-day laws, activity layers, award probabilities and per-layer plan menus---so that a new period, population, activity rule or budget is answered by changing hyperparameters while the response law it relies on stays valid, and each stage can be replaced and priced on its own (Sections~\ref{sec:problem}--\ref{sec:accounting}).
(ii) A history-anchored response correction from a short pilot, tested on controlled laws against a trial with the same nominal randomized rider-weeks and an \eWeeksLong-week trial, and in two registered studies in which windows, populations, rules and binding budgets change from request to request, one with a new rider-level generator whose response depends on the rule, and against stronger controls, including the trial fitted with the framework's own dose curve, and behaviour-changing windows and subpopulations (Section~\ref{sec:synthetic}).
(iii) A four-term loss decomposition---synthetic data, moment matching, integration and decision---with a conditional end-to-end perturbation bound on a fixed plan menu, constants that cannot be improved from the final tables, lower bounds attained by the statistical chain itself, and, for each stage, instances on which omitting it leaves an error floor that the other stages cannot remove; with the ablations, this shows that every stage of the combination is needed (Section~\ref{sec:theory}).
(iv) An evaluation against the practices a platform would run instead, reporting the accuracy given up, the time saved and the accuracy on cities left out of training (Sections~\ref{sec:rider}--\ref{sec:generality-evidence}).

\begin{figure}[t]
\centering
\includegraphics[width=\linewidth]{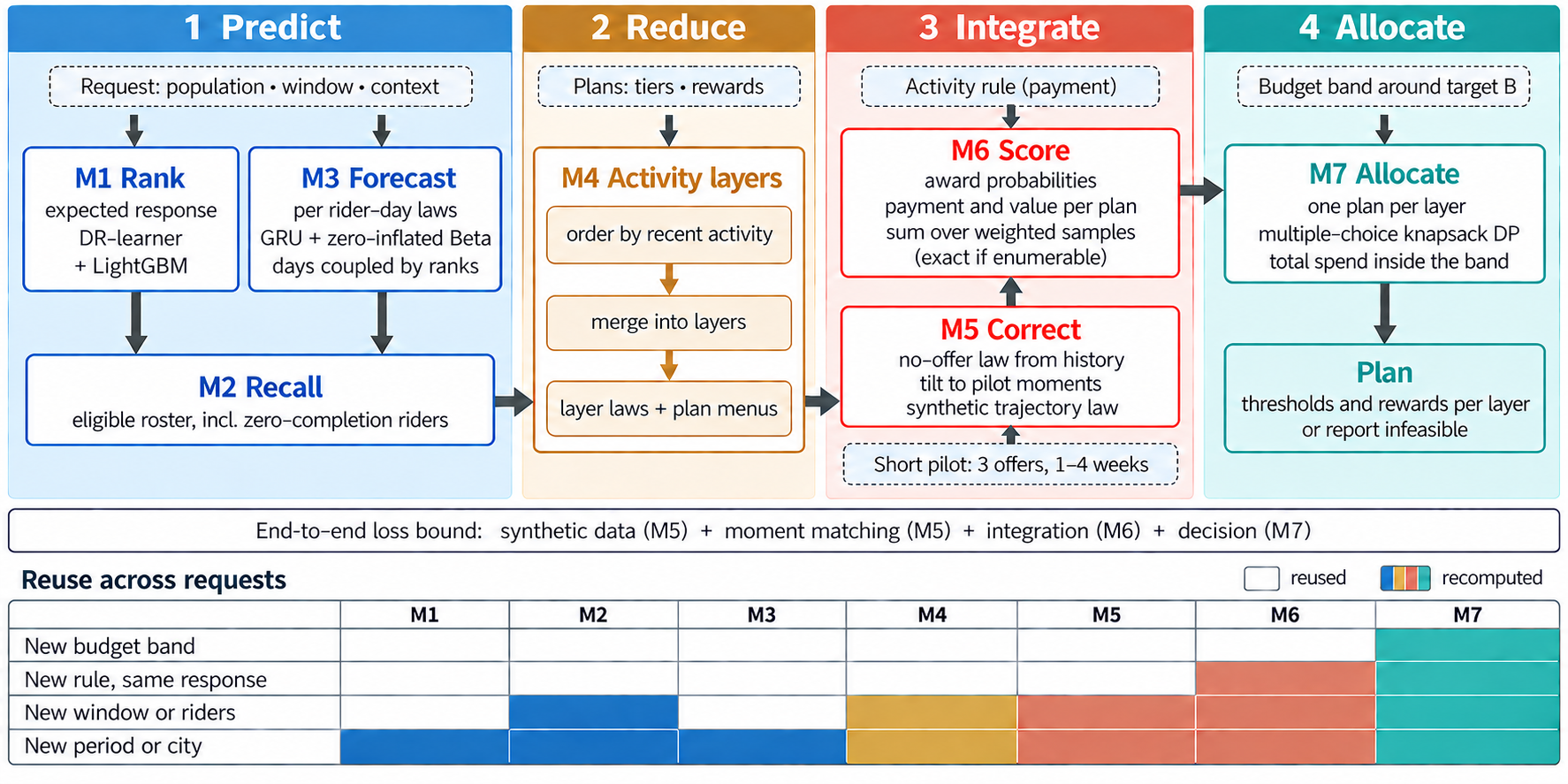}
\caption{Request-driven framework: four stages and seven modules. Dashed chips are inputs (request fields, candidate plans, activity rule, budget band, short pilot); arrows show data flow between modules. \emph{Predict} (M1--M3) gives every eligible rider a conditional completion law; \emph{Reduce} (M4) merges riders into activity layers with plan menus; \emph{Integrate} (M5--M6) tilts the no-offer law to short-pilot moments and scores every plan by a weighted sum over stored samples of the corrected law, an exact sum when the trajectory types can be enumerated; \emph{Allocate} (M7) chooses one plan per layer inside the two-sided budget band and returns the rewards that attain it. The bar names the four terms of the end-to-end bound (Proposition~\ref{prop:four-term}); the table marks what a new request recomputes.}
\label{fig:pipeline}
\end{figure}

\section{Requests and the four-stage framework}
\label{sec:problem}\label{sec:generality}\label{sec:pipeline}
\paragraph{Request.}
At issue time $t_0$ a request $r=(\mathcal I,W,x,\Gamma,\mathcal B)$ names an eligible population $\mathcal I$, an activity window $W$ (a subset of the forecast days), context $x$ (meal periods, weather, supply and demand), an activity rule $\Gamma$ and a budget set $\mathcal B$. A partition $\mathcal G=\{G_1,\ldots,G_L\}$ of $\mathcal I$ assigns plan $a_l\in\mathcal A_l$ to group $l$; a plan specifies ordered thresholds, rewards or rates, eligible orders and attendance conditions. Let $Z_i(a)$ be rider $i$'s activity trajectory in $W$ under plan $a$, $g_i(Z_i(a),a)$ the payment defined by the rule, and $V_i(a)$ an outcome such as completed deliveries. With $H_{t_0}$ the pre-offer history,
\begin{equation}
 c_l(a)=\sum_{i\in G_l}\E[g_i(Z_i(a),a)\mid H_{t_0},x],\qquad
 u_l(a)=\sum_{i\in G_l}\E[V_i(a)-V_i(0)\mid H_{t_0},x],
 \label{eq:cost-uplift}
\end{equation}
where plan $0$ is the no-offer baseline. The planner solves the multiple-choice problem
\begin{equation}
 \max_{a_l\in\mathcal A_l}\ \sum_l\widehat v_l(a_l)
 \quad\text{subject to}\quad \sum_l\widehat c_l(a_l)\in\mathcal B,
 \label{eq:allocation}
\end{equation}
with $v_l=u_l$ for pure uplift, or $v_l$ adding a monetary score for the probability that group spend stays within $\pm20\%$ of its prediction. The band is part of the request: operations tolerate $\mathcal B=[0.8B,1.2B]$, a stricter $[0.9B,1.1B]$ keeps spend closer to the budget, and an upper cap $\mathcal B=[0,B]$ is a special case.

\paragraph{Why four stages.}
Each stage removes one source of hardness. Per rider-day laws remove the $2^n$ windows: one forecast composes any window. Layers remove the individual search: $L$ layers with $|\mathcal A_l|$ plans each leave $\prod_l|\mathcal A_l|$ portfolios instead of one plan per rider. Award probabilities remove the dependence on the rule: every rule is integrated from the same trajectories. Budget allocation removes the coupling between layers: a single program over the band replaces a search over portfolios. The price of each removal is an approximation error. Section~\ref{sec:theory} shows that the errors of the estimation and allocation stages add at the allocation interface on a fixed menu; the price of reduction, restricting riders to layer menus, is measured rather than bounded.

\paragraph{Four stages, seven modules.}
The framework maps a request through fixed intermediates (Algorithm~\ref{alg:request} in Appendix~\ref{app:accounting} lists the steps),
\begin{equation}
 (H_{t_0},r)\ \xrightarrow{\ \text{M1--M3}\ }\ \widehat P_{i,d}\ \xrightarrow{\ \text{M4}\ }\ \mathcal G,\{\mathcal A_l\}
 \ \xrightarrow{\ \text{M5--M6}\ }\ \{\widehat c_l(a),\widehat v_l(a)\}\ \xrightarrow{\ \text{M7}\ }\ \widehat a .
 \label{eq:generic-interface}
\end{equation}
\emph{Prediction.} M1 ranks riders by expected response, M2 selects the eligible roster and M3 forecasts a completion distribution for every rider and day. Day-level laws are coupled across days by reordering each rider's forecast days according to the ranks observed in the rider's own history, so that any window $W$ is composed from one forecast pass. \emph{Reduction.} M4 orders riders by recent activity, merges them into contiguous activity layers and shrinks each menu only in ways that cannot remove a feasible optimum: under a sole cap it drops a plan when another costs no more and is worth no less; under a band with a floor only when another plan of the same cost is worth no less, since a cheaper plan may fall below the floor (Lemma~\ref{lem:admissible-reduction}). \emph{Integration.} M5 turns the no-offer law of each layer into an offer-conditioned trajectory law (below), and M6 integrates the rule over it as a weighted sum over stored trajectories, which is the exact sum when the trajectory types can be enumerated,
\begin{equation}
 (\widehat c_l(a),\widehat v_l(a))=
 \sum_{s}\bar w_s\bigl(g_l(Z_l^{(s)}(a),a),h_l(Z_l^{(s)}(a),a)\bigr),
 \quad \bar w_s\ge0,\quad\sum_s\bar w_s=1,
 \label{eq:generic-score}
\end{equation}
with $\bar w_s$ the law's probability of type $s$, or the normalized law weight of stored joint sample $s$ when types cannot be enumerated.
\emph{Allocation.} M7 solves Eq.~\eqref{eq:allocation} by exact enumeration or by a dynamic program over a budget grid, a multiple-choice knapsack \citep{sinha1979multiple}; along affine reward paths the program runs over piecewise-affine score segments and returns the same objective and reward backpointers as dense exact allocation (Appendix~\ref{app:segment-allocation}). Every returned plan carries rewards that attain its reported payment.

\paragraph{Synthetic trajectories from a short pilot.}
Let $p^0_l(z)$ be the no-offer law of layer $l$ over trajectory types $z$, estimated from history, and $\phi(z)$ low-dimensional trajectory features (normalized completions and active days). For each piloted offer $a$ the correction is the minimum relative-entropy law that reproduces the pilot moments $\bar\phi_{l,a}$,
\begin{equation}
 q_{l,a}(z)\ \propto\ p^0_l(z)\exp\bigl(\lambda_{l,a}^\top\phi(z)\bigr),\qquad \E_{q_{l,a}}[\phi]=\bar\phi_{l,a},
 \label{eq:tilt}
\end{equation}
an exponential tilt solved by Newton's method on the convex dual \citep{csiszar1975idivergence}. Offers between pilot points interpolate $\lambda$ piecewise linearly in the reward increment, with $\lambda=0$ at the no-offer plan. Integration then sums the rule over $q_{l,a}$ by Eq.~\eqref{eq:generic-score}, with no redraw; redrawing $S$ trajectories from $q_{l,a}$ only adds Monte Carlo error and is an ablation. History fixes the shape of the law and the pilot fixes how the offer moves it, so a pilot of one to four weeks at three offers is meant to stand in for a trial over every offer (Section~\ref{sec:synthetic} measures how far it does).

\paragraph{Generality.}
A new window selects other day coordinates of the same forecast; a new population re-runs M2--M4 on stored laws; a new budget re-runs M7 only; a new rule changes the payment function in M6 and, when it changes behaviour, the pilot moments in M5. Reusing stored laws is exact for computation; reusing a response correction assumes the response is unchanged, so a new city, season or rule that may change behaviour calls for its own pilot. Any activity whose payment is a function of the trajectory enters through Eq.~\eqref{eq:generic-score}, so fixed prizes, per-completion rates, attendance awards and their combinations share one pipeline. Group-additive allocation assumes no response interference between groups; joint chance constraints would need an additional state in M7.

\section{Award probabilities and the four-term loss decomposition}
\label{sec:accounting}\label{sec:theory}
\paragraph{One intermediate for every rule.}
For one rider and plan, let $E_1\supseteq\cdots\supseteq E_J$ be nested award events, including attendance and eligibility conditions, with award probabilities $p_j=\Pr(E_j\mid a,H_{t_0},x)$ and $p_{J+1}=0$. If only the highest achieved fixed prize $R_j$ is paid,
\begin{equation}
 \E[g]=\sum_{j=1}^J R_j(p_j-p_{j+1}); \label{eq:fixed-payment}
\end{equation}
if the activity pays rate $\rho_j$ on every eligible completion at the highest achieved tier,
\begin{equation}
 \E[g]=\sum_{j=1}^J(\rho_j-\rho_{j-1})\E[Y\mathbf1\{E_j\}],\qquad \rho_0=0, \label{eq:rate-payment}
\end{equation}
with $Y$ the eligible completions. Award probabilities and award-marked moments therefore carry everything a rule needs; the rule itself selects which of them are integrated. Expected spend is additive over riders without any independence assumption, whereas its variance and the probability of staying in a band depend on the joint law, which is why M6 integrates joint trajectories rather than marginals: rules with equal expected payment and award probability can differ in budget risk (Appendix~\ref{app:accounting}).

\paragraph{Four-term loss decomposition.}
For every layer $l$ and plan $a$ the framework passes through a chain of trajectory laws: the true offer law $P$; the best synthetic law $P^{\mathrm{syn}}$, which applies Eq.~\eqref{eq:tilt} to the true no-offer law with population moments at the piloted offers and the framework's own interpolation of $\lambda$ elsewhere, so that the interpolation bias at unpiloted offers belongs to this term and does not shrink with the pilot; the moment-matched law $P^{\mathrm{mm}}$, which uses the estimated no-offer law and the pilot moments; and the integrated law $P^{\mathrm{int}}$, which is $P^{\mathrm{mm}}$ under exact summation and a finite-sample law otherwise (weighted stored samples, or $S$ redrawn trajectories in the sampled ablation). Law $k$ induces tables $(c^{(k)}_l,v^{(k)}_l)$, and allocation finally works with tables $(c^{\mathrm{dec}},v^{\mathrm{dec}})$ on its budget grid, which equal $(c^{\mathrm{int}},v^{\mathrm{int}})$ under exact enumeration. For $k\in\{\mathrm{syn},\mathrm{mm},\mathrm{int},\mathrm{dec}\}$ with predecessor $k^-$ in this chain, let
$\varepsilon^{c}_{k}=\sum_l\max_{a\in\mathcal A_l}|c^{(k)}_l(a)-c^{(k^-)}_l(a)|$, define $\varepsilon^{v}_{k}$ in the same way, and put $E_c=\sum_k\varepsilon^c_k$ and $E_v=\sum_k\varepsilon^v_k$. With $\mathcal B_\delta=[\underline b+\delta,\bar b-\delta]$, let $V^\star(\delta)$ be the best true value among portfolios whose true expected spend lies in $\mathcal B_\delta$, and $\omega(\delta)=V^\star(0)-V^\star(\delta)$ the value carried by the outer margin $\delta$ of the band.

\begin{proposition}[Four-term decomposition on a fixed menu]\label{prop:four-term}\label{prop:upper}\label{prop:lower}
Suppose some portfolio has true expected spend in $\mathcal B_{2E_c}$, and the planner returns $\widehat a$ with $c^{\mathrm{dec}}(\widehat a)\in\mathcal B_{E_c}$ and $v^{\mathrm{dec}}(\widehat a)\ge\max\{v^{\mathrm{dec}}(a):c^{\mathrm{dec}}(a)\in\mathcal B_{E_c}\}-\eta$.
(i) \emph{Upper bound.} The true expected spend of $\widehat a$ lies in $\mathcal B$, and
\begin{equation}
 V^\star(0)-v(\widehat a)\ \le\ \omega(2E_c)+2\bigl(\varepsilon^v_{\mathrm{syn}}+\varepsilon^v_{\mathrm{mm}}+\varepsilon^v_{\mathrm{int}}+\varepsilon^v_{\mathrm{dec}}\bigr)+\eta .
 \label{eq:four-term}
\end{equation}
On a finite menu $V^\star$ is a step function, so the relevant control is a discrete margin: $\omega(2E_c)=0$ when an optimal portfolio has true spend in $\mathcal B_{2E_c}$, and if $\omega(\delta)\le\kappa\delta$ on $[0,2E_c]$ the right side is at most $\sum_k2(\kappa\varepsilon^c_k+\varepsilon^v_k)+\eta$.
(ii) \emph{Lower bound for tables.} For any nonnegative stage errors there are table instances on a fixed menu with exactly these errors on which Eq.~\eqref{eq:four-term} holds with equality and on which every deterministic planner that sees the final tables and keeps the true spend in $\mathcal B$ for all truths consistent with them loses at least $\omega(2E_c)+2E_v$. Without the tightening, the true spend can leave $\mathcal B$ by $E_c$.
(iii) \emph{Error floors of omitted stages.} There are instances on which, with the other stages exact, omitting one stage leaves an error that their accuracy cannot reduce: without the pilot correction the entire uplift is invisible; two matched moments without the history shape leave the probability of reaching a threshold $t$ standard deviations above the mean undetermined within $1/(1+t^2)$; plug-in means instead of trajectory integration misprice a threshold prize by half; and an equal budget split can forfeit the entire value.
\end{proposition}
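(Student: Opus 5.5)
The plan is to treat the chain $P\to P^{\mathrm{syn}}\to P^{\mathrm{mm}}\to P^{\mathrm{int}}\to\mathrm{dec}$ purely through its tables, and to use the triangle inequality along the chain. Summing the per-stage sup-norm errors gives, for every portfolio $a$,
\[
|c^{\mathrm{dec}}(a)-c(a)|\le E_c \quad\text{and}\quad |v^{\mathrm{dec}}(a)-v(a)|\le E_v .
\]
Each $\varepsilon_k$ is a sum over layers of a per-layer maximum, and both cost and value are additive over layers. Everything in (i) and (ii) is then a statement about two table pairs at sup-distance $(E_c,E_v)$; the four stage terms enter only through their sum.

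\textbf{Part (i).} Feasibility is immediate. From $c^{\mathrm{dec}}(\widehat a)\in\mathcal B_{E_c}$ and the cost deviation bound, $c(\widehat a)\in\mathcal B$. For optimality, let $a^\dagger$ attain $V^\star(2E_c)$, so $c(a^\dagger)\in\mathcal B_{2E_c}$; this set is nonempty by hypothesis. Then $c^{\mathrm{dec}}(a^\dagger)\in\mathcal B_{E_c}$, so $a^\dagger$ is a competitor for the planner, and
\[
v(\widehat a)\ge v^{\mathrm{dec}}(\widehat a)-E_v\ge v^{\mathrm{dec}}(a^\dagger)-\eta-E_v\ge v(a^\dagger)-\eta-2E_v=V^\star(2E_c)-\eta-2E_v .
\]
Adding $\omega(2E_c)=V^\star(0)-V^\star(2E_c)$ gives Eq.~\eqref{eq:four-term}. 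The step-function remark follows directly from the definitions of $V^\star$ and $\omega$. The linear-margin corollary substitutes $2E_c=\sum_k2\varepsilon^c_k$.

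\textbf{Part (ii).} Use a single layer, so the menu is the portfolio set, and build the true tables and each intermediate table explicitly. Take a menu with three plans:
\begin{itemize}
\item $a_0$ with true spend at the upper end $\bar b$ and value $V^\star(0)$;
\item $a_1$ with true spend exactly at $\bar b-2E_c$ and value $V^\star(0)-\omega$;
\item a decoy $a_2$.
\end{itemize}
Shift each intermediate table by exactly $\varepsilon^c_k,\varepsilon^v_k$ in chosen directions, so the stage errors are attained with equality. Arrange the shifts so that in the final tables $a_1$ appears at cost $\bar b-E_c$ with value lowered by $E_v$. Give the decoy true value $V^\star(0)-\omega-2E_v$ and displayed value $V^\star(0)-\omega-E_v$, so it ties $a_1$ on the final tables and a maximizing planner may select it; then Eq.~\eqref{eq:four-term} holds with equality, with $\eta=0$.

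For the planner-independent bound, use the standard two-truth adversary argument. Any plan whose displayed cost lies within $E_c$ of the band edge is consistent with a truth outside $\mathcal B$, so a planner that is safe for all consistent truths must restrict to displayed costs in $\mathcal B_{E_c}$. Among those plans, for each choice exhibit a consistent truth with value $E_v$ below the display, while the best true-feasible plan sits $E_v$ above its display. This gives loss $\ge\omega(2E_c)+2E_v$. The final sentence uses the same instance without tightening: the planner picks $a_0$ displayed at $\bar b$ when its truth is $\bar b+E_c$.

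\textbf{Part (iii).} Each claim gets a small explicit instance.
\begin{itemize}
\item \emph{No pilot.} Set $\lambda=0$, so $q=p^0$ for every offer and the predicted uplift is identically $0$, whatever the true uplift.
\item \emph{Moments without history shape.} Use the one-sided Chebyshev (Cantelli) bound and its extremal two-point laws. With mean and variance matched, $\Pr(X\ge\mu+t\sigma)$ can be $0$, by a tight law, or $1/(1+t^2)$, by the Cantelli two-point law. So the threshold probability is undetermined within $1/(1+t^2)$.
\item \emph{Plug-in means.} Take completions equal to $\tau\pm\Delta$ with probability $1/2$ each, and a prize $R$ at threshold $\tau$. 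The true expected payment is $R/2$. The plug-in mean sits exactly at $\tau$ and pays $R$; with a threshold just above the mean it pays $0$. Either way the error is $R/2$.
\item \emph{Equal split.} Use two layers, where only one has a plan with positive value, and that plan costs more than $B/2$. An equal split makes it infeasible and forfeits the whole value.
\end{itemize}

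The main obstacle is the planner-independent half of (ii). It requires making precise ``truths consistent with the final tables'' (all tables within the stage errors) and arranging the adversarial truths so that they are simultaneously consistent and hit both the $\omega$ and $2E_v$ terms. I would handle it by letting every plan's truth range over the box $[c^{\mathrm{dec}}\pm E_c]\times[v^{\mathrm{dec}}\pm E_v]$ and choosing the menu values so that the worst case is attained at a corner.
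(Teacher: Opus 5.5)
Your part (i) is the paper's proof step for step. Your part (iii) uses essentially the paper's instances. For the missing pilot, the paper goes one step beyond your table-level remark: it takes two offers with identical payment functions and identical uncorrected tables, so by relabelling every deterministic planner forfeits the whole uplift $U$ as regret.

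Part (ii) takes a different route. The paper uses two layers that keep the two mechanisms apart.
\begin{itemize}
\item Layer~1 has an exact plan $s$ of value $0$ at the band midpoint, and a plan $h$ with true cost $\bar b-E_c$ and value $\Delta$, displayed at cost $\bar b$.
\item Layer~2 has two zero-cost plans with true values $0$ and $2E_v$, both displayed at $E_v$.
\end{itemize}
The cost adversary ($c(h)=\bar b+E_c$) then touches only $h$, and the value adversary touches only layer~2. Both $V^\star(0)=\Delta+2E_v$ and $\omega(2E_c)=\Delta$ are the same under every truth used.

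Your single layer makes the consistent truths a simple box per plan. However, the corner argument only closes once you fix what you left open.
\begin{itemize}
\item $a_0$ must be displayed exactly at cost $\bar b$, so that it lies outside $\mathcal B_{E_c}$ yet is consistent with a truth at $\bar b+E_c$.
\item The decoy's true cost must lie in $\mathcal B_{2E_c}$. Otherwise, in the truth that swaps the values of $a_1$ and $a_2$, one has $V^\star(2E_c)=V^\star(0)-\omega-2E_v$, and the loss $\omega+2E_v$ falls short of that truth's $\omega(2E_c)+2E_v$.
\item The band must be wider than $4E_c$.
\item Equality for $\eta>0$ needs the decoy lowered by a further $\eta$. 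The paper instead adds a layer with plans of value $0$ and $\eta$.
\end{itemize}
Your sentence that ``the best true-feasible plan sits $E_v$ above its display'' does not describe your instance, where $V^\star(0)$ is attained by the exact plan $a_0$. What the argument actually uses is that every plan displayed in $\mathcal B_{E_c}$ shows value at most $V^\star(0)-\omega-E_v$.

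There is one genuine gap: your construction fails at $E_c=0<E_v$, which ``any nonnegative stage errors'' includes. There $\omega(0)=0$ forces $a_0$ and $a_1$ to share the true value $V^\star(0)$. Then $a_0$ is exact, its displayed cost $\bar b$ lies in $\mathcal B=\mathcal B_{E_c}$, and it has the largest displayed value.
\begin{itemize}
\item The planner of (i) picks $a_0$ and loses nothing, so equality fails.
\item Any planner that picks $a_0$ loses at most $E_v$ under every consistent truth, since $a_0$'s true value is within $E_v$ of its display and the other plans are displayed $E_v$ lower. So the lower bound $2E_v$ fails as well.
\end{itemize}
The fix is the paper's Step~5: when $E_c=0$, drop the edge plan and keep only the value-swap pair, which attains $2E_v$.
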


The proposition is a conditional perturbation bound on true expected spend, given the stage errors and the tightened band. It is relative to the layer menus: the value given up by restricting individuals to layer menus enters only through $V^\star$, and on rider records alternative layerings are compared directly (Table~\ref{tab:ledger}). Part~(ii) shows that no planner seeing only the final tables can improve the constants. For the chain of laws itself, the I-projection attains the constant of the synthetic term exactly, and the pilot and sampling terms are attained in order but not in constant (Proposition~\ref{prop:chain-lower}).

\paragraph{Why the combination is accurate and fast.}
Costs and pure-uplift values are expectations, hence linear in the trajectory law, so errors of the four stages add at the allocation interface instead of compounding: a value error costs at most twice its size and a cost error at most $2\kappa$ times its size under the growth control $\omega(\delta)\le\kappa\delta$ (Eq.~\eqref{eq:four-term}). Each term is driven by a different resource and shrinks independently: the synthetic term by the part of the response that the matched features do not explain, the moment term as $n^{-1/2}$ in pilot size for a given base (an estimated base adds the term of Lemma~\ref{lem:joint-base}), the integration term as $S^{-1/2}$ and to zero under exact summation, and the decision term by one grid step per layer with upward rounding, half a step with nearest rounding (Appendix~\ref{app:loss-proofs}). Corollary~\ref{cor:rates} states the rates in expectation: they hold for entries with a fixed integrand, such as uplift and payment, and, under the margin condition of Corollary~\ref{cor:rates}(b), for a stability score centred at the estimated mean. The stage errors belong to the compiled tables, not to a request, so every budget and weight answered from the same tables inherits the same bound; this is why paying for the upstream stages once gives large speed-ups at no additional loss. On a given instance errors may partly cancel, so the ablations below measure what each stage buys in practice.

\section{Rider records: accuracy given up and time gained}
\label{sec:rider}
\paragraph{Data and protocol.}
The rider panel holds daily completed deliveries of 3,000 riders over 595 days in 2025--2026 across 17 separately modelled cities and one pooled node. We replay the framework at 45 weekly origins with three sampling seeds: models are refit at each origin on data before it, and each seven-day window is planned before its outcomes are read. Plans combine three tier shapes, six active-day requirements and six reward levels, 108 plans per layer, so five activity layers give $108^5\approx1.5\times10^{10}$ portfolios; budgets are upper caps at $\beta=0.5$, 1.0 and 1.5 times the realized cost of a reference plan, so they are set in hindsight. Because every plan is scored against the completions riders actually recorded, the replay measures what a plan must get right before any behavioural response: who attains which tier and what the plan costs. A plan's value is the number of deliveries completed by riders who reach an award, its cost the reward paid; plan-cost error is the median absolute relative error between predicted and realized plan cost over plans with nonzero realized cost (the \dfZeroPlans\ of \dfPlansTotal\ plans that pay nothing are excluded), and regret is measured against the exact optimum on the realized records. Recorded completions do not respond to a plan, so value does not depend on the reward level, and the budget binds in hindsight only in the Spring Festival week (\dfBindHalf\ of \dfUnits\ units at $\beta=0.5$). The replay therefore tests pricing and computation (M3: gradient-boosted quantiles with rider shares, Appendix~\ref{app:rider}); Section~\ref{sec:synthetic} tests the choice of reward level and allocation under a binding budget. Intervals resample origins (Appendix~\ref{app:rider}).

\paragraph{What the framework is compared with.}
We compare with the most accurate practices a platform would otherwise run, each exact or unbiased for the step it replaces: re-drawing trajectories for the requested window and solving exactly, request by request; one exact plan per rider; a mixed-integer program per request \citep{sinha1979multiple} solved to optimality \citep{huangfu2018highs}; and a trial long enough to observe every offer. Table~\ref{tab:baselines} reports the accuracy given up and the time saved.

\begin{table}[t]
\caption{Accuracy given up and time gained against the most accurate way to answer the same request without the framework, on the same inputs. Speed-up: median of paired ratios (mixed-integer program: ratio of geometric-mean times); brackets: interquartile range or 95\% interval over laws.}
\label{tab:baselines}\vspace{3pt}
\centering\small\setlength{\tabcolsep}{3pt}\renewcommand{\arraystretch}{1.05}
\begin{tabular}{@{}>{\raggedright\arraybackslash}p{0.225\linewidth}>{\raggedright\arraybackslash}p{0.335\linewidth}>{\raggedright\arraybackslash}p{0.205\linewidth}>{\raggedright\arraybackslash}p{0.19\linewidth}@{}}
\toprule
Alternative per request & Accuracy the framework gives up & Time: alternative $\to$ framework & Speed-up\\
\midrule
\multicolumn{4}{@{}l}{\emph{Rider records: 3,000 riders, 45 weekly origins, five activity layers}}\\
Re-draw trajectories and solve exactly & none in the scenarios (\blBitwise/\blBitwiseOf\ requests identical); allocation loses value in \blDPnz/\blDPn\ instances, at most \blDPmax\% & \blReqAlt\,s $\to$ \blReqOurs\,s per week-long request & \blReqRatio$\times$ [\blReqRatioLo, \blReqRatioHi]; \blKAll$\times$ over 127 requests with the one-off pass\\
One plan per rider, exact & not measurable: the exact plan did not finish within 3\,hours and \blPerRiderGiB\,GiB of memory & unfinished $\to$ \blTierTable\,s per table & ---\\
\midrule
\multicolumn{4}{@{}l}{\emph{Controlled response laws: \blRctLaws\ laws $\times$ 8 replicates, 33 budgets each (Section~\ref{sec:synthetic})}}\\
Randomized trial over all offers & relative regret (planned and scored within 20\% of budget) \pmLong\ $\to$ \pmFWFour\ (\pmFWmLFour\ \pmFWmLCIFour) with a 4-week pilot; a trial with the pilot's rider-weeks: \pmSCOne\ with 1 week (framework \pmFWOne) and \pmSCFour\ with 4 & 9,216 $\to$ 2,048 nominal randomized rider-weeks per group (18 $\to$ 4 weeks) & 4.5$\times$ fewer\\
\midrule
\multicolumn{4}{@{}l}{\emph{Public retail (M5) and three payment rules (Section~\ref{sec:generality-evidence})}}\\
Re-draw and solve exactly (M5, 3,000 series) & none: identical scenarios, value lost in \blMfLossNz/\blMfLossN\ tables & \blMfAlt\,s $\to$ \blMfOurs\,s per request & \blMfRatio$\times$ [\blMfRatioLo, \blMfRatioHi]; \blMfKAll$\times$ over 127 with preparation\\
Mixed-integer program per request, model kept between requests (\blMipCond\ conditions) & none: the budget program is exact on \blMipDPExact/\blMipReq\ requests; the MIP on \blMipRawExact\ (raw) and \blMipEnvExact\ (envelope) & paired on conditions both solve exactly & 1, 9, 33 requests: \blMipRawOne, \blMipRawNine, \blMipRawTT$\times$ (envelope \blMipEnvOne, \blMipEnvNine, \blMipEnvTT$\times$)\\
\bottomrule
\end{tabular}
\end{table}

\paragraph{Near-lossless and much faster.}
Against re-drawing and solving exactly for every request, the framework gives up nothing in the trajectories and almost nothing in the allocation, while a week-long request takes \rReqOurs\,s instead of \rReqBest\,s. The \rReqOurs\,s follow a one-off sampling pass of \dfOffline\,s per origin, the only cold-start cost. Charging that pass to the framework, eight windows are answered \rInclEight$\times$ faster and all 127 windows of a week take \rOursAll\ instead of \rBaseAll\ minutes, \rInclAll$\times$ faster (Figure~\ref{fig:evidence}b; \rOriginsAll\ timed origins, fixed before the run); solving one plan per rider is out of reach, which is what population reduction buys. The gain is reuse of the sampling pass, and it follows from the interface: the per-day laws are fixed before a request arrives, so one forecast serves every request (Table~\ref{tab:time-attribution}).

\begin{table}[t]
\caption{Rider records, 45 weekly origins $\times$ 3 seeds, five activity layers. Each row replaces one stage of the full framework by a cheaper alternative. Ratios are of origin means with 95\% intervals over origins. Regret at budget fraction 0.5. $^{*}$45 origins $\times$ 3 budget fractions.}
\label{tab:ledger}
\centering\small\setlength{\tabcolsep}{2.5pt}
\begin{tabular}{llll}
\toprule
Stage & Full framework & Cheaper alternative & Accuracy change\\
\midrule
Prediction & per rider-day distribution & point forecast & plan-cost error $\times$\rPointRatio\ \rPointCI\\
Reduction & activity layers & equal-width buckets & regret \rRegFullHalf\% $\to$ \rRegEqwHalf\%\\
Integration & joint law across days & independent days & plan-cost error $\times$\rIndRatio\ \rIndCI\\
 & & simplified normal & $\times$\rNorRatio\ \rNorCI; 5--6-day award \rNorAwardDiff\,pp\\
 & & plug-in mean & $\times$\rPlgRatio\ \rPlgCI\\
Allocation & budget-grid program & equal budget split & value $-$\rEqLoss\% \rEqCI\\
 & & exact solver (reference) & loss 0 in \nmDPfbZero/\nmDPfbN$^{*}$; max \nmDPfbMax\%\\
\bottomrule
\end{tabular}
\end{table}

\paragraph{Every stage earns its place.}
Table~\ref{tab:ledger} replaces one stage at a time by the cheaper practice it displaces. Distributional prediction is the largest single gain: a point forecast raises plan-cost error from \rCostFull\ to \rCostPoint\ and median award-rate error from \rAwardFull\ to \rAwardPoint\ percentage points. Joint integration is next: composing days independently raises plan-cost error by \rIndRatio$\times$, and a simplified normal baseline defined here (window totals normal, tiers priced without the truncated-moment term) by \rNorRatio$\times$; its award rates for five- and six-day plans fall \nmNorAwardDiff\ points below those of the joint law, exactly where attendance persists across days. Equal-width buckets change the layer menus and raise regret against the same hindsight optimum from \rRegFullHalf\% to \rRegEqwHalf\%, and splitting the budget equally across layers gives up \rEqLoss\% of the value found by the budget program. On the realized five-layer records the grid program equals the exact optimum in \nmDPfbZero\ of \nmDPfbN\ main-grid instances (largest loss \nmDPfbMax\%; Appendix~\ref{app:segment-allocation}).

\paragraph{Cities left out of training.}
The replay refits every model weekly on all cities, as the framework does in operation. A held-out condition fixed before the runs removes the evaluated cities from training (Appendix~\ref{app:holdout}): cities removed in four folds (\hoSOrigins\ origins) and forecast from their riders' own history keep plan-cost error at \hoSHeld\ against \hoSRef\ for the reference that trains on them (paired change \hoSD\ \hoSDCI), within the pre-set margin of \hoMargin. On these riders every stage replacement again loses accuracy (Table~\ref{tab:holdout-stages}).
\section{Synthetic trajectories and a short pilot versus a long trial}
\label{sec:synthetic}
\paragraph{Design.}
Behavioural response cannot be read from logged completions, so we test the response-correction step M5 on controlled response laws where the truth is known; every other stage is fixed. We draw 24 new laws in four families of six (saturating, threshold and hump responses combined; saturating only; threshold only; none), each with five rider groups under three payment rules, nine offers per group and 32 weekly trajectory types; allocation enumerates all $9^5$ portfolios for 33 budgets, and every request covers the whole week and all five groups. Every replicate (eight per law) gives each method its own data: 4,096 no-offer weeks per group; a \emph{long RCT} of 1,024 randomized observations per group and offer, which at 512 riders per group and week takes \eWeeksLong\ weeks; a \emph{same-calendar RCT} that spreads the rider-weeks of a pilot evenly over all nine offers; and the \emph{framework}, which runs the pilot at three offers for one, two or four weeks, corrects history by Eq.~\eqref{eq:tilt} and, since the 32 types can be enumerated, sums the rule exactly over the corrected law, which removes the integration term (Lemma~\ref{lem:int}); drawing $S=2{,}048$ trajectories instead is an ablation. Samples are independent across indices, each with one weekly shock shared by all cells; weeks convert nominal rider-weeks at 512 riders per group. The endpoint is relative regret of pure uplift against the true optimum in the request's band; an abstention scores zero, and so does a plan whose true expected spend leaves the band. We report the operating band $[0.8B,1.2B]$, in which every method plans and is scored and the optimum is taken (chosen after the runs), and the pre-specified, stricter $[0.9B,1.1B]$ alongside, which the appendix tables of this design use. Laws are the unit of inference (\eLaws\ laws with a response; the \eNullLaws\ null laws check validity only), with 95\% intervals by bootstrap over laws. Arms were specified before the runs, after one disclosed development run whose exclusion changes no conclusion (Appendix~\ref{app:synthetic}).

\begin{table}[t]
\caption{Relative regret (lower is better), 18 response laws $\times$ 8 replicates $\times$ 33 budgets, pilot of 1/2/4 weeks; planning, optimum and scoring within 20\% of the budget or 10\% (pre-specified). Indented rows change one stage (others: Table~\ref{tab:e2-ablations}). Validity (\%): issued plans with true expected spend in the 10\% band, all 24 laws.}
\label{tab:synthetic}
\centering\small\setlength{\tabcolsep}{3.5pt}
\begin{tabular}{lcccccc}
\toprule
 & \multicolumn{3}{c}{Within 20\% of budget} & \multicolumn{3}{c}{Within 10\% (pre-specified)}\\
\cmidrule(lr){2-4}\cmidrule(l){5-7}
Method & 1 wk & 2 wk & 4 wk & 1 wk & 2 wk & 4 wk\\
\midrule
Long RCT, \eWeeksLong\ weeks (business alternative) & \multicolumn{3}{c}{\pmLong} & \multicolumn{3}{c}{\xLong}\\
Same-calendar RCT & \pmSCOne & \pmSCTwo & \pmSCFour & \xSCOne & \xSCTwo & \xSCFour\\
\textbf{Framework} (tilt, exact sum) & \textbf{\pmFWOne} & \textbf{\pmFWTwo} & \textbf{\pmFWFour} & \textbf{\xFWOne} & \textbf{\xFWTwo} & \textbf{\xFWFour}\\
\quad sampled integration ($S=2{,}048$) & \pmSampOne & \pmSampTwo & \pmSampFour & \xSampOne & \xSampTwo & \xSampFour\\
\quad without history (pilot frequencies) & \pmNoSynOne & \pmNoSynTwo & \pmNoSynFour & \xNoSynOne & \xNoSynTwo & \xNoSynFour\\
Validity (framework / same-calendar) & & & & \xValFWOne/\xValSCOne & \xValFWTwo/\xValSCTwo & \xValFWFour/\xValSCFour\\
\bottomrule
\end{tabular}
\end{table}

\paragraph{Results.}
Table~\ref{tab:synthetic} gives three findings; the pre-specified 10\% band is in parentheses and is the one plotted in Figure~\ref{fig:evidence}a (Appendix~\ref{app:synthetic}). First, for the same nominal randomized rider-weeks the framework is far better than randomizing the pilot over all offers: regret falls by \pmFWmSabsOne\ \pmFWmSCIOne\ with a one-week pilot, \pmRelRedOne\% lower (\xRelRedOne\%), by \pmFWmSabsTwo\ \pmFWmSCITwo\ with two weeks and by \pmFWmSabsFour\ \pmFWmSCIFour\ with four (at 10\%, intervals also below zero). Second, with exact summation on these week-long requests, a four-week pilot, with \eWeeksRatio\ times fewer randomized rider-weeks, comes within \pmFWmLFour\ \pmFWmLCIFour\ of the \eWeeksLong-week trial (\xFWmLFour\ \xFWmLCIFour); sampled integration, the pre-specified arm: Table~\ref{tab:prereg}. Third, each stage is needed against the replacement tested for it (10\% scoring): dropping history, moment matching or joint integration raises one-week regret by \xNoSynDOne, \xNoMomDOne\ and \xPlugDOne, and an equal budget split on the exact tables raises it by \exEqTenOne, each interval excluding zero; drawing 2,048 trajectories instead of summing exactly raises it by \pmSampDOne\ \pmSampDCIOne\ (\xSampDOne\ \xSampDCIOne).
 With the truth known, only the moment-matching term of Eq.~\eqref{eq:four-term} shrinks with the pilot, at the $\sqrt2$ rate per doubling of Corollary~\ref{cor:rates}(a) (Table~\ref{tab:loss-check}).

\begin{table}[t]
\caption{Two registered studies with changing requests and binding budgets (registered primary condition, 36 response laws each; relative regret within 20\% of the budget). A: fresh laws from the generator of Table~\ref{tab:synthetic}; B: a new rider-level generator with a rule-dependent response. Speed-up: 60 requests, one-off preparation charged, against re-running the pipeline per request.}
\label{tab:studies}
\centering\small\setlength{\tabcolsep}{3.2pt}
\begin{tabular}{lcccccc}
\toprule
 & \multicolumn{3}{c}{A: integrated study} & \multicolumn{3}{c}{B: new generator}\\
\cmidrule(lr){2-4}\cmidrule(l){5-7}
Pilot & 1 wk & 2 wk & 4 wk & 1 wk & 2 wk & 4 wk\\
\midrule
Same-calendar RCT & \eFSCOne & \eFSCTwo & \eFSCFour & \eGSCOne & \eGSCTwo & \eGSCFour\\
\textbf{Framework} & \textbf{\eFFWOne} & \textbf{\eFFWTwo} & \textbf{\eFFWFour} & \textbf{\eGFWOne} & \textbf{\eGFWTwo} & \textbf{\eGFWFour}\\
Dose interpolation (same information) & \eFCOne & \eFCTwo & \eFCFour & \eGCOne & \eGCTwo & \eGCFour\\
Independent days (least costly replacement) & \eFIndOne & \eFIndTwo & \eFIndFour & \eGIndOne & \eGIndTwo & \eGIndFour\\
Budgets that bind (\%, all 48 laws) & \multicolumn{3}{c}{\eFBind} & \multicolumn{3}{c}{\eGBind}\\
Speed-up, 60 requests & \eFSpSixtyOne$\times$ & \eFSpSixtyTwo$\times$ & \eFSpSixtyFour$\times$ & \eGSpSixtyOne$\times$ & \eGSpSixtyTwo$\times$ & \eGSpSixtyFour$\times$\\
\bottomrule
\end{tabular}
\end{table}

\paragraph{Changing requests, binding budgets and a new generator.}
The design above fixes the request. Two further registered studies let behavioural response, changing requests and binding budgets occur together (Table~\ref{tab:studies}; Appendices~\ref{app:integrated} and~\ref{app:structural}). Each replicate issues 60 requests with a random window among the 127 subsets of the week, a random population and rule, and budgets at 0.3, 0.5 and 0.7 times the cost of the request's unconstrained optimum; the framework prepares its laws once per origin, and the preparation is charged. Study A draws \eFLaws\ fresh laws from the generator above. Study B uses a rider-level generator unrelated to it: six groups of 16 latent rider types with day-of-week activity, a response that depends on the rule shown, and weeks that cannot be enumerated, so every request is integrated as a weighted sum over 4,096 stored weeks per group. In the registered primary condition of both studies, both registered hypotheses hold (Table~\ref{tab:prereg}): regret is below the same-calendar trial's at every pilot length (by \eFDOne\ \eFDCIOne\ and \eGDOne\ \eGDCIOne\ with one week), and reusing the prepared laws for 60 requests is faster than re-running the pipeline for each, with identical answers. Every stage replacement loses accuracy, and dose interpolation of the same history and pilot data, a same-information control, has higher regret at every pilot length (\eFdCOne\ \eFdCCIOne\ and \eGdCOne\ \eGdCCIOne\ with one week). In three more registered studies all hypotheses hold (Appendix~\ref{app:strong}): at matched cache the framework beats a history-anchored tilt of the trial fitted per offer and a normal law with exact truncated moments, it beats the trial under shared calendar-week shocks, and request-aware reuse beats direct reuse when windows or subpopulations change behaviour. The third fits the trial jointly with the framework's own dose curve (Table~\ref{tab:joint}): at the same parameterization the framework's regret is lower by 0.055 [0.051, 0.059] at one week (0.052 and 0.060 at two and four), and of its one-week advantage over the per-offer tilt, sharing the tilt across offers accounts for 73.5\% [72.1, 74.8] and placing the pilot at three offers for 26.5\% (68.3\% at four weeks). In a fourth, five history layers retain 94.6\% [94.4, 94.8] of an individual-level optimum, against 82.5\% for one plan per group.

\section{Transfer across domains and rules}
\label{sec:generality-evidence}\label{sec:evidence}
The same modules serve other domains and rules by changing inputs and hyperparameters only; both sides of every comparison get the same laws, menus and budget bands, and every output is checked against exact allocation (Appendix~\ref{app:matched-computation}). M5 sales \citep{makridakis2022m5accuracy,makridakis2022m5uncertainty} follow the same rule as a supplier's tiered rebate to a store. Designed on riders and run unchanged on 3,000 item--store series, law reuse cuts the median request from \blMfAlt\,s to \blMfOurs\,s (paired \blMfRatio$\times$) with identical scenarios and identical exact values on all 405 tables; charging the 80.63\,s preparation to the framework, it is \blMfKEight$\times$, 3.92$\times$ and \blMfKAll$\times$ faster for 8, 32 and 127 requests per origin. An active-day rule reuses the same laws through joint histograms, with all 14,580 answers identical to direct settlement. Across fixed prizes, per-completion rates and attendance awards (396 conditions), the budget program reuses its value curve and resolves all 65,340 requests exactly, whereas a mixed-integer program per request leaves 25 (raw) and 5 (envelope) unresolved and is \blMipRawTT$\times$ and \blMipEnvTT$\times$ slower at 33 requests (Table~\ref{tab:baselines}).

\section{Related work}
\label{sec:related}
Response or uplift models feed budgeted allocation in marketing, sometimes trained with the decision or summarized by budget--value curves \citep{zhao2019unified,albert2022mckp,ai2022lbcf,zhou2024dfcl,sun2024e3ir,zhang2025bidfcl,yang2026unimvt,cong2025tradeoff}; delivery bonuses, driver subsidies and rider incentives are allocated in the same spirit \citep{wu2022mealbonus,chen2024uber,yang2025dflsubsidy,chen2026d3subsidy,chen2025mmce}. Here the decision unit is a multi-day threshold activity whose payment depends on the whole trajectory, and every request may change window, population, rule and budget. Our allocation stage extends multiple-choice knapsacks and graphical Bellman algorithms \citep{sinha1979multiple,kameshwaran2009nonconvex,gafarov2014graphical} to two-sided budget bands and reward traceback. Where small experiments are combined with observational data to estimate effects \citep{kallus2018removing,athey2020selection,athey2025surrogate}, our pilot corrects history for planning by entropy balancing \citep{hainmueller2012entropy} on offer-conditioned trajectories coupled across days by rank \citep{clark2004schaake,schefzik2013ecc}, and its error is priced as its own term of an end-to-end bound.

\section{Conclusion}
\label{sec:discussion}
Four stages exchanging conditional trajectory laws answer changing requests while the response law stays reusable; a window-aware correction covers behaviour-changing windows and subpopulations. The four-term decomposition prices every stage, every cheaper replacement loses accuracy, and reuse answers all windows of a week \rInclAll$\times$ faster at almost no loss. In registered controlled studies the framework beats equal-effort trials, same-information controls and the trial fitted with its own dose curve. Rider records support pricing and computation; response evidence comes from controlled studies.

\FloatBarrier
\phantomsection\label{pg:main-end}
\bibliographystyle{iclr2027_conference}
\bibliography{references}
\appendix
\makeatletter\providecommand{\rawinput}[1]{\input{#1} }\makeatother

\section{Rider replay: protocol and timing}\label{app:rider}
\paragraph{Panel and calendar.}
The panel holds daily completed deliveries of 3,000 riders from 2025-01-01 to 2026-08-18 (595 days), recorded per rider, city and day; a missing rider-day means no recorded completion. Seventeen cities with enough riders are modelled as separate nodes and the rest are pooled. No feature uses information from after the origin. Forecast origins are 45 consecutive weeks; every model is refit at each origin on data up to the origin, and a request window is any subset of the seven days that follow.

\paragraph{Stages as run.}
The replay instance of M3 combines quantile gradient-boosted forecasts of node totals with per-rider share and concentration models into per rider-day completion distributions (Table~\ref{tab:app-modules} lists a recurrent network as another instance of M3); $R$ joint samples per origin and seed are coupled across days by reordering each rider's forecast days according to the ranks observed in the rider's own 182-day history. Reduction forms five activity layers (or three coarser ones) from participation in the preceding 28 days. Integration averages each plan's payment and value over the $R$ joint samples. Allocation solves the multiple-choice problem on a 400-point budget grid with costs rounded up, so every portfolio it finds respects the cap; the exact reference enumerates each layer's menu after cost--value dominance, which is exact here because every budget is a sole cap (Lemma~\ref{lem:admissible-reduction}). The main-grid instances in which the grid program loses value fall in the Spring Festival week.

\paragraph{Plans, value and budget.}
A layer's plan is a ladder of three rungs, an active-day requirement $d\in\{0,2,3,4,5,6\}$ and a reward level $\rho\in\{0.25,0.5,0.75,1,1.5,2\}$ (108 plans; a coarse grid keeps 18 and a day-requirement grid keeps $d\in\{5,6\}$). The rungs are the 50/70/85\%, 60/80/92\% or 40/65/85\% quantiles of the layer's seven-day totals over the 182 days before the origin. With $Y_i$ the rider's window completions, $D_i$ its active days and $J_i$ the highest rung reached when $D_i\ge d$ ($J_i=0$ otherwise), the plan pays $\rho\,m_{J_i}Y_i$ with $m=(0,1,1.6,2.4)$. The value is the award-covered volume $\sum_i\mathbf 1\{J_i\ge1\}Y_i$. It is a pricing target, not the uplift of Eq.~\eqref{eq:cost-uplift}: with recorded completions held fixed, $\rho$ changes only the cost. No stability weight enters the objective. The budget is a cap $B=\beta$ times the realized cost of a reference portfolio (middle ladder, $d=3$, $\rho=1$ in every layer); because this cost is read from the evaluated week, the budget is retrospective, whereas operationally a request fixes $B$ in advance.

\paragraph{Endpoints and uncertainty.}
Plan-cost error is $|\widehat c-c|/c$ for predicted and realized plan cost, summarized by its median over the plans of a unit; the \dfZeroPlans\ plans (\dfZeroShare\% of \dfPlansTotal) with $c=0$ are excluded and counted, all in the dormant layer. Award-rate error is the predicted minus realized share of riders with $J_i\ge1$, over all plans. Regret is $(V^\star-V(\widehat a))/V^\star$, with $V(\widehat a)$ the realized value of the portfolio chosen on the predicted table and $V^\star$ the exact optimum of the realized table of the full framework's layering at the same budget; $V^\star>0$ in every unit. Because every replacement is graded against this one optimum, regret can be negative: a portfolio that overspends can exceed $V^\star$, and a replacement with other layers has other menus. The framework's regret is nonzero only in the two weeks around the Spring Festival (\dfRegMeanHalf\% on average at $\beta=0.5$). Equal split gives every layer $B/L$ and takes, per layer, the plan of largest predicted value whose predicted cost fits the share. The replacements are: a point forecast (rounded per rider-day mean); equal-width buckets of the 28-day active-day count; independent days (the same per-day draws without coupling); a two-moment normal shortcut (per-rider normal window total; each tier's payment and covered volume priced as the mean times the normal tier probability, without the truncated-mean correction; the day requirement judged on mean active days); and a plug-in mean. Each origin is first averaged over its three seeds; 95\% intervals use an origin-block bootstrap (blocks of one and two origins, 20,000 draws, the wider interval reported). Every origin, including the four Spring Festival weeks, is kept.

\paragraph{Timing.}
All timings use one CPU core at 45 origins, seed 1 (Table~\ref{tab:time-attribution}). A framework request reduces the stored per-day draws to the requested days, integrates the rule over the 540 plans and allocates. The alternative re-draws the per-day samples of the requested days with the same random streams, re-couples them and runs the same integration and exact allocation; the re-drawn samples equal the stored ones bit for bit at every request, so the comparison is of time only. Model fitting is needed by both sides once per origin and is shown separately. The first request takes \dfReqFirst\,s against \dfReq\,s for later ones, so the only cold-start cost is the \dfOffline\,s sampling pass. $K=8$ uses all 45 origins; $K=32$ and $127$ use origins \dfOriginsAll, fixed before the run. The complete per-request alternative of Table~\ref{tab:baselines} (\rReqBest\,s) also recomputes the rank template; re-drawing alone takes \dfBase\,s. By window length, the framework and the alternative take 2.39 and 18.5\,s for one day and 5.63 and 101.1\,s for six days.

\begin{table}[ht]
\caption{Where the time of rider requests goes (seconds; medians over origins, one CPU core). Top: one week-long request, by step. Bottom: $K$ requests per origin in the fixed order (whole week, seven single days, remaining subsets); the framework total includes its once-per-origin sampling pass; speed-ups are medians of per-origin ratios. $^{a}$Gradient-boosted node quantiles (29.8\,s) plus share and concentration models (189.1\,s), needed equally by both approaches. $^{b}$Speed-up when model fitting is charged once to both sides.}
\label{tab:time-attribution}
\centering\small\setlength{\tabcolsep}{3pt}
\begin{tabular}{lrr}
\toprule
Step & Framework & Re-draw per request\\
\midrule
Model fitting (once per origin)$^{a}$ & 221.3 once & 221.3 once\\
Per-day draws, 7 days & 108.7 once & 108.7\\
Cross-day rank template & 3.5 once & reused\\
Cross-day coupling & 1.6 once & 1.8\\
Window reduction & 0.54 & 0.54\\
Integration, 540 plans & 2.52 & 2.52\\
Allocation & 0.44 ms (grid) & 0.50 ms (exact)\\
\midrule
Per request, once-per-origin steps excluded & 3.07 & 114.5\\
Same, first request & 2.97 & ---\\
\bottomrule
\end{tabular}

\medskip
\setlength{\tabcolsep}{3pt}\begin{tabular}{rrrrrrrr}
\toprule
$K$ & Origins & First & Total & Per req. & Re-draw & Ratio & +Fitting$^{b}$\\
\midrule
8 & 45 & 117.8 & 136.1 & 17.01 & 245.0 & 1.81 & 1.31\\
32 & 9 & 117.7 & 240.4 & 7.51 & 1410.7 & 5.87 & 3.58\\
127 & 9 & 117.7 & 734.5 & 5.78 & 7864.2 & 11.04 & 8.82\\
\bottomrule
\end{tabular}
\end{table}

\paragraph{Time saved by cheaper shortcuts.}
Table~\ref{tab:shortcuts} adds, for the shortcuts of Table~\ref{tab:ledger}, the time each would save. Drawing and coupling trajectories run once per origin; window reduction, integration (2.52 of the 3.07\,s of a week-long request) and allocation run for every request.
\begin{table}[ht]
\caption{Cheaper shortcuts the framework declines, on rider records (five layers, main plan grid). Accuracy: plan-cost error of the shortcut over the framework's, or value lost (95\% intervals over origins). Time: the framework's stage time over the shortcut's.}
\label{tab:shortcuts}
\centering\small\setlength{\tabcolsep}{3pt}
\begin{tabular}{@{}>{\raggedright\arraybackslash}p{0.11\linewidth}>{\raggedright\arraybackslash}p{0.19\linewidth}>{\raggedright\arraybackslash}p{0.37\linewidth}>{\raggedright\arraybackslash}p{0.24\linewidth}@{}}
\toprule
Stage & Shortcut & Accuracy cost of the shortcut & Time the shortcut saves\\
\midrule
Prediction & point forecast & plan-cost error $\times$\blPointCost\ [\blPointCostLo, \blPointCostHi] & not timed separately\\
Integration & independent days & plan-cost error $\times$\blIndCost\ [\blIndCostLo, \blIndCostHi] & \blCoupleShare\% of sampling; integration $\times$\blIndSpeed\\
Integration & simplified normal & plan-cost error $\times$\blNormCost\ [\blNormCostLo, \blNormCostHi]; 5--6-day plans $\times$\blNormCostFS & integration \blNormSpeed$\times$ (\blIntegS\,s per table)\\
Allocation & equal budget split & value $-$\blEqLoss\% [\blEqLossLo, \blEqLossHi] & \blDPms\ $\to$ \blEqMs\,ms\\
\bottomrule
\end{tabular}
\end{table}

\subsection{Cities left out of training}\label{app:holdout}
The reference refits the forecasting models at every weekly origin on all cities. For the held-out condition the 17 separately
modelled city nodes are ranked by median daily participation and assigned to four folds (5/4/4/4 nodes); the pooled node of small
cities always stays in training. For each fold and each of the 45 origins, all training rows of the fold's cities, and the all-city
aggregate rows, are removed, together with the fold's contexts in the share and concentration models, and the fold's riders are
forecast from their own history only. All later stages, menus and budgets are unchanged, and the seed is that of the reference, so
both use the same random streams. At each origin the held-out result is paired with the reference on the same riders and outcomes;
intervals resample origins ($B=20{,}000$), averaging folds within an origin. The pre-set rule calls a loss immaterial if the upper
95\% bound of the paired increase in plan-cost error at budget fraction 1 is at most $0.045$ (the half-width of the reference's own
interval) and that of selection regret at most $0.01$. All \hoSJobs\ runs finished; nothing is dropped, and a control run that reuses the
reference models at one origin reproduces the reference window exactly. Table~\ref{tab:holdout} gives the result, and
Table~\ref{tab:holdout-stages} the stage replacements on the held-out riders. The design fixed before the runs also contained a
condition with models fitted once and left unrefit; every pre-specified rule and its outcome is listed in Table~\ref{tab:prereg}.

\begin{table}[ht]
\caption{Cities left out of training, rider records (five-band system, budget fraction 1). Plan-cost error is the median absolute relative error of predicted plan cost; change = held-out minus reference on the same riders and outcomes, 95\% interval over origins. Pre-set margin: upper bound $\le$ \hoMargin.}
\label{tab:holdout}
\centering\footnotesize\setlength{\tabcolsep}{4pt}
\begin{tabular}{@{}lcccccc@{}}
\toprule
 & Origins / & Riders & \multicolumn{2}{c}{Plan-cost error} & & Within\\
Condition & runs & per run & Held-out & Reference & Change & margin\\
\midrule
Unseen cities (four folds) & \hoSOrigins\ / \hoSJobs & \hoSRiders & \hoSHeld & \hoSRef & \hoSD\ \hoSDCI & \hoSPass\\
\bottomrule
\end{tabular}
\end{table}

\begin{table}[ht]
\caption{Stage replacements on held-out riders: replacement minus full framework, budget fraction 1, 95\% interval over origins. Prediction and integration are scored by plan-cost error, reduction and allocation by selection regret. Unseen cities: the condition of Table~\ref{tab:holdout}; fit once: models fitted at the origin of 2026-04-21 and used for the next 16 weekly origins.}
\label{tab:holdout-stages}
\centering\small\setlength{\tabcolsep}{4pt}
\begin{tabular}{@{}llcc@{}}
\toprule
Replacement & Endpoint & Fit once & Unseen cities\\
\midrule
Point forecast & plan-cost error & \hoTNecPoint\ \hoTNecPointCI & \hoSNecPoint\ \hoSNecPointCI\\
Plug-in integration & plan-cost error & \hoTNecPlug\ \hoTNecPlugCI & \hoSNecPlug\ \hoSNecPlugCI\\
Two-moment normal & plan-cost error & \hoTNecNorm\ \hoTNecNormCI & \hoSNecNorm\ \hoSNecNormCI\\
Independent days & plan-cost error & \hoTNecInd\ \hoTNecIndCI & \hoSNecInd\ \hoSNecIndCI\\
Equal-width buckets & regret & \hoTNecEqw\ \hoTNecEqwCI & \hoSNecEqw\ \hoSNecEqwCI\\
Equal budget split & regret & \hoTNecEq\ \hoTNecEqCI & \hoSNecEq\ \hoSNecEqCI\\
\bottomrule
\end{tabular}
\end{table}

\section{Synthetic trajectories and a short pilot: design and further results}\label{app:synthetic}
\begin{figure}[ht]
\centering
\includegraphics[width=\linewidth]{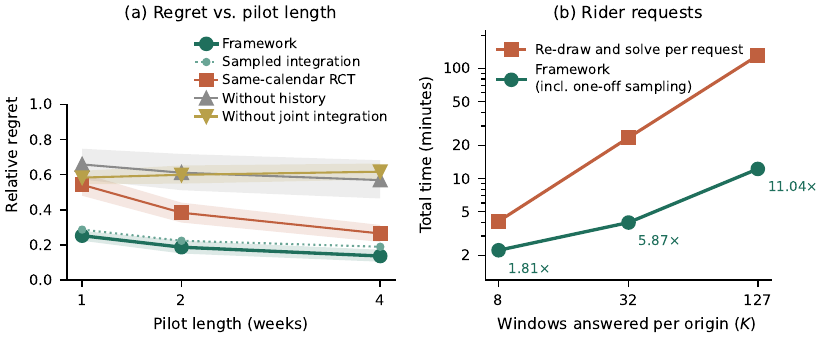}
\caption{(a) Relative regret against pilot length, 18 response laws, 10\% scoring; bands: 95\% intervals over laws. (b) Total time to answer $K$ windows per origin on rider records, one-off sampling pass charged; labels: median paired speed-ups.}
\label{fig:evidence}
\end{figure}

\paragraph{Laws.}
The 24 laws use the support, offers, payment rules and budgets of Appendix~\ref{app:response-experiment} (five groups, 32 weekly trajectory types, nine offers per group whose reward-rate increments form a $3\times3$ grid, group rules fixed award, per-completion, attendance, fixed award, per-completion) with their own response generator and a fresh seed. With dose $x=2b+t$ of offer $a=3b+t$, completions $C_k$ and active days $D_k$ of type $k$, base weights $w_{gk}\in\{1,\dots,17\}$ and a weekly shock $z\in\{0,1\}$ (probability $\tfrac12$, drawn independently for every sample index and shared by all groups and offers at that index), the no-offer law is proportional to $w_{gk}s_{gk}(z)$ with
\[
 s_{gk}(z)=64+u_g[zC_k+(1-z)(C_{\max}-C_k)],
\]
and the law under dose $x$ to
\begin{align*}
 w_{gk}s_{gk}(z)\,\bigl[&256+a_g\min(x,\tau_g)C_k+b_g(x-\tau_g)_+^2D_k\\
 &+c_gx(6-x)(C_{\max}-C_k)\bigr],
\end{align*}
with integers $a_g\in\{1,\dots,4\}$, $b_g\in\{1,\dots,6\}$, $c_g\in\{0,\dots,3\}$, $\tau_g\in\{1,\dots,4\}$ and $u_g\in\{1,2,3\}$. The combined family keeps all three response terms; the saturating family sets $b_g=c_g=0$, the threshold family $a_g=c_g=0$ and the null family $a_g=b_g=c_g=0$. The response form is unknown to every method.

\paragraph{Methods.}
All methods share exact enumeration over $9^5$ portfolios, the same 33 budgets $B\in\{0,5,\ldots,160\}$ and the band $[0.9B,1.1B]$. The long RCT uses offer-conditional frequencies from 1,024 observations per group and offer. The same-calendar RCT uses the pilot's rider-weeks spread uniformly over the nine offers. Budgets are nominal: with integer division over offers, the nominal 512, 1,024 and 2,048 rider-weeks per group give 510, 1,023 and 2,046 pilot observations and 504, 1,017 and 2,043 same-calendar observations. The framework pilots offers with increments 0.2, 0.6 and 1.0, solves Eq.~\eqref{eq:tilt} exactly for two moments (normalized completions and active days) at each piloted offer, interpolates the tilt linearly in the increment through zero at the no-offer plan, and sums payment and value exactly over the 32 types. Ablations change one stage each: \emph{sampled integration} averages 2,048 draws per group and offer from the same corrected law; \emph{without history} uses the pilot frequencies directly (unpiloted offers take the nearest piloted offer; called ``without synthesis'' in the pre-specified rules); \emph{without moment matching} uses the history law for every offer; \emph{without joint integration} evaluates payment and value on the mean trajectory; \emph{without allocation} splits the budget equally and lets each group choose its best offer within its share; the pre-specified version ran on the sampled tables (Table~\ref{tab:prereg}), and Section~\ref{sec:synthetic} and Table~\ref{tab:e2-ablations} apply it to the exact-sum tables, a re-analysis of the stored laws without new draws (within 20\% of the budget it raises one-week regret by \exEqTwentyOne\ \exEqTwentyCIOne).

\paragraph{Endpoint.}
For one replicate and pilot length, relative regret is $\sum_r(U^\star_r-U_r)/\sum_rU^\star_r$ over the budgets $r$ whose true optimum is feasible, with $U^\star_r$ the optimal true uplift and $U_r$ the true uplift of the issued plan, set to zero for an abstention or a plan whose true expected spend leaves the band. Budgets without a feasible portfolio enter only the validity count; a replicate whose optimal uplifts sum to zero, as in the null laws, has no regret. Replicates are averaged within a law.

\paragraph{Pre-specified design.}
Design, endpoints, arms, unit of inference and decision rules were fixed before the runs, with one exception. After a single development run on one law (one replicate, one-week pilot), the operator was changed from a single tilt with a linear dose to one tilt per piloted offer at three offers; that law remains in the analysis. The pre-specified primary arm integrated 2,048 sampled trajectories; the exact sum over the 32 types was pre-specified as the finite-pilot arm of the stage decomposition, and we use it as the default because it removes the integration term (Lemma~\ref{lem:int}); the pre-specified rules and their outcomes for the sampled arm are listed in Table~\ref{tab:prereg}. Excluding the development law (17 response laws) leaves every conclusion unchanged: the framework minus the same-calendar RCT is \xExSCOne\ \xExSCCIOne, \xExSCTwo\ \xExSCCITwo\ and \xExSCFour\ \xExSCCIFour\ at one, two and four weeks; the gap to the long RCT at four weeks is \xExLFour\ \xExLCIFour; and the ablations raise one-week regret by \xExNoSynOne, \xExNoMomOne, \xExPlugOne\ and, for sampling, \xExSampOne\ \xExSampCIOne.

\paragraph{The decomposition, measured.}
Because the truth is known, each term of Eq.~\eqref{eq:four-term} can be computed; Table~\ref{tab:loss-check} does so for the sampled variant, which carries the synthetic, moment-matching and integration terms; allocation enumerates exactly, so the decision term is zero, and exact summation also sets the integration term to zero. As a separate allocation diagnostic, the budget-grid program is within 0.0015 relative regret of exact enumeration. Lengthening the pilot shrinks only the moment-matching term: each doubling divides it by \decMMRatioA\ and \decMMRatioB, the $\sqrt2\approx1.41$ rate of Corollary~\ref{cor:rates}(a) for uplift; the synthetic term, which includes the interpolation bias, does not depend on the pilot and is the fixed price of shortening experimentation. As a heuristic, the decomposition suggests lengthening a pilot until the moment term is comparable to the synthetic term, which happens at two weeks here; realized regret keeps falling with a longer pilot (Table~\ref{tab:synthetic}). Proposition~\ref{prop:four-term} needs the true cost error, so it is an after-the-fact error analysis; on the \bCheckedOne, \bCheckedTwo\ and \bCheckedFour\ requests (one-, two- and four-week pilots) whose comparator fits in the band tightened by twice that error it holds without exception.

\begin{table}[ht]
\caption{Further stage replacements of the framework (penalized relative regret, pre-specified 10\% scoring, 18 response laws; the first two arms were not rescored at 20\%, and the equal split on the exact tables is also given at 20\% in the description of the methods above). Their differences from the framework are in Section~\ref{sec:synthetic}.}
\label{tab:e2-ablations}
\centering\small
\begin{tabular}{lccc}
\toprule
Replacement & 1 wk & 2 wk & 4 wk\\
\midrule
Without moment matching (history only) & \xNoMomOne & \xNoMomTwo & \xNoMomFour\\
Without joint integration (plug-in mean) & \xPlugOne & \xPlugTwo & \xPlugFour\\
Without budget allocation (equal split, exact tables) & \exEqLvlTenOne & \exEqLvlTenTwo & \exEqLvlTenFour\\
\bottomrule
\end{tabular}
\end{table}

\subsection{Empirical check of the decomposition}
\label{app:loss-check}
\begin{table}[h]
\caption{Empirical check of Proposition~\ref{prop:four-term}. \textbf{A}: controlled response laws with known truth (24 laws $\times$ 8 replicates, 33 budgets each), sampled variant ($S=2{,}048$), which carries the synthetic, moment-matching and integration terms; under exact summation the integration term is zero and the other terms are unchanged; allocation enumerates exactly, so $\varepsilon_{\mathrm{dec}}=0$, and the last two rows of A are an allocation diagnostic comparing the budget-grid program with enumeration; value errors are uplift in completions and cost errors are currency units, both summed over five layers (the median optimal uplift of a request is 14.1). \textbf{B}: rider records, 45 weekly origins $\times$ 3 seeds, five activity layers; each row replaces one stage of the framework and is compared with the framework on the realized records; intervals resample origins.}
\label{tab:loss-check}
\centering\small\setlength{\tabcolsep}{4pt}
\begin{tabular}{lccc}
\toprule
\textbf{A} \quad Pilot length & 1 week & 2 weeks & 4 weeks\\
\midrule
Synthetic term $\varepsilon^v_{\mathrm{syn}}$ / $\varepsilon^c_{\mathrm{syn}}$ & 5.92 / 5.53 & 5.92 / 5.53 & 5.92 / 5.53\\
Moment term $\varepsilon^v_{\mathrm{mm}}$ / $\varepsilon^c_{\mathrm{mm}}$ & 8.82 / 7.67 & 6.29 / 5.31 & 4.48 / 3.69\\
Integration term $\varepsilon^v_{\mathrm{int}}$ / $\varepsilon^c_{\mathrm{int}}$ ($S=2{,}048$) & 3.43 / 3.00 & 3.44 / 3.00 & 3.51 / 3.00\\
Sum of stage terms & 18.16 / 16.20 & 15.65 / 13.83 & 13.90 / 12.22\\
Direct error $\bar E_v$ / $\bar E_c$ (truth vs.\ final tables) & 11.79 / 10.88 & 10.06 / 9.35 & 8.89 / 8.24\\
Requests with a comparator in $\mathcal B_{2\bar E_c}$ & 26/6336 & 140/6336 & 499/6336\\
\quad share of the 4,968 feasible requests & 0.5\% & 2.8\% & 10.0\%\\
\quad bound below the comparator value & 16 & 62 & 290\\
Violations of part (i), tightening by $\bar E_c$ & 0 & 0 & 0\\
Grid program $-$ enumeration, relative regret & 0.0005 & 0.0015 & 0.0009\\
\quad 95\% interval over laws & $[-0.0014, 0.0028]$ & $[-0.0005, 0.0038]$ & $[-0.0014, 0.0033]$\\
\bottomrule
\end{tabular}

\vspace{4pt}
\begin{tabular}{>{\raggedright\arraybackslash}p{0.26\linewidth}>{\raggedright\arraybackslash}p{0.19\linewidth}>{\raggedright\arraybackslash}p{0.47\linewidth}}
\toprule
\textbf{B} \quad Stage replaced by & Mechanism & Observed on rider records\\
\midrule
Prediction: point forecast & plug-in floor, part (iii) & plan-cost error $\times$3.08 [2.48, 3.81]; median award-rate error 4.0 $\to$ 66.9\,pp\\
Integration: independent days & cross-day dependence & award rate $-3.07$\,pp $[-3.28, -2.87]$ on 5--6-day plans, $+3.32$\,pp [2.93, 3.69] on the others$^\dagger$; cost error $\times$1.51\\
Integration: per-rider normal, attendance at its mean & two moments; plug-in gate & $-10.31$\,pp $[-10.97, -9.69]$ on 5--6-day plans, $+0.23$\,pp $[-0.19, 0.57]$ on the others$^\dagger$; cost error $\times$2.46\\
Integration: plug-in mean & plug-in floor, part (iii) & $+10.17$\,pp [9.31, 11.03] on the others$^\dagger$, $-4.11$\,pp on 5--6-day plans; cost error $\times$1.85\\
Decision: budget grid vs.\ exact optimum & Lemma~\ref{lem:dec} & zero loss in \nmDPfbZero\ of \nmDPfbN\ five-layer main-grid instances (45 origins $\times$ 3 budgets), max \nmDPfbMax\% (realized-record tables)\\
Decision: equal budget split & allocation floor, part (iii) & value $-$15.7\% [13.8, 17.4] at budget fraction 0.5\\
\bottomrule
\end{tabular}

\smallskip{\raggedright\footnotesize $^\dagger$Plans requiring 0, 2, 3 or 4 active days, derived exactly from the all-plan and 5--6-day means. Award-rate entries are differences from the framework's joint law in percentage points.\par}
\end{table}

In panel A each stage term is measured against the preceding law, exactly as in the proposition, and the direct error is below the sum of stage terms, as the triangle inequality allows. The synthetic and integration terms do not change with the pilot, while the moment term falls by factors 1.40 and 1.41 per doubling of the pilot (cost: 1.44 and 1.44), close to the $\sqrt2\approx1.41$ of Corollary~\ref{cor:rates}(a). Part (i) held on every request where its comparator exists. The check tightens by the direct error $\bar E_c$, which needs the truth, so it is an after-the-fact error analysis. Among the 4,968 requests whose band holds a portfolio of true cost, the tightened band still holds one in 26, 140 and 499 (0.5\%, 2.8\% and 10.0\% at one, two and four weeks), and the bound is below the comparator value, the trivial bound, in 16, 62 and 290 (0.3\%, 1.2\% and 5.8\%); a rule held to the tightened band would abstain on the other 4,942, 4,828 and 4,469; realized regrets are reported in Section~\ref{sec:synthetic}. The grid program is indistinguishable from enumeration. In panel B the signs follow the mechanisms of Appendix~\ref{app:loss-proofs}: composing days independently understates the 5--6-day requirements and overstates lower ones, the crossing expected under shared daily shocks; the normal law with attendance evaluated at its mean fails where attendance binds, the plug-in floor of part (iii); and dropping distributional prediction or joint allocation costs most. Panel B concerns quantities that can be replayed under the offered plans; it involves no behavioural response.

\section{Integrated study: changing requests and binding budgets}\label{app:integrated}
Design, endpoints and decision rules were registered, with the simulation code, before the run. One pre-run check on the first six requests of one law and replicate is disclosed with the registration and changed nothing. The analysis script was completed after the run and pooled all \eFLaws\ laws for timing; H2 is reported here, as registered, on the \eFRespLaws\ response laws.

\paragraph{Laws and requests.}
The \eFLaws\ laws are fresh draws of the generator of Appendix~\ref{app:synthetic}, twelve per family (\eFRespLaws\ with a response; the \eFNullLaws\ null laws check validity and timing only). Behaviour responds to the offer at the week level, and the request decides which days are paid and counted. Every replicate issues its own stream of \eFReqs\ requests. Each request draws, uniformly, an activity window among the 127 non-empty subsets of the seven days, a population among the 26 subsets of at least two of the five groups, and one rule for all its groups (fixed award, per-completion rate or attendance). Tier thresholds scale with the window, $\max(1,\mathrm{round}(t\,|W|/7))$ for $t\in\{14,28,42\}$ completions or $\{2,4,6\}$ active days; uplift is the change in completions inside the window. With $c_{\rm ref}$ the true expected cost of the cheapest portfolio that maximizes uplift without a budget, the request carries three budgets $B=\beta c_{\rm ref}$, $\beta\in\{0.3,0.5,0.7\}$, which gives \eFBudgeted\ budgeted requests per replicate; the budgets are set from the true law so that they bind, and every method receives the same ones. A budget binds when its band optimum is below the unconstrained maximum; \eFBind\% of budgeted requests bind (over all \eFLaws\ laws).

\paragraph{Data and methods.}
Each of eight replicates per law draws 4,096 no-offer weeks per group, a pilot of one, two or four weeks at 512 riders per group and week spread over offers 2, 5 and 8, and a same-calendar RCT that spreads the same nominal rider-weeks over all nine offers (with integer division per cell, 510, 1,023 and 2,046 pilot and 504, 1,017 and 2,043 same-calendar observations per group for one, two and four weeks); in the registered primary condition reported here, every sample index has its own weekly shock shared by all cells. The \emph{framework} prepares its laws once per origin, from history and pilot only: the history law, the moment-matched tilt of Eq.~\eqref{eq:tilt} at the piloted offers interpolated in the reward increment, and the corrected law of every offer. Each request then reuses these laws: its tables come from the exact sum over the 32 trajectory types for the request's window, population and rule, and its plan from exact enumeration over the $9^{|\text{population}|}$ portfolios in the band. The same-calendar RCT uses its empirical offer laws with the same integration and allocation. Stage replacements change one stage: the history law for every offer (without moment matching); pilot frequencies at the nearest piloted offer (without history); payment at the mean window completions and active days (plug-in integration); days composed independently from per-day marginals (independent days); the band split equally over the population's groups (equal split). We also report the budget-grid program with 1,024 states in place of enumeration and, as a same-information control, the history and pilot laws interpolated in the reward increment (dose interpolation). For timing, every request is also answered by re-running the whole pipeline, including the preparation, from the raw data; its answers must equal the framework's.

\paragraph{Endpoints and decision rules.}
Relative regret is computed per replicate over the budgeted requests whose band optimum exists, scoring an abstention or a plan whose true expected spend leaves the band as zero; replicates are averaged within a law and 95\% intervals come from 20,000 bootstrap resamples of the \eFRespLaws\ response laws. The primary band is $[0.8B,1.2B]$, with $[0.9B,1.1B]$ alongside. The confirmatory family (Table~\ref{tab:prereg}) is H1, the framework has lower regret than the same-calendar RCT at each pilot length (upper bound below zero), and H2, answering all \eFReqs\ requests of an origin by reuse with the preparation charged is faster than re-running the pipeline per request (lower bound of the law-mean log speed-up over the \eFRespLaws\ response laws above zero, every timed answer identical). The breakdowns of Table~\ref{tab:integrated-breakdown} are descriptive; a subgroup is summarized over the laws in which it contains a request with a feasible band optimum (at $\beta=0.3$, two laws have none), a correction of the registered summary made after the run.

\begin{table}[ht]
\caption{Integrated study: relative regret (lower is better) on \eFRespLaws\ response laws $\times$ \eFReps\ replicates $\times$ \eFBudgeted\ budgeted requests, with each request's window, population, rule and budget drawn afresh. Indented rows change one stage of the framework; every difference from the framework has a 95\% interval above zero. Validity: share of the framework's issued plans whose true expected spend lies in the band, over all \eFLaws\ laws.}
\label{tab:integrated}
\centering\small\setlength{\tabcolsep}{3.5pt}
\begin{tabular}{lcccccc}
\toprule
 & \multicolumn{3}{c}{Within 20\% of budget} & \multicolumn{3}{c}{Within 10\%}\\
\cmidrule(lr){2-4}\cmidrule(l){5-7}
Method & 1 wk & 2 wk & 4 wk & 1 wk & 2 wk & 4 wk\\
\midrule
Same-calendar RCT & \eFSCOne & \eFSCTwo & \eFSCFour & \eFXSCOne & \eFXSCTwo & \eFXSCFour\\
\textbf{Framework} & \textbf{\eFFWOne} & \textbf{\eFFWTwo} & \textbf{\eFFWFour} & \textbf{\eFXFWOne} & \textbf{\eFXFWTwo} & \textbf{\eFXFWFour}\\
\quad without moment matching & \eFNoMomOne & \eFNoMomTwo & \eFNoMomFour & \eFXNoMomOne & \eFXNoMomTwo & \eFXNoMomFour\\
\quad without history & \eFNoSynOne & \eFNoSynTwo & \eFNoSynFour & \eFXNoSynOne & \eFXNoSynTwo & \eFXNoSynFour\\
\quad plug-in integration & \eFPlugOne & \eFPlugTwo & \eFPlugFour & \eFXPlugOne & \eFXPlugTwo & \eFXPlugFour\\
\quad independent days & \eFIndOne & \eFIndTwo & \eFIndFour & \eFXIndOne & \eFXIndTwo & \eFXIndFour\\
\quad equal split & \eFEqOne & \eFEqTwo & \eFEqFour & \eFXEqOne & \eFXEqTwo & \eFXEqFour\\
Dose interpolation (same information) & \eFCOne & \eFCTwo & \eFCFour & \eFXCOne & \eFXCTwo & \eFXCFour\\
Validity of the framework (\%) & \eFValOne & \eFValTwo & \eFValFour & \eFXValOne & \eFXValTwo & \eFXValFour\\
\bottomrule
\end{tabular}
\end{table}

\paragraph{Results.}
Framework minus same-calendar RCT is \eFDOne\ \eFDCIOne, \eFDTwo\ \eFDCITwo\ and \eFDFour\ \eFDCIFour\ within 20\% (\eFRelOne\%, \eFRelTwo\% and \eFRelFour\% lower), and \eFXDOne\ \eFXDCIOne, \eFXDTwo\ \eFXDCITwo\ and \eFXDFour\ \eFXDCIFour\ within 10\%. With a one-week pilot, the stage replacements raise regret by \eFdNoMomOne\ \eFdNoMomCIOne\ (moment matching), \eFdNoSynOne\ \eFdNoSynCIOne\ (history), \eFdPlugOne\ \eFdPlugCIOne\ (plug-in integration), \eFdIndOne\ \eFdIndCIOne\ (independent days) and \eFdEqOne\ \eFdEqCIOne\ (equal split). The budget-grid program's regret differs from that of enumeration by at most \eFDPmax\ in every cell, and dose interpolation has higher regret by \eFdCOne\ \eFdCCIOne, \eFdCTwo\ \eFdCCITwo\ and \eFdCFour\ \eFdCCIFour\ at one, two and four weeks. Table~\ref{tab:integrated-breakdown} shows that the gain over the same-calendar RCT holds in every rule, window length, population size and budget level, and Table~\ref{tab:integrated-time} gives the preparation-charged timing.

\begin{table}[ht]
\caption{Integrated study by request type (within 20\% of budget, descriptive): framework regret and framework minus same-calendar RCT with 95\% intervals over the laws in which the subgroup is defined.}
\label{tab:integrated-breakdown}
\centering\scriptsize\setlength{\tabcolsep}{3pt}
\begin{tabular}{@{}llrcccc@{}}
\toprule
 & & & \multicolumn{2}{c}{1-week pilot} & \multicolumn{2}{c}{2-week pilot}\\
\cmidrule(lr){4-5}\cmidrule(l){6-7}
 & Level & Laws & Framework & minus same-calendar & Framework & minus same-calendar\\
\midrule
Rule & fixed award & 36 & 0.225 & $-$0.175 [$-$0.208, $-$0.142] & 0.176 & $-$0.127 [$-$0.166, $-$0.089]\\
 & per completion & 36 & 0.229 & $-$0.172 [$-$0.206, $-$0.138] & 0.180 & $-$0.125 [$-$0.165, $-$0.087]\\
 & attendance & 36 & 0.205 & $-$0.165 [$-$0.192, $-$0.139] & 0.162 & $-$0.120 [$-$0.151, $-$0.089]\\
\midrule
Window days & 1--2 & 36 & 0.221 & $-$0.193 [$-$0.226, $-$0.159] & 0.173 & $-$0.146 [$-$0.184, $-$0.110]\\
 & 3--5 & 36 & 0.219 & $-$0.171 [$-$0.200, $-$0.141] & 0.171 & $-$0.123 [$-$0.159, $-$0.088]\\
 & 6--7 & 36 & 0.226 & $-$0.155 [$-$0.194, $-$0.113] & 0.172 & $-$0.119 [$-$0.159, $-$0.082]\\
\midrule
Groups & 2 & 36 & 0.216 & $-$0.214 [$-$0.238, $-$0.190] & 0.171 & $-$0.152 [$-$0.183, $-$0.123]\\
 & 3 & 36 & 0.215 & $-$0.160 [$-$0.193, $-$0.126] & 0.169 & $-$0.116 [$-$0.154, $-$0.076]\\
 & 4 & 36 & 0.230 & $-$0.153 [$-$0.190, $-$0.114] & 0.183 & $-$0.111 [$-$0.154, $-$0.067]\\
 & 5 & 36 & 0.223 & $-$0.148 [$-$0.187, $-$0.111] & 0.186 & $-$0.105 [$-$0.148, $-$0.063]\\
\midrule
Budget $\beta$ & 0.3 & 34 & 0.305 & $-$0.290 [$-$0.346, $-$0.237] & 0.255 & $-$0.205 [$-$0.253, $-$0.160]\\
 & 0.5 & 36 & 0.228 & $-$0.197 [$-$0.258, $-$0.151] & 0.180 & $-$0.151 [$-$0.216, $-$0.101]\\
 & 0.7 & 36 & 0.176 & $-$0.149 [$-$0.181, $-$0.118] & 0.131 & $-$0.109 [$-$0.145, $-$0.075]\\
\bottomrule
\end{tabular}
\end{table}

\begin{table}[ht]
\caption{Integrated study, preparation charged (milliseconds on one core; medians over replicates for the one-off preparation and over requests otherwise). Speed-up: time to answer the first $K$ requests of a replicate by re-running the whole pipeline for each, over the preparation plus the reusing requests; geometric mean over the \eFRespLaws\ response laws (replicates averaged on the log scale) with 95\% intervals. Answers are identical on all \eFIdentAll\ timed requests of these laws and on all \eFIdentFortyEight\ of all \eFLaws\ laws.}
\label{tab:integrated-time}
\centering\small\setlength{\tabcolsep}{4pt}
\begin{tabular}{lcccccc}
\toprule
Pilot & Preparation & Request (reuse) & Request (re-run) & $K=8$ & $K=32$ & $K=60$\\
\midrule
1 week & \eFPrepMsOne & \eFReqMsOne & \eFNoReuseMsOne & \eFSpEightOne$\times$ & \eFSpThirtyTwoOne$\times$ & \eFSpSixtyOne$\times$ \eFSpSixtyCIOne\\
2 weeks & \eFPrepMsTwo & \eFReqMsTwo & \eFNoReuseMsTwo & \eFSpEightTwo$\times$ & \eFSpThirtyTwoTwo$\times$ & \eFSpSixtyTwo$\times$ \eFSpSixtyCITwo\\
4 weeks & \eFPrepMsFour & \eFReqMsFour & \eFNoReuseMsFour & \eFSpEightFour$\times$ & \eFSpThirtyTwoFour$\times$ & \eFSpSixtyFour$\times$ \eFSpSixtyCIFour\\
\bottomrule
\end{tabular}
\end{table}

\section{Response validation with a new generator}\label{app:structural}
This study answers whether the results of Appendix~\ref{app:integrated} depend on the generator that produced its laws. Its design, endpoints, decision rules, simulation code and analysis script were all registered before the run. One disclosed pre-run check on four requests changed nothing.

\paragraph{Generator.}
None of it is shared with the 32-type construction of Appendix~\ref{app:synthetic}. Each of \eGLaws\ laws (twelve per family) has six rider groups, and each group is a mixture of 16 latent rider types with Dirichlet weights.
\begin{itemize}
\item \emph{Activity.} A type is active on day $d$ with probability $\mathrm{logit}^{-1}(\alpha_k+\gamma_d)$, where $\gamma_d$ is a day-of-week effect of the group.
\item \emph{Completions.} An active day has $1+\mathrm{Poisson}(\lambda_k-1)$ completions, truncated at 15, and a weekly shock multiplies $\lambda_k$ by $1\pm\delta$.
\item \emph{Response.} The response depends on the rule shown. Under dose $x$ of rule $\rho$, activity becomes $p+(1-p)\min\{0.9,A_kS^A_\rho f(x)\}$ and intensity $\lambda(1+B_kS^B_\rho f(x))$. The sensitivities are $(S^A,S^B)=(0.4,0.8)$ for a fixed award, $(0.2,1)$ for a per-completion rate and $(1,0.2)$ for attendance.
\item \emph{Families.} Both channels with a saturating $f$; intensity only; a threshold $f$ in which only types whose usual week lies between the first and third tier respond fully; no response.
\end{itemize}
Weeks take $16^7$ daily patterns per rider, so the framework cannot enumerate them. The truth is exact: a mixture over types and shocks of day-by-day convolutions of the law of window completions and active days.

\paragraph{Requests, data and methods.}
Requests follow Appendix~\ref{app:integrated}, with populations of two to four of the six groups; \eGBind\% of budgeted requests bind (over all \eGLaws\ laws). Each of eight replicates per law draws:
\begin{itemize}
\item 4,096 no-offer weeks per group;
\item a pilot of one, two or four weeks at 512 riders per group and week, spread over the three rules times offers 2, 5 and 8;
\item a same-calendar RCT that spreads the same nominal rider-weeks over the three rules times all nine offers. With integer division per cell the actual observations per group are 504, 1,017 and 2,043 in the pilot and 486, 999 and 2,025 in the same-calendar RCT for one, two and four weeks, so the trial has slightly fewer.
\end{itemize}
For every group and rule the \emph{framework} tilts the stored weeks to the pilot moments by Eq.~\eqref{eq:tilt} (interpolated in the reward increment). It does this once per origin; every request then reuses the weights and integrates its window rule as a weighted sum over the stored weeks (Algorithm~\ref{alg:request}, lines~5--6). The stage replacements are the same as in Appendix~\ref{app:integrated}: history weeks for every offer; pilot weeks at the nearest piloted offer; payment at the weighted mean; days composed independently; equal split. Dose interpolation mixes history and pilot weeks with interpolation weights in the reward increment. For timing, each request re-fits the tilts it needs from the raw data. The endpoints, the unit of inference (\eGRespLaws\ response laws) and H1--H2 are as in Appendix~\ref{app:integrated}.

\begin{table}[ht]
\caption{New generator: relative regret (lower is better) on \eGRespLaws\ response laws $\times$ \eGReps\ replicates $\times$ \eGBudgeted\ budgeted requests. Indented rows change one stage of the framework; every difference from the framework, and that of dose interpolation, has a 95\% interval above zero. Validity: share of the framework's issued plans whose true expected spend lies in the band, over all \eGLaws\ laws.}
\label{tab:structural}
\centering\small\setlength{\tabcolsep}{3.5pt}
\begin{tabular}{lcccccc}
\toprule
 & \multicolumn{3}{c}{Within 20\% of budget} & \multicolumn{3}{c}{Within 10\%}\\
\cmidrule(lr){2-4}\cmidrule(l){5-7}
Method & 1 wk & 2 wk & 4 wk & 1 wk & 2 wk & 4 wk\\
\midrule
Same-calendar RCT & \eGSCOne & \eGSCTwo & \eGSCFour & \eGXSCOne & \eGXSCTwo & \eGXSCFour\\
\textbf{Framework} & \textbf{\eGFWOne} & \textbf{\eGFWTwo} & \textbf{\eGFWFour} & \textbf{\eGXFWOne} & \textbf{\eGXFWTwo} & \textbf{\eGXFWFour}\\
\quad without moment matching & \eGNoMomOne & \eGNoMomTwo & \eGNoMomFour & \eGXNoMomOne & \eGXNoMomTwo & \eGXNoMomFour\\
\quad without history & \eGNoSynOne & \eGNoSynTwo & \eGNoSynFour & \eGXNoSynOne & \eGXNoSynTwo & \eGXNoSynFour\\
\quad plug-in integration & \eGPlugOne & \eGPlugTwo & \eGPlugFour & \eGXPlugOne & \eGXPlugTwo & \eGXPlugFour\\
\quad independent days & \eGIndOne & \eGIndTwo & \eGIndFour & \eGXIndOne & \eGXIndTwo & \eGXIndFour\\
\quad equal split & \eGEqOne & \eGEqTwo & \eGEqFour & \eGXEqOne & \eGXEqTwo & \eGXEqFour\\
Dose interpolation (same information) & \eGCOne & \eGCTwo & \eGCFour & \eGXCOne & \eGXCTwo & \eGXCFour\\
Validity of the framework (\%) & \eGValOne & \eGValTwo & \eGValFour & \eGXValOne & \eGXValTwo & \eGXValFour\\
\bottomrule
\end{tabular}
\end{table}

\paragraph{Results.}
\begin{itemize}
\item \emph{Same-calendar RCT.} Framework minus same-calendar RCT is \eGDOne\ \eGDCIOne, \eGDTwo\ \eGDCITwo\ and \eGDFour\ \eGDCIFour\ within 20\% (\eGRelOne\%, \eGRelTwo\% and \eGRelFour\% lower), and \eGXDOne\ \eGXDCIOne, \eGXDTwo\ \eGXDCITwo\ and \eGXDFour\ \eGXDCIFour\ within 10\%. The gain holds in every rule, window length, population size and budget level (Table~\ref{tab:structural-breakdown}).
\item \emph{Stage replacements.} With a one-week pilot they raise regret by:
  \begin{itemize}
  \item \eGdNoMomOne\ \eGdNoMomCIOne\ without moment matching;
  \item \eGdNoSynOne\ \eGdNoSynCIOne\ without history;
  \item \eGdPlugOne\ \eGdPlugCIOne\ with plug-in integration;
  \item \eGdIndOne\ \eGdIndCIOne\ with independent days;
  \item \eGdEqOne\ \eGdEqCIOne\ with an equal split.
  \end{itemize}
\item \emph{Dose interpolation.} Its regret is higher by \eGdCOne\ \eGdCCIOne, \eGdCTwo\ \eGdCCITwo\ and \eGdCFour\ \eGdCCIFour\ at one, two and four weeks.
\item \emph{Allocation.} The budget-grid program's regret is within \eGDPmax\ of that of enumeration.
\item \emph{Timing.} Here the per-request alternative re-fits only the tilts the request needs. With the preparation charged, answering the first 8, 32 and 60 requests of an origin is \eGSpEightOne$\times$, \eGSpThirtyTwoOne$\times$ and \eGSpSixtyOne$\times$ \eGSpSixtyCIOne\ faster (one-week pilot; \eGSpSixtyTwo$\times$ and \eGSpSixtyFour$\times$ at two and four weeks). Medians are \eGPrepMsOne\,ms for the preparation, \eGReqMsOne\,ms per reusing request and \eGNoReuseMsOne\,ms per re-fitting request. Answers are identical on all \eGIdentAll\ timed requests of the response laws.
\end{itemize}

\begin{table}[ht]
\caption{New generator by request type (within 20\% of budget, descriptive): framework regret and framework minus same-calendar RCT with 95\% intervals over laws.}
\label{tab:structural-breakdown}
\centering\scriptsize\setlength{\tabcolsep}{3pt}
\begin{tabular}{@{}llrcccc@{}}
\toprule
 & & & \multicolumn{2}{c}{1-week pilot} & \multicolumn{2}{c}{4-week pilot}\\
\cmidrule(lr){4-5}\cmidrule(l){6-7}
 & Level & Laws & Framework & minus same-calendar & Framework & minus same-calendar\\
\midrule
Rule & fixed award & 36 & 0.179 & $-$0.318 [$-$0.343, $-$0.293] & 0.120 & $-$0.127 [$-$0.143, $-$0.112]\\
 & per completion & 36 & 0.196 & $-$0.322 [$-$0.343, $-$0.300] & 0.111 & $-$0.127 [$-$0.144, $-$0.111]\\
 & attendance & 36 & 0.202 & $-$0.183 [$-$0.213, $-$0.155] & 0.132 & $-$0.114 [$-$0.137, $-$0.093]\\
\midrule
Window days & 1--2 & 36 & 0.186 & $-$0.340 [$-$0.365, $-$0.315] & 0.119 & $-$0.154 [$-$0.175, $-$0.133]\\
 & 3--5 & 36 & 0.188 & $-$0.285 [$-$0.308, $-$0.263] & 0.116 & $-$0.121 [$-$0.138, $-$0.106]\\
 & 6--7 & 36 & 0.189 & $-$0.269 [$-$0.294, $-$0.242] & 0.122 & $-$0.111 [$-$0.130, $-$0.092]\\
\midrule
Groups & 2 & 36 & 0.211 & $-$0.301 [$-$0.318, $-$0.285] & 0.131 & $-$0.154 [$-$0.168, $-$0.140]\\
 & 3 & 36 & 0.188 & $-$0.294 [$-$0.315, $-$0.273] & 0.117 & $-$0.121 [$-$0.139, $-$0.104]\\
 & 4 & 36 & 0.176 & $-$0.277 [$-$0.306, $-$0.248] & 0.110 & $-$0.111 [$-$0.131, $-$0.093]\\
\midrule
Budget $\beta$ & 0.3 & 36 & 0.261 & $-$0.275 [$-$0.300, $-$0.252] & 0.159 & $-$0.182 [$-$0.217, $-$0.150]\\
 & 0.5 & 36 & 0.181 & $-$0.288 [$-$0.312, $-$0.264] & 0.114 & $-$0.118 [$-$0.138, $-$0.100]\\
 & 0.7 & 36 & 0.170 & $-$0.301 [$-$0.326, $-$0.276] & 0.106 & $-$0.106 [$-$0.121, $-$0.093]\\
\bottomrule
\end{tabular}
\end{table}

\section{Stronger controls, weight diagnostics, behaviour-changing requests and population reduction}\label{app:strong}
Four further studies were registered, with designs, decision rules, simulation code and analysis scripts fixed before the runs. All use the rider-level generator of Appendix~\ref{app:structural} with fresh seeds, 60 changing requests per replicate, budgets at 0.3, 0.5 and 0.7 times the unconstrained optimum, and the law as the unit of inference (95\% intervals from a law bootstrap, Holm correction within each family); studies C--E use eight replicates per law. Regret is relative regret of pure uplift within 20\% of the budget, as in Table~\ref{tab:studies}.

\paragraph{Study C: stronger same-information controls at matched cache.}
It draws 384 laws (96 per family; 288 with a response). The pilot and the same-calendar trial have the same number of observations per group (486, 999 and 2,025 at one, two and four weeks). Every arm reads the same cached 4,096 stored weeks per group and the same data, and every arm is solved by the same exact enumeration.
\begin{itemize}
\item \emph{History-anchored tilt of the trial.} The framework's tilt is fitted at each of the eight non-zero offers to that offer's cell of the same-calendar trial. This is the strongest control that uses the same history and the same number of randomized observations.
\item \emph{Normal law with exact truncated moments.} It keeps the framework's weights and replaces only the law of the paid statistic by a normal with the weighted mean and variance. Every tier is priced by
\[
 \E[X\mathbf 1\{\ell\le X<h\}]=\mu\{\bar\Phi(z_\ell)-\bar\Phi(z_h)\}+\sigma\{\varphi(z_\ell)-\varphi(z_h)\},
\]
with $z=(t-0.5-\mu)/\sigma$ at every integer threshold $t$. This separates distribution shape from the moment computation.
\item \emph{Calendar condition.} A rider keeps its type through all observed weeks. One shock per group and calendar week is shared by every rider observed in that week, and the pilot, the trial and 512 concurrent no-offer riders share the pilot weeks' shocks. The framework then anchors its moment targets to the concurrent riders (pilot mean minus concurrent mean plus history mean), and the trial measures uplift against the same concurrent riders.
\end{itemize}

\begin{table}[ht]
\caption{Study C: relative regret within 20\% of the budget, 288 response laws $\times$ 8 replicates $\times$ 60 requests $\times$ 3 budgets. Every arm uses the same stored weeks, data and exact enumeration. Calendar: repeated riders with one shock per group and calendar week.}
\label{tab:strong}
\centering\small\setlength{\tabcolsep}{4pt}
\begin{tabular}{llccc}
\toprule
Condition & Method & 1 wk & 2 wk & 4 wk\\
\midrule
Independent & \textbf{Framework} & \textbf{0.196} & \textbf{0.154} & \textbf{0.126}\\
 & Dose interpolation (same information) & 0.219 & 0.171 & 0.138\\
 & Normal law, exact truncated moments & 0.250 & 0.208 & 0.179\\
 & History-anchored tilt of the trial & 0.405 & 0.291 & 0.211\\
 & Same-calendar RCT & 0.480 & 0.340 & 0.241\\
\midrule
Calendar & \textbf{Framework}, concurrent anchor & \textbf{0.223} & \textbf{0.215} & \textbf{0.207}\\
 & Same-calendar RCT, concurrent control & 0.490 & 0.429 & 0.386\\
\bottomrule
\end{tabular}
\end{table}

\paragraph{Results of study C.}
All seven registered hypotheses hold (Table~\ref{tab:prereg}).
\begin{itemize}
\item History anchoring helps the trial as well: the tilted trial has lower regret than the raw trial by 0.075 [0.072, 0.078] at one week.
\item The framework's regret is below that of every same-information control at every pilot length (Table~\ref{tab:strong}). At one week it is below dose interpolation by 0.022 [0.021, 0.024], below the normal law with exact truncated moments by 0.053 [0.051, 0.055], and below the tilted trial by 0.209 [0.201, 0.217]. Study E below splits this last difference into its two sources. The normal comparison isolates the distribution shape, since the weights and the tier-moment formula are exact in both arms.
\item Under calendar shocks shared by the riders of a week, the concurrently anchored framework has lower regret than the concurrently controlled trial by 0.267 [0.259, 0.274] at one week and 0.214 [0.206, 0.222] at two weeks.
\item Reuse gives identical answers in all 184,320 comparisons per condition and pilot length. Charging the one-off preparation, it answers the 60 requests of an origin 2.73$\times$ [2.71, 2.74] faster than re-running the pipeline per request at one week (2.71$\times$ and 2.73$\times$ at two and four weeks).
\end{itemize}

\paragraph{Study E: placement of the observations versus sharing across offers.}
The tilted trial differs from the framework in two ways: its randomized observations are spread over nine offers instead of three, and its tilt is fitted separately at every offer instead of along one dose curve. Study E adds a \emph{joint fit of the trial} that keeps the trial's data and removes the second difference. It uses the framework's own parameterization, $\lambda(s)$ piecewise linear in the reward increment $s$ with $\lambda(0)=0$ and nodes $\theta=(\theta_2,\theta_5,\theta_8)$ at the three piloted offers, and fits $\theta$ to all eight non-zero trial cells by maximum likelihood of the tilted-history model:
\begin{multline*}
 \widehat\theta=\arg\max_{\theta}\ \sum_{a=1}^{8}n_a\Bigl[\lambda_\theta(s_a)^{\top}m_a\\
 -\log\sum_{i=1}^{N}\exp\bigl\{\lambda_\theta(s_a)^{\top}\phi_i\bigr\}\Bigr],
\end{multline*}
where $m_a$ is the mean feature vector of the $n_a$ trial observations at offer $a$ and $\phi_i$ are the features of the $N$ stored weeks. The problem is concave, and when the data sit at the three nodes only it separates into the framework's per-node moment matching. The difference between the framework and the tilted trial then splits into two contrasts,
\[
 R_{\mathrm{FW}}-R_{\mathrm{tilt}}=\underbrace{(R_{\mathrm{FW}}-R_{\mathrm{joint}})}_{\text{placement}}
 +\underbrace{(R_{\mathrm{joint}}-R_{\mathrm{tilt}})}_{\text{sharing across offers}},
\]
the first at the same parameterization and the second on the same data. The study reuses the 384 laws of study C with fresh requests and data, the same observation counts, stored weeks, cache and exact enumeration. In the first run, the registered check that the joint fit reproduces the framework's per-node fit on pilot data found a numerical fault in the joint-fit implementation; it was corrected under an amendment registered before the re-run, the study was re-run in full with the same laws, data streams and analysis script, and all results reported here come from the re-run.

\begin{table}[ht]
\caption{Study E: relative regret within 20\% of the budget, 288 response laws $\times$ 8 replicates $\times$ 60 requests $\times$ 3 budgets; share of the difference to the tilted trial due to placing the observations at three offers, with 95\% law-bootstrap intervals.}
\label{tab:joint}
\centering\small\setlength{\tabcolsep}{2pt}
\begin{tabular}{lccc}
\toprule
Method & 1 wk & 2 wk & 4 wk\\
\midrule
\textbf{Framework} (pilot at three offers) & \textbf{0.198} & \textbf{0.154} & \textbf{0.123}\\
Joint fit of the trial, framework's dose curve & 0.253 & 0.206 & 0.183\\
History-anchored tilt of the trial, per offer & 0.405 & 0.289 & 0.211\\
Same-calendar RCT & 0.479 & 0.337 & 0.241\\
\midrule
Placement share & 0.265 [0.252, 0.279] & 0.382 [0.361, 0.405] & 0.683 [0.623, 0.745]\\
\bottomrule
\end{tabular}
\end{table}

\paragraph{Results of study E.}
All five registered hypotheses hold (Table~\ref{tab:prereg}).
\begin{itemize}
\item Sharing the tilt across offers along the framework's dose curve improves the trial by 0.152 [0.146, 0.158] at one week.
\item At the same parameterization, the framework's regret is below that of the joint fit by 0.055 [0.051, 0.059], 0.052 [0.048, 0.056] and 0.060 [0.053, 0.066] at one, two and four weeks, and its cost tables are more accurate (relative cost-table error 0.076 against 0.090 at one week; difference 0.014 [0.014, 0.015]).
\item At one week, sharing across offers accounts for 73.5\% [72.1, 74.8] of the framework's advantage over the tilted trial and placement for 26.5\% [25.2, 27.9]. The placement share grows with the pilot length (Table~\ref{tab:joint}) as the per-offer fits improve.
\item By risk component at one week (Table~\ref{tab:risk}), the framework's plans leave the band less often than the joint fit's, below (3.4\% against 6.5\% of requests) and above (4.1\% against 4.4\%), and by less (3.5\% against 4.6\% of the budget), and its valid plans lose less value (13.6\% against 16.5\%).
\end{itemize}

\begin{table}[ht]
\caption{Risk components at one week within 20\% of the budget (study E, 288 response laws; law means). Every arm issues a plan for more than 99.99\% of the requests. Below/above: share of requests whose plan has true expected spend below/above the band. Excursion: mean distance outside the band, in units of the budget, over the plans that leave it. Valid-plan loss: value lost by the plans inside the band relative to the band optimum.}
\label{tab:risk}
\centering\small\setlength{\tabcolsep}{4pt}
\begin{tabular}{lcccc}
\toprule
Method & Below & Above & Excursion & Valid-plan loss\\
\midrule
\textbf{Framework} & \textbf{0.034} & \textbf{0.041} & \textbf{0.035} & \textbf{0.136}\\
Joint fit of the trial & 0.065 & 0.044 & 0.046 & 0.165\\
Dose interpolation (same information) & 0.044 & 0.045 & 0.041 & 0.149\\
History-anchored tilt of the trial, per offer & 0.179 & 0.010 & 0.070 & 0.266\\
Same-calendar RCT & 0.272 & 0.010 & 0.082 & 0.288\\
\bottomrule
\end{tabular}
\end{table}

\paragraph{Weight diagnostics.}
For every group, rule and piloted offer, the study records the effective sample size $\mathrm{ESS}=(\sum_iw_i)^2/\sum_iw_i^2$ of the tilt over the $N=4{,}096$ stored weeks. It also records $N\max_iw_i$, whether the moment target lies in the coordinate-wise range (box) of the stored weeks' features, a necessary condition for lying in their convex hull, whether the fit reaches its tolerance, and the condition number of the feature covariance. Percentiles below are over all such cells at one week, independent condition.
\begin{itemize}
\item $\mathrm{ESS}/N$: median 0.951, 5th percentile 0.600, 1st percentile 0.422.
\item $N\max_iw_i$: median 2.05, 95th percentile 12.6, 99th percentile 26.4, so even at the 99th percentile no stored week carries more than 0.65\% of the mass.
\item The target lies outside the feature box in 0\% of the cells, and no fit fails to reach its tolerance.
\item The covariance condition number has median 7.8, 95th percentile 31.3 and 99th percentile 47.4.
\end{itemize}
Both registered diagnostic hypotheses hold. Within a law, the ESS deficit $1-\mathrm{ESS}/N$ is rank-correlated with the cost-table error of the framework (Spearman 0.415 [0.400, 0.431]). Fitting the tilt to exact pilot moments on 1,024 rather than 16,384 stored weeks raises the relative cost-table error by 0.006 [0.006, 0.006]. These are heuristic stability diagnostics: fits converge, weights are not concentrated, and the ESS deficit tracks the cost-table error. They do not check the interior-hull and eigenvalue conditions of Lemma~\ref{lem:joint-base} along the parameter path (a condition number is not an eigenvalue bound), which remain sufficient conditions of the analysis.

\paragraph{Study D: requests that change behaviour.}
It draws 360 laws: 216 window-dependent laws (three families), 72 laws whose response does not depend on the window, and 72 null laws. In the window-dependent families, a short paid window concentrates effort on its days, riders shift some activity away from unpaid days, and less active riders respond more. A request may also target the riders above or below the group's median of the previous no-offer week, whose response differs from the group's.
\begin{itemize}
\item \emph{Direct reuse} keeps the full-week, group-level calibration of Appendix~\ref{app:structural} for every request.
\item \emph{Request-aware reuse} runs the pilot with a random paid window per rider. It then fits a window-relative tilt on four features (completions and active days inside and outside the paid window), pooled over the pilot riders' windows. Stored weeks and pilot riders are conditioned on the request's selection. The tilt is fitted once per origin for every group, rule, selection and piloted offer, and reused by every request.
\end{itemize}
The objective is the uplift in weekly completions, and the payment follows the request's window and rule. The truth is exact.

\begin{table}[ht]
\caption{Study D: relative regret within 20\% of the budget on the 216 window-dependent laws (8 replicates $\times$ 60 requests $\times$ 3 budgets). Selected: requests that target a subpopulation chosen on the previous week.}
\label{tab:transfer}
\centering\small\setlength{\tabcolsep}{4pt}
\begin{tabular}{lcccc}
\toprule
Method & 1 wk & 2 wk & 4 wk & Selected, 1 wk\\
\midrule
\textbf{Request-aware reuse} & \textbf{0.380} & \textbf{0.329} & \textbf{0.297} & \textbf{0.395}\\
Direct reuse of the full-week calibration & 0.576 & 0.584 & 0.596 & 0.742\\
\bottomrule
\end{tabular}
\end{table}

\paragraph{Results of study D.}
All five registered hypotheses hold (Table~\ref{tab:prereg}).
\begin{itemize}
\item When the window changes behaviour, request-aware reuse lowers regret against direct reuse by 0.195 [0.189, 0.202], 0.255 [0.249, 0.261] and 0.300 [0.293, 0.306] at one, two and four weeks. For requests that target a selected subpopulation the reduction is 0.347 [0.338, 0.356] at one week.
\item On all 288 response laws, request-aware reuse has lower regret than the same-calendar trial by 0.129 [0.123, 0.136] at one week.
\item The reused tilt returns the same answers as a tilt re-fitted for every request in all 18,432 comparisons at each pilot length. With preparation charged it is 3.13$\times$ [3.09, 3.16] faster over 60 requests at one week (3.07$\times$ and 3.09$\times$ at two and four weeks).
\end{itemize}
The study makes the reuse rule explicit. A cached calibration is valid for the windows and populations on which the response law was measured. When the paid window or the selection changes behaviour, the cache is keyed by window-relative features and by the selection, and the pilot must randomize windows. Its cost is the pilot's calendar weeks, which are charged above.

\paragraph{Study F: value retained by population reduction.}
Study F prices the reduction stage against a common individual-level optimum. It draws 3,072 laws from the rider-level generator (1,024 in each response family) and scores every method with the exact per-type tables, so estimation error is absent and only the loss from giving one plan to a whole layer remains. Each request takes a window, one rule and two of the six groups, with three budgets; every budget binds. A rider's history statistic is its total completions over the previous four no-offer weeks. The individual-level optimum lets every rider receive its own offer; since riders of one latent type are exchangeable, it is a linear programme over types, computed exactly as the sup-convolution of the types' upper concave hulls, and every layered plan is a feasible point of it. The framework's reduction cuts each group into $L$ equal-share quantile layers of the history statistic with one offer per layer, and the layered optimum is found by exact enumeration. The endpoint is the retained share: the value of a method summed over a law's requests, divided by the individual optimum summed over the same requests.

\begin{table}[ht]
\caption{Study F: share of the individual-level optimum retained by $L$ history-quantile layers per group, within 20\% of the budget; 3,072 laws, 60 requests $\times$ 3 budgets each; 95\% law-bootstrap intervals. By family: saturating response on both channels, intensity only, threshold response near the goal.}
\label{tab:layers}
\centering\small\setlength{\tabcolsep}{4pt}
\begin{tabular}{lcccc}
\toprule
Layers per group & All laws & Both channels & Intensity & Threshold\\
\midrule
1 (group-level plan) & 0.825 [0.820, 0.831] & 0.938 & 0.944 & 0.594\\
2 & 0.900 [0.897, 0.904] & 0.966 & 0.971 & 0.764\\
3 & 0.926 [0.923, 0.928] & 0.974 & 0.978 & 0.825\\
4 & 0.938 [0.936, 0.940] & 0.977 & 0.982 & 0.856\\
\textbf{5} & \textbf{0.946 [0.944, 0.948]} & \textbf{0.979} & \textbf{0.984} & \textbf{0.875}\\
\bottomrule
\end{tabular}
\end{table}

\paragraph{Results of study F.}
\begin{itemize}
\item Five history layers retain 94.6\% [94.4, 94.8] of the individual-level optimum, against 82.5\% [82.0, 83.1] for one group-level plan: 0.120 [0.116, 0.125] more, which recovers 68.9\% [68.8, 69.0] of the group-level loss (Table~\ref{tab:layers}).
\item Every added layer raises the retained share: five layers retain 0.046 [0.044, 0.047] more than two, and the fourth and fifth layers add 0.013 [0.012, 0.013] and 0.008 [0.007, 0.008].
\item Retention is highest when the response is smooth (97.9\% and 98.4\% with five layers) and lowest under a threshold response near the goal (87.5\%), where the plan that pays depends most on a rider's distance to the tier.
\item No layered plan abstains in this band and every budget binds, so the loss measured here is the structural cost of reduction; estimation error, which studies A--E price, comes on top of it.
\end{itemize}

\section{Public retail data}
\label{app:sources}\label{app:details}\label{app:reproduction}\label{app:public-protocol}\label{app:joint-events}
\paragraph{M5.}
M5 \citep{makridakis2022m5accuracy} provides 3,000 series (300 per store) from an 873-day panel, 45 weekly origins and seven-day
targets; eligibility uses the preceding 182 days and five groups use positive-sales days in the preceding 28 days. Each forecast is
trained on 13 earlier weekly origins, and 2,048 aligned scenarios keep cross-day ranks. Scale tests use all 30,490 series at three
origins. Sales carry no incentive response, so M5 tests computation and settlement transfer only.

\section{Computation and approximation boundaries}
\label{app:matched-computation}\label{app:scale-crossover}\label{app:m5-envelope}
Complete time runs from loading the inputs to returning every requested plan and includes all preparation; model fitting is reported
separately. All methods in a comparison receive the same laws, menus, budget bands and the same permission to reuse inputs.

\paragraph{Reuse across requests.}
On 3,000 M5 series and 45 origins, reusing a valid law instead of regenerating it for each request lowers the median request time from
84.78 to 4.36 seconds (paired $20.00\times$) with identical scenarios and identical exact values on all 405 tables; charging the
80.63-second preparation to the framework, 8, 32 and 127 requests per origin are $1.66\times$, $3.92\times$ and $7.11\times$ faster. On all 30,490 series the times are 82.62 and 1,146.49 seconds ($13.88\times$), and all nine
optima agree. With the 1,063.87-second preparation charged, eight requests are \nmMfullEight$\times$ faster (median over the three origins).

\paragraph{Per-request mixed-integer programming.}\label{app:pwl-comparator}
The multiple-choice piecewise-linear MIP uses binaries $y_s$ and integers $t_s$ with
\begin{align*}
 &\textstyle\sum_{s\in g}y_s=1, &&0\le t_s\le(U_s-L_s)y_s,\\
 &\textstyle x=\sum_s(L_sy_s+t_s), &&\textstyle v=\sum_s[(a_sL_s+b_s)y_s+a_st_s]/D,
\end{align*}
solved to zero gap with a time limit per request; every answer is
checked in exact arithmetic. Over 396 conditions (180 M5, 216 controlled) with 33 budget bands each, the dynamic program reuses its
value curve across requests and is $5.52\times$ faster than the envelope MIP at 33 requests (raw dense allocation: $5.47\times$), with
the same optimal value on every request the MIP resolves. Each condition's 33 budgets were answered in five balanced method orders, a timing
repetition, which gives 65,340 requests per method; the raw and envelope MIP formulations leave 25 and 5
unresolved, while all dynamic-programming variants resolve every request. The gain over per-request MIP comes from reusing the value curve.

\paragraph{New payment rule with the same laws.}\label{app:active-day-transfer}
For volume tier $J_{is}$ and active days $D_{is}$, the active-day rule pays $7D_{is}\rho_{J_{is}}$ when $D_{is}\ge d$ and $J_{is}>0$.
The histogram
\[
 H_s(k,j)=\sum_{i\in G_l}\mathbf 1\{D_{is}=k,J_{is}=j\}
\]
gives the exact coefficients
\[
 A_{sj}(d)=7\sum_{k=d}^7kH_s(k,j)
\]
for all eight attendance requirements. Over 45 origins, 2,048 scenarios, five groups and 270 reward paths, all 14,580 answers equal
direct settlement, and shared preparation cuts complete time at every origin, by \nmActMean\% on average (\nmActMin--\nmActMax\%).

\section{Controlled response instance}
\label{app:controlled-response}\label{app:unified-response}\label{app:response-experiment}\label{app:multilaw-response}\label{app:strong-response-controls}
\paragraph{Support, settlement and objective.}
Five groups share 32 weekly trajectory labels (29 distinct trajectories) with daily completions
\[
 y_{kd}=\begin{cases}2(\lfloor k/8\rfloor+1)+((d+k)\bmod 3), & d<k\bmod 8,\\ 0, & \text{otherwise},\end{cases}
\]
and $C_k=\sum_dy_{kd}$, $D_k=\sum_d\mathbf 1\{y_{kd}>0\}$.
Groups use fixed-prize, per-completion, attendance, fixed-prize and per-completion rules. Each group has nine offers $a=3b+t$,
$b,t\in\{0,1,2\}$, with rung rates $(\ell_j+40b+10t)/100$, $\ell=(10,15,20)$. Fixed prizes and per-completion rates use completion
rungs $(14,28,42)$ and pay $\mathrm{rung}_j\,r_j$ or $C_kr_j$; attendance pays $7D_kr_j$ at rungs of $(2,4,6)$ active days; below the
first rung the payment is zero. With uplift $U_g$, mean payment $M_g$ and stability $A_g=\Pr(0.8M_g\le Y_g\le 1.2M_g)$, the composite
objective is
\[
 J=\sum_g[U_g+M_gA_g]/104;
\]
requests are $B=0,5,\ldots,160$ with band $[0.9B,1.1B]$ on expected total payment, over the
$9^5=59{,}049$ portfolios.

\paragraph{Response laws.}
With $x=2b+t$, base weights $w_{gk}\in\{1,\ldots,17\}$, elasticity $e_g$, threshold $\tau_g$ and rule index $j$, offer-conditioned
masses are proportional to $w_{gk}f_{gk}(x)$ with
\begin{align*}
 f^{\rm null}&=1,\\
 f^{\rm sat}&=128+e_g(j+1)\tfrac{x}{2+x}C_k,\\
 f^{\rm thr}&=128+e_g(j+1)(x-\tau_g)_+^2C_k,\\
 f^{\rm trade}&=256+e_g\bigl[(j+1)\tfrac{16x}{1+x}D_k+\tfrac{x(6-x)}6C_k\bigr].
\end{align*}
The null and threshold families add a shared weekly shock $z$ through the factor
\[
 32+u_g\bigl[zC_k+(1-z)(C_{\max}-C_k)\bigr];
\]
the no-offer law omits $f$.

\section{Settlement interface and exact allocation}
\label{app:accounting}\label{app:operational}\label{app:historical-components}
Table~\ref{tab:app-modules} lists the seven modules of Figure~\ref{fig:pipeline} (the figure's short names in parentheses), their inputs and outputs, and the instance used on
delivery data. Each module can be replaced by any estimator with the same input and output; allocation exactness says nothing about
the accuracy of Modules~1--5.

\begin{algorithm}[ht]
\caption{Answering a request $r=(\mathcal I,W,x,\Gamma,\mathcal B)$}\label{alg:request}
\small
\begin{algorithmic}[1]
\State \textbf{Once per origin:} fit M3 on pre-origin history; draw $R$ joint per-day samples for every rider (cross-day ranks from each rider's history); store.
\State \textbf{Once per rule family:} run a pilot of $n$ rider-weeks at three offers; fit $\lambda_{l,a}$ by Eq.~\eqref{eq:tilt}.
\State Select roster $\mathcal I$ (M1--M2) and form layers $\mathcal G$ with plan menus $\mathcal A_l$ from recent activity (M4).
\For{each layer $l$ and plan $a\in\mathcal A_l$}
  \State Compose window $W$ from stored samples; tilt to $q_{l,a}$ by reweighting the stored samples, or the trajectory types when they can be enumerated (M5).
  \State Compute award probabilities and $(\widehat c_l(a),\widehat v_l(a))$ for rule $\Gamma$ as the weighted sum of Eq.~\eqref{eq:generic-score} over the stored samples, the exact sum when types are enumerable; no redraw (M6).
\EndFor
\State Reduce menus by the rule admissible for $\mathcal B$ (M4); solve Eq.~\eqref{eq:allocation} on $\mathcal B$ by exact enumeration or the budget-grid program (M7).
\State \Return plans with attaining rewards, or report that no plan satisfies $\mathcal B$.
\end{algorithmic}
\end{algorithm}

\begin{table}[ht]
\caption{Four stages and seven modules. The last column gives the instance used on the delivery platform.}\label{tab:app-modules}\label{tab:generality-contract}
\centering\scriptsize
\begin{tabular}{@{}p{.09\linewidth}p{.15\linewidth}p{.34\linewidth}p{.29\linewidth}@{}}\toprule
Stage & Module & Input $\to$ output & Delivery instance\\\midrule
Predict & 1. Response ranking (Rank) & pre-decision context and past offers $\to$ ranking by incremental activity & doubly robust learner with a 5\% no-offer group \citep{dudik2011doubly}\\
 & 2. Roster (Recall) & ranking and eligibility $\to$ population fixed before attendance, incl.\ zero-activity riders & at least one delivery in the preceding 182 days\\
 & 3. Baseline forecast (Forecast) & histories and context $\to$ no-offer outcome law for the requested window & recurrent network, 60-day history, five-day zero-inflated Beta shares\\
Reduce & 4. Grouping (Activity layers) & baseline capacity $\to$ groups and non-dominated plan menus & ordered buckets merged to balance dispersion and spending stability\\
Integrate & 5. Response correction (Correct) & no-offer law and pilot or context-matched outcomes $\to$ offer-conditioned joint trajectories & exponential tilt, Eq.~\eqref{eq:tilt}; residual matching on incentive, time and weather\\
 & 6. Settlement (Score) & trajectories, payment rule, attainable rewards $\to$ cost, award probability, score and reward witness & Eqs.~\eqref{eq:fixed-payment}--\eqref{eq:rate-payment}\\
Allocate & 7. Budget allocation (Allocate) & group domains and both budget bounds $\to$ reward portfolio or infeasibility & two-sided Bellman recursion, Eq.~\eqref{eq:segment-bellman}\\\bottomrule
\end{tabular}
\end{table}

\paragraph{Settlement identities.}
For nested award events $E_1\supseteq\cdots\supseteq E_J$ and $E_{J+1}=\varnothing$, a highest-rung prize pays
\[
 g=\sum_jR_j\mathbf 1\{E_j\setminus E_{j+1}\};
\]
taking expectations gives Eq.~\eqref{eq:fixed-payment} without independence across riders. A rate paid on every completion satisfies, by telescoping,
\[
 g=\sum_j(\rho_j-\rho_{j-1})Y\mathbf 1\{E_j\},
\]
which gives Eq.~\eqref{eq:rate-payment}. For integrable quantile functions,
\begin{equation}
 \mathbb E Y(a)-\mathbb E Y(0)=\int_0^1[Q_a(v)-Q_0(v)]\,dv ,\label{eq:uplift-area}
\end{equation}
so uplift is integrated once on the final response curve, which avoids counting overlapping rung effects twice; the identity is an
accounting statement, not causal identification. Mean payment and award probability do not determine spending stability: if one or
two completions are equally likely, a fixed prize $3/2$ and a unit rate $1$ have the same mean payment and award probability, but
they fall in $[1.2,1.8]$ with probability one and zero, respectively. This is why M6 integrates joint trajectories.

\subsection{Exact allocation over attaining score segments}\label{app:segment-allocation}\label{app:generality}
Fix a conditional trajectory law and uplift $u_{lq}$ on each group/path $(l,q)$. On a straight reward path
$r(t)=r^-+t(r^+-r^-)$, $0\le t\le 1$, with expected-payment endpoints $\mu^-<\mu^+$,
\begin{equation}
 \begin{aligned}
 t(\mu)&=\frac{\mu-\mu^-}{\mu^+-\mu^-},\\
 S_s(\mu)&=k_s^\top r(t(\mu)),
 \end{aligned}\label{eq:segment-attainment}
\end{equation}
where $k_s$ is the payment coefficient vector of scenario $s$ under the rule. Each closed event $0.8\mu\le S_s(\mu)\le1.2\mu$ is an
interval, a point or empty, so its probability $p_{lq}(\mu)$ is piecewise constant and the score
\[
 v_{lq}(\mu)=u_{lq}/T_0+\lambda\mu p_{lq}(\mu)/T_0
\]
is piecewise affine ($T_0$ is a common normalizer). Gaps between paths are
kept, never interpolated, and a path is used only inside one response branch.

\paragraph{Same-spend reduction.}
For attainable sets $I_{lq}$ keep the envelope $F_l(b)=\max_{q:b\in I_{lq}}v_{lq}(b)$ with an attaining reward. With additive scores
this preserves the optimum under any constraint that depends only on the group amounts: each candidate can be replaced by a no-worse
witness at the same amount, and every witness was feasible. Lemma~\ref{lem:admissible-reduction} states which menu reductions are
admissible for which budget.

\begin{lemma}[Admissible menu reductions]\label{lem:admissible-reduction}
Let each layer $l$ choose one plan from a finite menu $\mathcal A_l$ with cost $c_l(a)\ge0$ and value $v_l(a)$, and let a portfolio be feasible when $\sum_lc_l(a_l)\in\mathcal B$.
(i) If $\mathcal B=[0,\bar b]$ (a sole cap), deleting a plan $a$ for which some $a'\in\mathcal A_l$ has $c_l(a')\le c_l(a)$ and $v_l(a')\ge v_l(a)$ (ties broken by a fixed order) preserves feasibility and the optimal value.
(ii) For any $\mathcal B$, deleting $a$ when some $a'$ has $c_l(a')=c_l(a)$ and $v_l(a')\ge v_l(a)$ preserves feasibility and the optimal value; in both cases the retained plan is an attaining witness.
(iii) Rule (i) is not admissible once $\mathcal B$ has a positive floor.
\end{lemma}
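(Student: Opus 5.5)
The plan is an exchange argument on optimal portfolios, handled separately for each part. Fix any optimal feasible portfolio $a^\star=(a^\star_1,\ldots,a^\star_L)$ of the unreduced problem. Write $\mathrm{OPT}$ for its value and $\mathrm{OPT}'$ for the optimum after deletion. Since the reduced menus are subsets of the original ones, every reduced portfolio is feasible for the original problem, so $\mathrm{OPT}'\le\mathrm{OPT}$ is immediate. For the reverse inequality I will build a reduced portfolio whose value is at least that of $a^\star$ and which is still feasible. I will do this one layer at a time: if $a^\star_l$ has been deleted, replace it by its dominating plan $a'$.

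A single dominance step may point to a plan that is itself deleted. So I first fix the retained plan concretely. Order each menu by the fixed tie-break order. Dominance ($c$ no larger, $v$ no smaller) is a preorder, and strict improvement in the lexicographic sense (cost, $-$value, tie order) cannot cycle on a finite menu. Following the dominance chain from $a^\star_l$ therefore ends at a plan that is never deleted, and I take that plan as the witness. Transitivity of $\le$ for costs and $\ge$ for values passes the inequalities along the chain. The tie-break removes the case of two mutually dominating plans deleting each other.

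For (i), replacing $a^\star_l$ by its witness $a'_l$ in every layer gives $\sum_l c_l(a'_l)\le\sum_l c_l(a^\star_l)\le\bar b$. The sum is also $\ge0$ because costs are nonnegative, so it lies in $[0,\bar b]$. The values satisfy $\sum_l v_l(a'_l)\ge\mathrm{OPT}$. Hence $\mathrm{OPT}'=\mathrm{OPT}$, the reduced problem remains feasible whenever the original was, and the witness attains the reported value.

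For (ii), the chain uses only equal-cost dominance, so the total cost is unchanged. Membership in an arbitrary $\mathcal B$ is therefore preserved, and the same conclusion follows.

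For (iii), a two-plan counterexample suffices. Take one layer with $\mathcal B=[\underline b,\bar b]$, $\underline b>0$, a plan $a$ with $c=\underline b$ and $v=0$, and a plan $a'$ with $c=0$ and $v=1$. Rule (i) deletes $a$, after which the only remaining portfolio has cost $0\notin\mathcal B$. The reduced problem is infeasible although the original is feasible, so rule (i) is not admissible.

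The only delicate point is the chain/tie-break argument that guarantees a retained witness. The rest is bookkeeping.
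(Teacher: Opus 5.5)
Your proposal is correct and follows the paper's proof. Both use the same replacement argument, with the fixed tie order ruling out cycles so that repeated replacement ends on retained plans; your lexicographic order on (cost, $-$value, tie order) makes this step explicit. Both then use unchanged total cost for (ii) and a one-layer, two-plan counterexample for (iii). The only difference is minor: your counterexample puts the dominating plan at cost $0$ and the dominated plan at cost $\underline b$, so it works for any band with a positive floor, whereas the paper uses the concrete instance $(c,v)=(1,2),(2,1)$ with $\mathcal B=[1.8,2.2]$.
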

\begin{proof}
(i) Replacing $a_l$ by $a'$ in a feasible portfolio lowers total cost weakly, so it stays in $[0,\bar b]$, and raises value weakly; repeating the replacement ends on retained plans, since the fixed tie order excludes cycles.
(ii) The replacement leaves total cost unchanged, so membership in any $\mathcal B$ is unchanged.
(iii) One layer with plans $(c,v)=(1,2),(2,1)$ and $\mathcal B=[1.8,2.2]$: rule (i) deletes the second plan, which is the only feasible one.
\end{proof}

\paragraph{Which rule each experiment uses, and an independent check.}
Rider-record budgets are sole caps, so M4 applies rule (i) and the exact reference enumerates the products of the rule-(i) menus. The
controlled study of Section~\ref{sec:synthetic} has two-sided bands and applies no reduction: it enumerates all $9^5$ portfolios. The
segment and envelope allocators below use only rule (ii). To rule out an error that a reduction shared by solver and reference would
hide, we re-solved every rider table small enough for complete enumeration (at most $2\times10^8$ portfolios: \nmEnumBrute\ of the
\enumRone\ realized and \nmEnumBrutePred\ of the \nmEnumPred\ predicted tables) by a separately written complete enumeration without any
reduction: feasibility and optimal value agree with the reference on every one of them (the returned plan can differ among
portfolios of equal value). The budget-grid program loses value on \enumDPpredNZ\ of the \enumDPpredN\ predicted five-layer main-grid instances (largest \enumDPpredMax\%) and never exceeds the cap. In the
controlled study, \enumEcalls\ enumeration calls covering \enumEbands\ budget bands differ in \enumEdiff, and rerunning the study with
the independent solver's choices reproduces every stored result bitwise.

\paragraph{Segment recursion.}
Let $\delta=T_0/h$, $o_l=\min_q\mu^-_{lq}/\delta$ and $\mu_l=\delta(x_l+o_l)$ with integer $x_l$. With $R$ equally weighted scenarios
and stability count $n$, candidate $q$ of group $l$ attains score $\alpha x+\beta$, with
\begin{align*}
 \alpha&=\lambda n/(Rh),\\
 \beta&=u_{lq}/T_0+\alpha o_l,
\end{align*}
at every integer spend index $x\in[L,U]$ of a segment. A physical band $[B^-,B^+]$ becomes the integer interval
\[
 \Bigl[\max\bigl\{0,\lceil B^-/\delta-\textstyle\sum_l o_l\rceil\bigr\},\ \lfloor B^+/\delta-\textstyle\sum_l o_l\rfloor\Bigr].
\]
Each segment contributes
\begin{equation}
 \begin{aligned}
 D_l(b)=\max_{q,s}\Bigl\{&\alpha_{qs}b+\beta_{qs}\\
 &+\max_{k\in[b-U_{qs},\,b-L_{qs}],\ 0\le k<N}\bigl[D_{l-1}(k)-\alpha_{qs}k\bigr]\Bigr\}.
 \end{aligned}
 \label{eq:segment-bellman}
\end{equation}
Minimum-cost options, closed endpoints, gaps and unreachable states are retained, and no no-offer action is added.

\begin{proposition}[Segment and dense allocation coincide]\label{prop:segment-dense}
For a fixed law, additive scores, group-wise menus and exact rational arithmetic, the recursion~\eqref{eq:segment-bellman} returns the
same value curve and the same canonical rewards as dense allocation on the same grid.
\end{proposition}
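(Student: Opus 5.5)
The proof compares the two recursions state by state. Dense allocation on the grid runs the usual multiple-choice Bellman recursion: for each integer spend index $b$ it sets $D^{\rm dense}_l(b)=\max_{q,x}\{v_{lq}(x)+D^{\rm dense}_{l-1}(b-x)\}$. Here $x$ ranges over the integer spend indices attainable by candidate $q$ of group $l$. Unreachable states are set to $-\infty$, and ties are broken by a fixed canonical order on $(q,x)$ and on the predecessor state.

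The argument is an induction on $l$, and it claims two things at once for every $b\in[0,N)$.
\begin{itemize}
\item $D_l(b)=D^{\rm dense}_l(b)$, including the value $-\infty$.
\item The canonical maximizer of the segment recursion~\eqref{eq:segment-bellman} and the canonical maximizer of the dense recursion select the same $(q,x)$ and the same predecessor state.
\end{itemize}
The base case $D_0$ is identical by construction: $0$ at the offset origin and $-\infty$ elsewhere. No no-offer action is added on either side.

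For the inductive step I use the structure established just before the statement. By~\eqref{eq:segment-attainment}, each stability event is an interval in $\mu$. So on a straight reward path the stability count $n$ is constant on each segment, and candidate $q$ scores exactly $\alpha_{qs}x+\beta_{qs}$ at every integer index $x\in[L_{qs},U_{qs}]$ of segment $s$. This formula holds only at the grid points the dense table uses, with closed endpoints, and gaps between segments are kept rather than interpolated.

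The dense maximum over $(q,x)$ therefore splits into a maximum over segments $(q,s)$ and, within each segment, over $x\in[L_{qs},U_{qs}]$. Substituting $k=b-x$ gives
\[
\max_{x}\{\alpha x+\beta+D_{l-1}(b-x)\}=\alpha b+\beta+\max_{k\in[b-U,b-L]}[D_{l-1}(k)-\alpha k],
\]
which is exactly~\eqref{eq:segment-bellman}. The constraint $0\le k<N$ matches the dense table's range. Because all quantities are rational and the arithmetic is exact, the two sides agree as exact numbers, not merely up to rounding. The same holds for the mapping of the physical band to the integer interval, which uses the stated ceiling and floor.

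The main obstacle is the claim that the two recursions return the same canonical rewards, not just the same value curve. When several $(q,s,k)$ attain the maximum, the recursion~\eqref{eq:segment-bellman} must break ties in the same order as the dense recursion.
\begin{itemize}
\item I will define the canonical order lexicographically on $(q,x)$, with $x=b-k$, and use it in both recursions.
\item The inner maximum must return the canonical $k$, for instance the one with the smallest $x$. If it is computed by a sliding-window monotone deque, the tie rule has to be fixed explicitly.
\item A candidate whose segments meet at the same $x$ must not be double counted. This cannot happen, because segments are disjoint closed integer ranges.
\item Minimum-cost options and unreachable states must carry $-\infty$ consistently on both sides.
\end{itemize}

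With matching backpointers, traceback from any feasible band state returns the same $(q,x)$ in each group. The attaining reward on path $q$ is then $r(t(\mu))$ with $\mu=\delta(x+o_l)$, a function of $(q,x)$ alone, so the canonical rewards coincide. The value curve over the band coincides as the maximum of equal tables over the same integer interval.
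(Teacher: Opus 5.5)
Your proposal is correct and follows essentially the same route as the paper. Both proofs run an induction over groups from $D_0(0)=0$ with $-\infty$ elsewhere, recover Eq.~\eqref{eq:segment-bellman} from the substitution $x=b-k$ on each affine segment, fix a common tie-breaking rule (exact arithmetic makes the two maximizer sets coincide), and backtrack through Eq.~\eqref{eq:segment-attainment}. The one difference is cosmetic: the paper fixes the canonical order as smallest $x$ (largest $k$ within a segment) and then smallest candidate index, rather than your lexicographic order on $(q,x)$. Your argument goes through unchanged with whichever order the dense allocator actually uses.
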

\begin{proof}

\emph{Step 1: the recursion.} Set $D_0(0)=0$ and all unreachable values to $-\infty$. On a segment, substitute $x=b-k$ in $D_{l-1}(b-x)+\alpha x+\beta$; the
feasible predecessor interval gives Eq.~\eqref{eq:segment-bellman}, and maximizing over segments keeps every action.

\emph{Step 2: ties.} Ties within a
segment keep the largest $k$ (smallest current $x$); ties across segments keep the smallest $x$, then the smallest candidate index. By
induction the choices equal the dense ones.

\emph{Step 3: rewards.} Backtracking from the smallest optimal feasible total through
Eq.~\eqref{eq:segment-attainment} returns rewards with the same score and physical spend; an unreachable band is infeasible for both.
\end{proof}
A monotone queue evaluates each window maximum in $O(N)$, so $K$ segments, $L$ groups and $N$ budget states need $O(KN+LK+LN)$
operations against $O(LN^2+KN+LK)$ for dense allocation; both store $O(LN)$ choices. If each group's candidate intervals are pairwise
disjoint, the envelope equals the raw menu and its construction can be skipped without changing the solution.\label{app:identity-fastpath}
Compared with graphical dynamic programming for piecewise-linear investment under an upper cap \citep{gafarov2014graphical,gafarov2016effective}
and nonconvex piecewise-linear knapsacks \citep{kameshwaran2009nonconvex}, the recursion handles reward domains with gaps, two-sided budgets,
rule-dependent event scores and a reward witness for every scored spend.\label{app:allocation-positioning}

\section{Four-term loss decomposition: proofs and stage bounds}
\label{app:loss-proofs}

\paragraph{Setting.}
Layers $l=1,\dots,L$ have finite plan menus $\mathcal A_l$; a portfolio is $a=(a_1,\dots,a_L)$ with
\begin{align*}
 c(a)&=\sum_lc_l(a_l),\\
 v(a)&=\sum_lv_l(a_l).
\end{align*}
Tables without a superscript are the true ones; $(c^{(k)},v^{(k)})$ for $k=\mathrm{syn},\mathrm{mm},\mathrm{int},\mathrm{dec}$ are the successive tables of the framework and $(\widehat c,\widehat v)=(c^{\mathrm{dec}},v^{\mathrm{dec}})$. Stage errors $\varepsilon^c_k,\varepsilon^v_k$ and their sums $E_c,E_v$ are as in Section~\ref{sec:theory}. The band and the value it carries are
\begin{align*}
 \mathcal B&=[\underline b,\bar b],\\
 \mathcal B_\delta&=[\underline b+\delta,\bar b-\delta],\\
 V^\star(\delta)&=\max\{v(a):c(a)\in\mathcal B_\delta\},\\
 \omega(\delta)&=V^\star(0)-V^\star(\delta)\ge0,
\end{align*}
with $V^\star(\delta)=-\infty$ if no portfolio qualifies. Cost and uplift entries are expectations of bounded functions of the trajectory (payment, completions), or differences of two such expectations, so they are linear in the law; the same holds for a stability indicator with a fixed band. Parts (i)--(ii) use only table errors and therefore also cover the mean-centred stability score, which is not linear in the law. In particular, if a layer law is the mixture of its members' laws, $n_l$ times its expectation equals the sum of the members' expectations exactly; population reduction therefore adds no error to expected tables and acts only through the menus, which enter $V^\star$ and $\omega$.

\subsection{Proof of Proposition~\ref{prop:four-term}}
\paragraph{(i) Upper bound.}

\emph{Step 1: table errors.} For every portfolio, the triangle inequality along the chain gives
\begin{align*}
 |c(a)-\widehat c(a)|
 &\le\sum_l\max_{a'\in\mathcal A_l}|c_l(a')-\widehat c_l(a')|\\
 &\le\sum_l\sum_k\max_{a'}|c^{(k)}_l(a')-c^{(k^-)}_l(a')|\\
 &=E_c,
\end{align*}
and likewise $|v(a)-\widehat v(a)|\le E_v$.

\emph{Step 2: feasibility.} Since $\widehat c(\widehat a)\in\mathcal B_{E_c}$,
\[
 c(\widehat a)\in[\widehat c(\widehat a)-E_c,\ \widehat c(\widehat a)+E_c]\subseteq\mathcal B .
\]

\emph{Step 3: value.} Let $a^\circ$ attain $V^\star(2E_c)$. Then $\widehat c(a^\circ)\in\mathcal B_{E_c}$, so $a^\circ$ is admissible for the planner, the admissible set is nonempty, and $\widehat v(\widehat a)\ge\widehat v(a^\circ)-\eta$. Hence
\begin{align*}
 v(\widehat a)
 &\ge\widehat v(\widehat a)-E_v && \text{(Step 1)}\\
 &\ge\widehat v(a^\circ)-\eta-E_v && \text{($\eta$-optimality)}\\
 &\ge v(a^\circ)-2E_v-\eta && \text{(Step 1)}\\
 &=V^\star(2E_c)-2E_v-\eta .
\end{align*}

\emph{Step 4: conclusion.} Subtracting from $V^\star(0)$ gives Eq.~\eqref{eq:four-term}. If $\omega(\delta)\le\kappa\delta$ on $[0,2E_c]$,
\[
 \omega(2E_c)\le2\kappa E_c=\sum_k2\kappa\varepsilon^c_k .
\]
If the comparator ranges over a finer reward domain than the planner's menu, the difference between the two optima at $2E_c$ is added to $\eta$. The proof uses only $|c(a)-\widehat c(a)|\le E_c$ and $|v(a)-\widehat v(a)|\le E_v$, so (i) also holds with the direct errors
\begin{align*}
 \bar E_c&=\sum_l\max_a|c_l(a)-\widehat c_l(a)|\le E_c,\\
 \bar E_v&\le E_v
\end{align*}
in their place. \qed

\paragraph{(ii) Lower bound.}

\emph{Step 1: the instance.} Fix stage errors $(\varepsilon^c_k,\varepsilon^v_k)_k$ with $E_c>0$ (the case $E_c=0$ is treated in Step~5), any $\Delta>0$, and a band with $\bar b-\underline b>4E_c$ and midpoint $m$. Order the stages as syn, mm, int, dec and write $s_k(\varepsilon)=\sum_{j\le k}\varepsilon_j$ for the partial sums along this order.
Layer~1 has plans $s$ and $h$ with true $(c,v)=(m,0)$ and $(\bar b-E_c,\Delta)$. Plan $s$ is exact at every stage; plan $h$ has exact value and cost
\[
 c^{(k)}(h)=\bar b-E_c+s_k(\varepsilon^c),
\]
so that $\widehat c(h)=\bar b$.
Layer~2 has plans $x,y$ of zero cost at every stage, true values $0$ and $2E_v$, and
\begin{align*}
 v^{(k)}(x)&=s_k(\varepsilon^v),\\
 v^{(k)}(y)&=2E_v-s_k(\varepsilon^v),
\end{align*}
so that $\widehat v(x)=\widehat v(y)=E_v$.
Each stage error is exactly the prescribed one.

\emph{Step 2: the benchmark.} In $\mathcal B_{2E_c}$ plan $h$ is excluded. For this truth
\begin{align*}
 V^\star(0)&=\Delta+2E_v &&\text{(attained by $(h,y)$ at spend $\bar b-E_c\in\mathcal B$)},\\
 V^\star(2E_c)&=2E_v &&\text{(via $(s,y)$)},\\
 \omega(2E_c)&=\Delta .
\end{align*}

\emph{Step 3: equality for the planner of (i).} With $\eta=0$ it cannot use $h$ because $\widehat c(h)=\bar b\notin\mathcal B_{E_c}$; breaking the tie in layer~2 toward $x$ it earns $0$, so Eq.~\eqref{eq:four-term} holds with equality.

\emph{Step 4: every planner.} Now take any deterministic planner that sees the final tables and the stage-error magnitudes and keeps the true spend in $\mathcal B$ for every consistent truth. The truth with $c(h)=\bar b+E_c$ (the same final tables, reached by decreasing stage steps) makes every portfolio with $h$ infeasible, so the planner never selects $h$. In layer~2 its choice is a function of identical final tables, so the truth that gives value $0$ to the chosen plan and $2E_v$ to the other is consistent; the planner earns $0$ against $V^\star(0)=\omega(2E_c)+2E_v$. Abstaining also earns $0$. A randomized planner loses at least $\omega(2E_c)+E_v$ in expectation, by averaging over the two value truths.
Without tightening, a planner that maximizes $\widehat v$ over $\widehat c\in\mathcal B$ selects $(h,\cdot)$ with estimated spend $\bar b$, whose true spend is $\bar b+E_c$ under the second truth.
Adding a layer with two exact zero-cost plans of value $0$ and $\eta$, an $\eta$-optimal planner may choose the first, so equality also holds for $\eta>0$.

\emph{Step 5: the case $E_c=0$.} Then $\mathcal B_{E_c}=\mathcal B$, plan $h$ would stay admissible and $\omega(0)=0$, so drop $h$: layer~1 has only the exact plan $s$ and layer~2 is as above. Now every cost is exact and $V^\star(0)=2E_v$, attained via $(s,y)$ at spend $m\in\mathcal B$. The planner of (i) breaking the tie toward $x$ earns $0$, and by the argument for layer~2 any deterministic planner earns $0$ against one of the two value truths. Hence the bound
\[
 \omega(0)+2E_v=2E_v
\]
is attained. \qed

\paragraph{(iii) Error floors of each omitted stage.}
All instances use exact tables for the stages that are kept.

\emph{Pilot correction.} Take one layer with two offers whose payment functions coincide, true offer laws $P_{a_1}$ equal to the no-offer law and $P_{a_2}$ with expected completions larger by $U>0$, and a band containing both costs. Without the correction both synthetic laws equal the no-offer law, the two offers have identical tables and estimated uplift $0$, and relabelling the offers makes any deterministic planner choose $a_1$: it loses the entire optimal uplift $U$.

\emph{History shape.} By Lemma~\ref{lem:cantelli} two laws with the same mean and variance can differ by $1/(1+t^2)$ in the probability of reaching $\mu+t\sigma$, so any estimate computed from the two moments alone errs by at least $1/(2(1+t^2))$ on one of them, and a fixed prize $R$ at that threshold is mispriced by at least $R/(2(1+t^2))$.

\emph{Trajectory integration.} Let $Y\in\{0,2\tau\}$ with probability $\tfrac12$ each, known exactly, and a prize $R$ paid when $Y\ge\tau$. Evaluating the rule at the mean gives
\[
 R\mathbf 1\{\E Y\ge\tau\}=R,
\]
whereas
\[
 \E[R\mathbf1\{Y\ge\tau\}]=R/2 .
\]
The probability that payment lies within $\pm20\%$ of its mean $R/2$ is $0$, while the plug-in value is $1$.

\emph{Joint allocation.} Under the cap $\mathcal B=[0,B]$ with $L\ge2$ layers, let layer~1 offer $(0,0)$ and $(B,U)$ and all other layers offer only $(0,0)$. The joint optimum earns $U$; splitting the budget equally gives layer~1 at most $B/L<B$, so it earns $0$. \qed

\begin{lemma}[Two moments do not fix a threshold probability]\label{lem:cantelli}
For $t>0$,
\[
 \sup\bigl|\Pr(S\ge\mu+t\sigma)-\Pr(S'\ge\mu+t\sigma)\bigr|=\frac1{1+t^2}
\]
over pairs of laws with common mean $\mu$ and variance $\sigma^2>0$, and the supremum is attained. For integer laws, $\{0{:}\tfrac12,2{:}\tfrac12\}$ and $\{0{:}\tfrac13,1{:}\tfrac12,3{:}\tfrac16\}$ both have mean and variance $1$ but $\Pr(S\ge2)$ equal to $\tfrac12$ and $\tfrac16$.
\end{lemma}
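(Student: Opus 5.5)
The plan is to prove the two-sided identity by matching an upper bound with an explicit attaining pair, and then to verify the integer example directly.

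\emph{Upper bound.} Both probabilities lie in $[0,1]$, so their difference is at most $\max\{\Pr(S\ge\mu+t\sigma),\Pr(S'\ge\mu+t\sigma)\}$. It therefore suffices to prove the one-sided Chebyshev (Cantelli) bound $\Pr(S\ge\mu+t\sigma)\le 1/(1+t^2)$ for every law with mean $\mu$ and variance $\sigma^2$.

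For the Cantelli bound, fix $u\ge0$. On the event $\{S-\mu\ge t\sigma\}$ we have $S-\mu+u\ge t\sigma+u>0$. Markov's inequality applied to $(S-\mu+u)^2$ then gives
\[
 \Pr(S-\mu\ge t\sigma)\le\frac{\E[(S-\mu+u)^2]}{(t\sigma+u)^2}=\frac{\sigma^2+u^2}{(t\sigma+u)^2}.
\]
The choice $u=\sigma/t$ makes the right side equal to $\sigma^2(1+1/t^2)/(\sigma^2(t+1/t)^2)=1/(1+t^2)$.

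\emph{Attainment.} The supremum needs one law that puts mass exactly $1/(1+t^2)$ on the threshold event and one that puts mass $0$ on it, both with mean $\mu$ and variance $\sigma^2$.

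For the first law, take $S=\mu+t\sigma$ with probability $1/(1+t^2)$ and $S=\mu-\sigma/t$ with probability $t^2/(1+t^2)$. Its mean is $\mu+\sigma\,(t-t)/(1+t^2)=\mu$. Its variance is $\sigma^2\,(t^2+1)/(1+t^2)=\sigma^2$. It reaches the threshold with probability exactly $1/(1+t^2)$.

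For the second law, take the same two-point construction with any $s\in(0,t)$ in place of $t$: $S'=\mu+s\sigma$ with probability $1/(1+s^2)$ and $S'=\mu-\sigma/s$ otherwise. It has the same mean and variance, but its support lies strictly below $\mu+t\sigma$, so $\Pr(S'\ge\mu+t\sigma)=0$.

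This construction is the step needing the most care. A symmetric choice such as $\mu\pm\sigma$ would fail when $t\le1$, because the upper atom would then reach the threshold. The second law must keep all its mass below $\mu+t\sigma$ while still carrying variance $\sigma^2$. Letting the lower atom $\mu-\sigma/s$ absorb the spread handles every $t>0$. The difference for this pair equals $1/(1+t^2)$, so the supremum is attained.

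\emph{Integer example.} This is direct arithmetic.
\begin{itemize}
\item For $\{0{:}\tfrac12,2{:}\tfrac12\}$: the mean is $1$, $\E S^2=2$, so the variance is $1$, and $\Pr(S\ge2)=\tfrac12$.
\item For $\{0{:}\tfrac13,1{:}\tfrac12,3{:}\tfrac16\}$: the mean is $\tfrac12+\tfrac12=1$, $\E S^2=\tfrac12+\tfrac32=2$, so the variance is $1$, and $\Pr(S\ge2)=\tfrac16$.
\end{itemize}
Here $\mu+t\sigma=2$ corresponds to $t=1$. The gap between the two laws is $\tfrac13$, which lies below the supremum $\tfrac12$, consistent with the bound.
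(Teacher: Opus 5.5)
Your proof is correct and follows essentially the paper's argument. Both use the shifted-Markov (Cantelli) bound optimized at $u=\sigma/t$, the extremal two-point law at $\{\mu+t\sigma,\mu-\sigma/t\}$, and a second two-point law whose upper atom lies below the threshold (your parameterization with $s\sigma$ is the paper's $\alpha$), with the integer example checked by direct arithmetic.
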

\begin{proof}

\emph{Upper bound.} For $u>0$,
\[
 \Pr(S-\mu\ge t\sigma)\le\frac{\E[(S-\mu+u)^2]}{(t\sigma+u)^2}=\frac{\sigma^2+u^2}{(t\sigma+u)^2},
\]
which equals $1/(1+t^2)$ at $u=\sigma/t$; both probabilities lie in $[0,1/(1+t^2)]$.

\emph{Equality.} $S=\mu+t\sigma$ with probability $1/(1+t^2)$ and $S=\mu-\sigma/t$ otherwise has mean $\mu$ and variance $\sigma^2$; $S'$ on $\{\mu+\alpha,\mu-\sigma^2/\alpha\}$ with $0<\alpha<t\sigma$ and mass $\sigma^2/(\alpha^2+\sigma^2)$ at $\mu+\alpha$ has the same moments and never reaches $\mu+t\sigma$. The integer example is checked directly.
\end{proof}

\subsection{Bounds on the four stage terms}
\begin{lemma}[Award probabilities carry every threshold rule]\label{lem:award-interface}
Let $Y\in\{0,\dots,M\}$ count eligible completions, $D$ be a common attendance or eligibility event, $E_j=\{Y\ge\tau_j\}\cap D$ with integers $1\le\tau_1<\dots<\tau_J$, and $G_F(k)=F(Y\ge k,D)$. For laws $F,F'$ write $\Delta(k)=|G_F(k)-G_{F'}(k)|$.
For fixed prizes $0=R_0\le R_1\le\dots\le R_J$ paid at the highest rung reached,
\[
 |\E_Fg-\E_{F'}g|\le\sum_j(R_j-R_{j-1})\Delta(\tau_j)\le R_J\max_j\Delta(\tau_j).
\]
For rates $0=\rho_0\le\dots\le\rho_J$ paid on all eligible completions at the highest rung reached,
\begin{align*}
 |\E_Fg-\E_{F'}g|
 &\le\sum_j(\rho_j-\rho_{j-1})\Bigl\{\tau_j\Delta(\tau_j)+\sum_{k>\tau_j}\Delta(k)\Bigr\}\\
 &\le\rho_JM\max_{k\ge\tau_1}\Delta(k).
\end{align*}
Expected completions satisfy
\[
 |\E_FY-\E_{F'}Y|\le\sum_{k=1}^M|F(Y\ge k)-F'(Y\ge k)|.
\]
\end{lemma}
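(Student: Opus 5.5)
The approach is to write each payment as a telescoping sum over tiers, so that its expectation becomes a signed combination of the tail probabilities $G_F(k)$. The difference between $F$ and $F'$ is then bounded term by term using the triangle inequality.

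\emph{Fixed prizes.} By the settlement identity of Appendix~\ref{app:accounting}, $g=\sum_j R_j\mathbf 1\{E_j\setminus E_{j+1}\}$. Summation by parts rewrites this as $g=\sum_j(R_j-R_{j-1})\mathbf 1\{E_j\}$ with $R_0=0$, exactly as in Eq.~\eqref{eq:fixed-payment}. Taking expectations gives $\E_Fg=\sum_j(R_j-R_{j-1})G_F(\tau_j)$, since $F(E_j)=G_F(\tau_j)$. The increments $R_j-R_{j-1}$ are nonnegative, so the triangle inequality yields the first bound. Bounding each $\Delta(\tau_j)$ by its maximum, the increments telescope to $R_J$, which gives the second bound.

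\emph{Rates.} Eq.~\eqref{eq:rate-payment} gives $g=\sum_j(\rho_j-\rho_{j-1})Y\mathbf 1\{E_j\}$. The key step is the layer-cake identity
\[
 \E_F[Y\mathbf 1\{Y\ge\tau,D\}]=\tau\,G_F(\tau)+\sum_{k>\tau}G_F(k),
\]
which follows by writing $Y=\sum_{k\ge1}\mathbf 1\{Y\ge k\}$ on the event $\{Y\ge\tau\}\cap D$. On that event each indicator with $k\le\tau$ equals $1$, which produces $\tau$ copies of $\mathbf 1\{Y\ge\tau,D\}$. Each indicator with $k>\tau$ gives $\mathbf 1\{Y\ge k,D\}$. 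Subtracting the identity for $F'$ and applying the triangle inequality with nonnegative weights $\rho_j-\rho_{j-1}$ gives the first bound.

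For the coarse bound, $\tau_j\Delta(\tau_j)+\sum_{\tau_j<k\le M}\Delta(k)$ has $\tau_j+(M-\tau_j)=M$ terms of the form $\Delta(k)$ with $k\ge\tau_j\ge\tau_1$. Here we use $G(k)=0$ for $k>M$, so the sum stops at $M$. Each block is therefore at most $M\max_{k\ge\tau_1}\Delta(k)$, and the increments telescope to $\rho_J$.

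\emph{Expected completions.} Apply $\E Y=\sum_{k=1}^M\Pr(Y\ge k)$ without the event $D$, then use the triangle inequality.

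\emph{Main obstacle.} Each step is routine. The only points that need care are bookkeeping: the boundary count in the layer-cake identity must produce exactly $\tau$ copies, and the tail sum must be truncated at $M$ so that the constant $M$ comes out.
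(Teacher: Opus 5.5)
Your proposal is correct and matches the paper's proof. Both write the payment as nonnegative increments $(R_j-R_{j-1})$ or $(\rho_j-\rho_{j-1})$ times $\mathbf 1\{E_j\}$ or $Y\mathbf 1\{E_j\}$, use the tail-sum identity $\E_F[Y\mathbf 1\{E_j\}]=\tau_jG_F(\tau_j)+\sum_{k>\tau_j}G_F(k)$, and finish with the triangle inequality and a count of at most $M$ terms. The only cosmetic difference is that the paper obtains this identity by summation by parts over $p(y)=F(Y=y,D)$, whereas you expand $Y=\sum_k\mathbf 1\{Y\ge k\}$ on the event; this is the same computation.
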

\begin{proof}
Nested events give
\begin{align*}
 g&=\sum_j(R_j-R_{j-1})\mathbf1\{E_j\} &&\text{(fixed prizes)},\\
 g&=\sum_j(\rho_j-\rho_{j-1})Y\mathbf1\{E_j\} &&\text{(rates)}.
\end{align*}
With $p(y)=F(Y=y,D)$, summation by parts gives
\begin{align*}
 \E_F[Y\mathbf1\{E_j\}]
 &=\sum_{y\ge\tau_j}yp(y)\\
 &=\tau_jG_F(\tau_j)+\sum_{k>\tau_j}G_F(k),
\end{align*}
and, in the same way,
\[
 \E_FY=\sum_{k\ge1}F(Y\ge k).
\]
Take differences; at most $M-\tau_j$ terms follow $\tau_j$.
\end{proof}
Thus each table error of a threshold rule is controlled by errors in award probabilities $G(k)$. When $D$ requires attendance on several days, $G$ depends on the joint law of completions and attendance, not on their separate marginals: over two days with $D=\{\text{both days active}\}$ and $\tau_1=3$, the laws $\tfrac12\delta_{(2,1)}+\tfrac12\delta_{(3,2)}$ and the uniform law on $\{(2,1),(2,2),(3,1),(3,2)\}$ of (completions, active days) share both marginals but give $G(3)=\tfrac12$ and $\tfrac14$. This is why integration uses joint trajectories.

\begin{lemma}[Synthetic term]\label{lem:syn}
On a finite trajectory space let $p_0$ be the no-offer law, $\phi$ the matched features, $P$ a true offer law with $P\ll p_0$, and $Q_\lambda\propto p_0e^{\lambda^\top\phi}$. If $\lambda^\star$ solves $\E_{Q_{\lambda^\star}}\phi=\E_P\phi$, then
\[
 \mathrm{KL}(P\Vert p_0)=\mathrm{KL}(P\Vert Q_{\lambda^\star})+\mathrm{KL}(Q_{\lambda^\star}\Vert p_0),
\]
and for every function $f$ with range $\mathrm{osc}(f)=\max f-\min f$,
\[
 |\E_Pf-\E_{Q_{\lambda^\star}}f|\ \le\ \mathrm{osc}(f)\sqrt{\tfrac12\{\mathrm{KL}(P\Vert p_0)-\mathrm{KL}(Q_{\lambda^\star}\Vert p_0)\}} .
\]
\end{lemma}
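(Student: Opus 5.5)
The plan is to prove the Pythagorean identity first, by direct computation from the exponential-family form of $Q_{\lambda^\star}$, and then to obtain the deviation bound from it through Pinsker's inequality.

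\emph{The identity.} On the finite space, write $\log Q_{\lambda^\star}=\log p_0+\lambda^{\star\top}\phi-A(\lambda^\star)$, where $A(\lambda)=\log\E_{p_0}e^{\lambda^\top\phi}$ is the log-partition function. Since $P\ll p_0$ and $Q_{\lambda^\star}$ has the same support as $p_0$, both divergences below are finite. We then have
\[
 \mathrm{KL}(P\Vert p_0)-\mathrm{KL}(P\Vert Q_{\lambda^\star})=\E_P\bigl[\log Q_{\lambda^\star}-\log p_0\bigr]=\lambda^{\star\top}\E_P\phi-A(\lambda^\star).
\]
By the same computation under $Q_{\lambda^\star}$ itself,
\[
 \mathrm{KL}(Q_{\lambda^\star}\Vert p_0)=\lambda^{\star\top}\E_{Q_{\lambda^\star}}\phi-A(\lambda^\star).
\]
The two right-hand sides coincide because of the moment condition $\E_{Q_{\lambda^\star}}\phi=\E_P\phi$, and this gives the identity. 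The only care needed is on states where $P$ vanishes. There the convention $0\log0=0$ applies, and the expectation under $P$ is still a finite sum, so nothing breaks.

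\emph{The bound.} For any $f$ and any constant $m$, the inequality $|\E_Pf-\E_Qf|=|\E_P(f-m)-\E_Q(f-m)|\le\sup|f-m|\cdot\|P-Q\|_1$ holds. Choosing $m=(\max f+\min f)/2$ makes $\sup|f-m|=\mathrm{osc}(f)/2$, so the bound becomes $\mathrm{osc}(f)\,\mathrm{TV}(P,Q)$, where $\mathrm{TV}$ is half the $L^1$ distance. Pinsker's inequality gives $\mathrm{TV}(P,Q_{\lambda^\star})\le\sqrt{\tfrac12\mathrm{KL}(P\Vert Q_{\lambda^\star})}$. Substituting $\mathrm{KL}(P\Vert Q_{\lambda^\star})=\mathrm{KL}(P\Vert p_0)-\mathrm{KL}(Q_{\lambda^\star}\Vert p_0)$ from the identity yields the stated inequality.

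\emph{Main obstacle.} There is no real analytic obstacle here; the point to watch is making sure the identity needs only the existence of a moment-matching $\lambda^\star$, not uniqueness and not interiority. The computation above uses only the moment equation, so it meets this requirement. One might add a remark that $Q_{\lambda^\star}$ is then the I-projection of $p_0$ onto the moment set, since the identity shows it minimizes $\mathrm{KL}(\cdot\Vert p_0)$ over laws with the same moments. That remark is not needed for the lemma.
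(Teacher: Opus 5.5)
Your proposal is correct and follows the paper's proof essentially step for step. The identity comes from writing $\log(Q_{\lambda^\star}/p_0)=\lambda^{\star\top}\phi-\log Z$, taking expectations under $P$ and under $Q_{\lambda^\star}$, and equating the two via the moment condition; the bound comes from $|\E_Pf-\E_{Q_{\lambda^\star}}f|\le\mathrm{osc}(f)\,\mathrm{TV}(P,Q_{\lambda^\star})$ followed by Pinsker's inequality, with your centring and support checks making explicit what the paper leaves implicit.
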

\begin{proof}

\emph{Step 1: the identity.}
\begin{align*}
 \mathrm{KL}(P\Vert p_0)-\mathrm{KL}(P\Vert Q_{\lambda^\star})
 &=\E_P\log(Q_{\lambda^\star}/p_0)\\
 &=\lambda^{\star\top}\E_P\phi-\log Z,
\end{align*}
and $\mathrm{KL}(Q_{\lambda^\star}\Vert p_0)=\lambda^{\star\top}\E_{Q_{\lambda^\star}}\phi-\log Z$ is the same number.

\emph{Step 2: the bound.} Then $|\E_Pf-\E_Qf|\le\mathrm{osc}(f)\,\mathrm{TV}(P,Q)$ and Pinsker's inequality $\mathrm{TV}\le\sqrt{\mathrm{KL}/2}$ \citep[Lemma~2.5]{tsybakov2009introduction}.
\end{proof}
The same identity applied to any $Q$ with the matched moments shows $\mathrm{KL}(Q\Vert p_0)\ge\mathrm{KL}(Q_{\lambda^\star}\Vert p_0)$: the tilt is the least change of history that reproduces the experiment \citep{csiszar1975idivergence}. Without the correction the same argument gives only $\mathrm{osc}(f)\sqrt{\mathrm{KL}(P\Vert p_0)/2}$; matching removes $\mathrm{KL}(Q_{\lambda^\star}\Vert p_0)$ from the radicand, and the term vanishes when the response is a tilt in $\phi$. The bound covers piloted offers. At an unpiloted offer, $P^{\mathrm{syn}}$ uses the interpolated parameter
\[
 \lambda^\star_t=(1-t)\lambda^\star_j+t\lambda^\star_k
\]
of its piloted neighbours $j,k$ (Section~\ref{sec:theory}), so the synthetic term also contains the interpolation bias $|\E_Pf-\E_{Q_{\lambda^\star_t}}f|$, which the lemma does not bound and which persists however large the pilot. For example, with $Y\sim$ Bernoulli$(\tfrac12)$ under no offer, $\phi(Y)=Y$ and a true response $P_a(Y=1)=\sigma(a^2)$ inside the tilt family ($\sigma$ the logistic function), pilots at $a=0.2$ and $0.6$ give $\lambda^\star=0.04$ and $0.36$ exactly, but the interpolated $\lambda=0.20$ at $a=0.4$ differs from the true $0.16$, leaving an error of $\sigma(0.20)-\sigma(0.16)\approx0.0099$ in $\E Y$ with exact moments and exact integration. More pilot offers, or smoothness of $\lambda$ in the offer, reduce this part.

\begin{lemma}[Moment-matching term]\label{lem:mm}
Keep $p_0$ fixed and let $\Sigma_\lambda=\mathrm{Cov}_{Q_\lambda}(\phi)$ satisfy $\kappa_\phi I\preceq\Sigma_\lambda\preceq\Lambda_\phi I$, $\kappa_\phi>0$, on the segment between $\lambda^\star$ and the fitted $\widehat\lambda$, whose moments $\widehat m$ are the pilot means. Then for every $f$,
\begin{align*}
 |\E_{Q_{\widehat\lambda}}f-\E_{Q_{\lambda^\star}}f|
 &\le\tfrac12\mathrm{osc}(f)\,\Lambda_\phi^{1/2}\,\|\widehat\lambda-\lambda^\star\|_2\\
 &\le\tfrac12\mathrm{osc}(f)\,\frac{\Lambda_\phi^{1/2}}{\kappa_\phi}\,\|\widehat m-m^\star\|_2 .
\end{align*}
If each of $J_p$ piloted offers per layer receives $n$ independent trajectories and $\phi\in[0,1]^d$, then with probability at least $1-\delta$, simultaneously for all layers and piloted offers,
\[
 \|\widehat m-m^\star\|_2\le\sqrt{d\log(2dLJ_p/\delta)/(2n)}.
\]
Offers whose $\lambda$ is interpolated linearly between piloted offers, or between $\lambda=0$ at the no-offer plan and a piloted offer, satisfy the first inequality (given the covariance bounds on their own segment) with $\lambda^\star$ the interpolated synthetic parameter and $\|\widehat\lambda-\lambda^\star\|_2$ at most the larger error of their two neighbours; the comparison is with $P^{\mathrm{syn}}$, not with the projection of the offer's own population moments.
\end{lemma}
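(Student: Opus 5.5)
The lemma has three claims: a perturbation bound in $\lambda$, its conversion to moments, and a concentration step. I would handle them in that order, each along the straight segment $\lambda_s=\lambda^\star+s(\widehat\lambda-\lambda^\star)$, $s\in[0,1]$.

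\emph{First inequality.} The derivative of a tilted expectation is a covariance:
\[
\tfrac{d}{ds}\E_{Q_{\lambda_s}}f=\mathrm{Cov}_{Q_{\lambda_s}}\bigl(f,\phi^\top(\widehat\lambda-\lambda^\star)\bigr).
\]
Cauchy--Schwarz bounds this covariance by $\mathrm{sd}(f)\,\mathrm{sd}(\phi^\top\Delta)$ with $\Delta=\widehat\lambda-\lambda^\star$. Popoviciu's inequality gives $\mathrm{sd}(f)\le\tfrac12\mathrm{osc}(f)$. The upper covariance bound gives $\mathrm{sd}(\phi^\top\Delta)=(\Delta^\top\Sigma_{\lambda_s}\Delta)^{1/2}\le\Lambda_\phi^{1/2}\|\Delta\|_2$. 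Integrating over $s\in[0,1]$ yields the first inequality.

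\emph{Second inequality.} The moment map $m(\lambda)=\E_{Q_\lambda}\phi$ is the gradient of the log-partition function, so its Jacobian is $\Sigma_\lambda$. Hence
\[
\widehat m-m^\star=\Bigl(\int_0^1\Sigma_{\lambda_s}\,ds\Bigr)\Delta .
\]
The averaged matrix is $\succeq\kappa_\phi I$, which gives $\|\Delta\|_2\le\|\widehat m-m^\star\|_2/\kappa_\phi$. This step uses that $\widehat\lambda$ reproduces $\widehat m$ exactly and that $\lambda^\star$ reproduces the population moments $m^\star$, both at the same fixed $p_0$.

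\emph{Concentration.} Each coordinate of $\widehat m$ is a mean of $n$ independent $[0,1]$ variables. Hoeffding gives $|\widehat m_r-m^\star_r|\le\sqrt{\log(2/\delta')/(2n)}$ with probability $1-\delta'$. A union bound over $d$ coordinates, $L$ layers and $J_p$ offers with $\delta'=\delta/(dLJ_p)$ bounds every coordinate simultaneously. Summing the squares of $d$ coordinates gives the stated $\sqrt{d\log(2dLJ_p/\delta)/(2n)}$.

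\emph{Interpolated offers.} Write $\widehat\lambda_t=(1-t)\widehat\lambda_j+t\widehat\lambda_k$ and $\lambda^\star_t$ similarly, or with $\lambda_j=0$ exactly on both sides for the no-offer endpoint. Then $\widehat\lambda_t-\lambda^\star_t$ is a convex combination of the neighbours' errors, so its norm is at most the larger one. Applying the first inequality on this offer's own segment, under its assumed covariance bounds, finishes the claim. The comparison is with $P^{\mathrm{syn}}$ because $\lambda^\star_t$ is exactly the parameter used there.

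\emph{Main obstacle.} The only delicate point is the second inequality. The lower eigenvalue bound must hold on the whole segment, not only at the endpoints, because it is applied through the averaged Jacobian. I would state that assumption explicitly rather than derive it; the hypothesis as written already grants it.
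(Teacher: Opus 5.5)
Your proposal is correct and follows the paper's proof step for step. It takes the derivative along the segment as a covariance and bounds it by Cauchy--Schwarz with the upper eigenvalue bound. It converts parameter error to moment error through the integrated Jacobian $\int_0^1\Sigma_{\lambda_s}\,ds\succeq\kappa_\phi I$, applies coordinatewise Hoeffding with a union bound over the $dLJ_p$ coordinates, and uses convexity of the interpolation weights for unpiloted offers.
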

\begin{proof}

\emph{Step 1: derivative along the segment.} With $u=\widehat\lambda-\lambda^\star$ and $\lambda_t=\lambda^\star+tu$,
\begin{align*}
 \Bigl|\frac{d}{dt}\E_{Q_{\lambda_t}}f\Bigr|
 &=|\mathrm{Cov}_{\lambda_t}(f,u^\top\phi)|\\
 &\le\mathrm{sd}(f)\,(u^\top\Sigma_{\lambda_t}u)^{1/2}\\
 &\le\tfrac12\mathrm{osc}(f)\Lambda_\phi^{1/2}\|u\|.
\end{align*}

\emph{Step 2: from parameters to moments.} Also
\[
 \widehat m-m^\star=\int_0^1\Sigma_{\lambda_t}dt\,u
\]
with
\[
 \int_0^1\Sigma_{\lambda_t}dt\succeq\kappa_\phi I,
\]
so $\|u\|\le\|\widehat m-m^\star\|/\kappa_\phi$.

\emph{Step 3: the rate.} Hoeffding's inequality \citep{hoeffding1963probability} for each of the $dLJ_p$ coordinates with a union bound gives the rate.

\emph{Step 4: interpolated offers.} Interpolation weights are convex, so the interpolated error is a convex combination of the neighbours' errors.
\end{proof}
If the no-offer law is estimated from $N_0$ history weeks, changing the base at fixed $\lambda$ moves every expectation by at most
\[
 2\,\mathrm{osc}(f)e^{s(\lambda)}\mathrm{TV}(\widehat p_0,p_0),
\]
where
\[
 s(\lambda)=\max_z\lambda^\top\phi(z)-\min_z\lambda^\top\phi(z),
\]
since, for $w=e^{\lambda^\top\phi}$,
\[
 \|Q-Q'\|_1\le2\sum_zw|p_0-\widehat p_0|/\E_{p_0}w;
\]
and $\E\,\mathrm{TV}(\widehat p_0,p_0)\le\tfrac12\sqrt{|\mathcal Z|/N_0}$. With abundant history ($N_0\gg n$) this part is small; the measured moment term includes it.

This bound holds at fixed $\lambda$. The method, however, re-fits the tilt on $\widehat p_0$ so that it matches the pilot moments again, and the change of $\lambda$ must be controlled as well.

\begin{lemma}[Estimated base and re-fitted tilt]\label{lem:joint-base}
Let $Q=Q_\lambda[p_0]$ be the tilt of $p_0$ with moments $m$, let $Q'=Q_\lambda[\widehat p_0]$ keep $\lambda$ on the estimated base, and let $\widehat Q=Q_{\widehat\lambda}[\widehat p_0]$ be re-fitted to the same moments $m$. Assume support overlap, that is, $m$ lies in the interior of the convex hull of $\phi(\mathrm{supp}\,\widehat p_0)$. Let the feature covariance of the tilts of $\widehat p_0$ satisfy $\widehat\kappa I\preceq\Sigma\preceq\widehat\Lambda I$, $\widehat\kappa>0$, on the segment between $\lambda$ and $\widehat\lambda$. Put
\[
 \tau=2e^{s(\lambda)}\mathrm{TV}(\widehat p_0,p_0),\qquad D_\phi=\Bigl(\sum_i\mathrm{osc}(\phi_i)^2\Bigr)^{1/2}.
\]
Then for every $f$,
\[
 |\E_{\widehat Q}f-\E_Qf|\ \le\ \mathrm{osc}(f)\,\tau\Bigl(1+\tfrac12D_\phi\,\frac{\widehat\Lambda^{1/2}}{\widehat\kappa}\Bigr).
\]
\end{lemma}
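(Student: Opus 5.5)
The plan is to split the error with the triangle inequality through the intermediate law $Q'=Q_\lambda[\widehat p_0]$:
\[
 |\E_{\widehat Q}f-\E_Qf|\le|\E_{Q'}f-\E_Qf|+|\E_{\widehat Q}f-\E_{Q'}f| .
\]
The first term is the fixed-$\lambda$ base change bounded just before the lemma. The second is a moment-matching step on the fixed estimated base $\widehat p_0$, of the type handled in Lemma~\ref{lem:mm}. The two pieces should produce the summands $\mathrm{osc}(f)\tau$ and $\mathrm{osc}(f)\tau\cdot\tfrac12D_\phi\widehat\Lambda^{1/2}/\widehat\kappa$.

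\emph{First term.} With $w=e^{\lambda^\top\phi}$ the stated inequality gives
\[
 \|Q-Q'\|_1\le2\sum_zw|p_0-\widehat p_0|/\E_{p_0}w .
\]
Since $\max w/\E_{p_0}w\le e^{s(\lambda)}$ and $\sum_z|p_0-\widehat p_0|=2\mathrm{TV}$, this yields $\mathrm{TV}(Q,Q')\le2e^{s(\lambda)}\mathrm{TV}(\widehat p_0,p_0)=\tau$. Hence
\[
 |\E_{Q'}f-\E_Qf|\le\mathrm{osc}(f)\,\mathrm{TV}(Q,Q')\le\mathrm{osc}(f)\tau .
\]
I will fix the normalisation constant carefully so that exactly $\tau$, rather than $2\tau$, appears here.

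\emph{Second term.} Both $Q'$ and $\widehat Q$ are tilts of the same base $\widehat p_0$, with parameters $\lambda$ and $\widehat\lambda$. Support overlap guarantees that $\widehat\lambda$ exists, since $m$ is interior to the hull, and the dual is strictly convex. I then repeat Steps~1--2 of Lemma~\ref{lem:mm} on the segment $\lambda\to\widehat\lambda$ under the covariance bounds $\widehat\kappa,\widehat\Lambda$. This gives
\[
 |\E_{\widehat Q}f-\E_{Q'}f|\le\tfrac12\mathrm{osc}(f)\widehat\Lambda^{1/2}\widehat\kappa^{-1}\|\E_{\widehat Q}\phi-\E_{Q'}\phi\|_2 .
\]
The moment gap equals $\|m-\E_{Q'}\phi\|_2=\|\E_Q\phi-\E_{Q'}\phi\|_2$, because $\widehat Q$ reproduces $m=\E_Q\phi$. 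I bound it coordinatewise by the first step applied to $f=\phi_i$: $|\E_Q\phi_i-\E_{Q'}\phi_i|\le\mathrm{osc}(\phi_i)\tau$. Taking the Euclidean norm gives $D_\phi\tau$, and adding the two terms gives the claim.

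\emph{Main obstacle.} The delicate points are bookkeeping rather than ideas. The first is that the constant in $\mathrm{TV}(Q,Q')\le\tau$ must match the definition of $\tau$; I will check whether $\|\cdot\|_1$ or TV carries the factor $2$. The second is confirming that Lemma~\ref{lem:mm}'s derivative argument holds with $\widehat p_0$ as the base. It does, because that argument uses only the exponential-family structure and the covariance bounds along the segment. If the constant comes out as $2\tau$ in the first term, I would instead write $\mathrm{osc}(f)\mathrm{TV}$ with TV defined as half the $\ell_1$ distance, which recovers $\tau$.
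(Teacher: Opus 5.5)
Your proposal is correct and follows the paper's proof step for step. You split through $Q'$ by the triangle inequality and bound the fixed-$\lambda$ base change by $\mathrm{osc}(f)\tau$; you bound the moment shift coordinatewise by $D_\phi\tau$; and you apply Lemma~\ref{lem:mm} on the fixed base $\widehat p_0$ along the segment from $\lambda$ to $\widehat\lambda$. Your constant check is also right: the displayed inequality gives $\|Q-Q'\|_1\le4e^{s(\lambda)}\mathrm{TV}(\widehat p_0,p_0)=2\tau$, so $\mathrm{TV}(Q,Q')\le\tau$ when TV is half the $\ell_1$ distance.
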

\begin{proof}
\emph{Step 1: base change at fixed $\lambda$.} By the display above, $\mathrm{TV}(Q',Q)\le\tau$, hence $|\E_{Q'}f-\E_Qf|\le\mathrm{osc}(f)\tau$.

\emph{Step 2: moment shift.} Let $m'$ be the moments of $Q'$. Each coordinate moves by at most $\mathrm{osc}(\phi_i)\tau$, so $\|m'-m\|_2\le D_\phi\tau$.

\emph{Step 3: re-fit on the estimated base.} $Q'$ and $\widehat Q$ are tilts of the same base $\widehat p_0$ with moments $m'$ and $m$. Lemma~\ref{lem:mm}, whose proof uses only a fixed base and the covariance bounds on the segment, gives
\begin{align*}
 |\E_{\widehat Q}f-\E_{Q'}f|
 &\le\tfrac12\mathrm{osc}(f)\,\frac{\widehat\Lambda^{1/2}}{\widehat\kappa}\,\|m-m'\|_2\\
 &\le\tfrac12\mathrm{osc}(f)\,\frac{\widehat\Lambda^{1/2}}{\widehat\kappa}\,D_\phi\tau .
\end{align*}
The triangle inequality concludes.
\end{proof}
The three conditions are the quantities that the diagnostics of Appendix~\ref{app:strong} measure: support overlap (the target inside the range of the stored weeks), the weight range $e^{s(\lambda)}$, which bounds $N\max_iw_i$ on $N$ equally weighted stored weeks, and the conditioning $\widehat\Lambda/\widehat\kappa$ of the feature covariance. If the estimated base loses support on which the matched moments rely, $\widehat\kappa$ becomes small or $\widehat\lambda$ fails to exist, and the bound grows accordingly; stability is claimed only where these quantities are controlled. The rates of Corollary~\ref{cor:rates} are stated for a given base; on the event that the conditions hold, the estimated base adds this term. When the corrected law is the tilt of the empirical stored-week law, as in Appendices~\ref{app:structural} and~\ref{app:strong}, the weighted sum over the stored weeks is exact for that law, and the finite history enters through $\widehat p_0$ in this lemma rather than through the integration term.

\paragraph{Experiment length.}
Lemma~\ref{lem:mm} needs the experiment to fix $d$ moments at $J_p$ offers per layer, so its error scales as $\sqrt{d/n}$. A trial that estimates the complete law of every offer directly avoids the synthetic term, but the analogous bound for empirical frequencies is $\tfrac12\sqrt{|\mathcal Z|/n_{\mathrm R}}$ per offer and must hold at all $|\mathcal A_l|$ offers. Comparing these sufficient conditions, the trial needs roughly
\[
 \frac{|\mathcal A_l||\mathcal Z|}{J_pd}
\]
times as many rider-weeks for the same estimation error (48 in the controlled design with $|\mathcal Z|=32$, $|\mathcal A_l|=9$, $J_p=3$, $d=2$), up to logarithms and $\Lambda_\phi^{1/2}/\kappa_\phi$. The price is the synthetic term, which does not shrink with experiment length; this is a comparison of upper bounds, not a lower bound for trials.

\begin{lemma}[Integration term]\label{lem:int}
If integration averages $S$ independent draws from the moment-matched law for each of the $N=\sum_l|\mathcal A_l|$ layer--plan pairs, then with probability at least $1-\delta$ every cost and value entry is within $\mathrm{osc}_l\sqrt{\log(4N/\delta)/(2S)}$ of its exact value, where $\mathrm{osc}_l$ is the range of the integrand for layer $l$; hence
\[
 \varepsilon^c_{\mathrm{int}},\ \varepsilon^v_{\mathrm{int}}\ \le\ \sum_l\mathrm{osc}_l\sqrt{\log(4N/\delta)/(2S)}.
\]
Summing exactly over a finite trajectory space gives $\varepsilon_{\mathrm{int}}=0$.
\end{lemma}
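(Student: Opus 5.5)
The statement to prove is Lemma~\ref{lem:int}: a Hoeffding bound for each table entry, a union bound over entries, and a sum over layers. No earlier results are needed beyond the definitions of the tables and of $\varepsilon_{\mathrm{int}}$.

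\emph{One entry.} Fix a layer--plan pair $(l,a)$ and one of its two entries, cost or value. The integrated entry is $\frac1S\sum_{s=1}^S f(Z^{(s)})$, where the $Z^{(s)}$ are independent draws from the moment-matched law $P^{\mathrm{mm}}_{l,a}$. The integrand $f$ is the layer's payment $g_l(\cdot,a)$ or its outcome $h_l(\cdot,a)$. Each summand lies in an interval of length $\mathrm{osc}_l$ and has mean equal to the exact entry $c^{\mathrm{mm}}_l(a)$ or $v^{\mathrm{mm}}_l(a)$. Hoeffding's inequality \citep{hoeffding1963probability} gives
\[
\Pr\bigl(|\bar f-\E f|\ge\epsilon\bigr)\le 2\exp\bigl(-2S\epsilon^2/\mathrm{osc}_l^2\bigr).
\]

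\emph{Union bound.} There are $N$ layer--plan pairs and two entries per pair, so $2N$ events in all. Setting each failure probability to $\delta/(2N)$ gives
\[
\epsilon_l=\mathrm{osc}_l\sqrt{\log(4N/\delta)/(2S)}.
\]
With probability at least $1-\delta$, every entry is within its $\epsilon_l$ of the exact value. The union bound needs no independence between pairs, so it does not matter whether draws are shared or separate across plans.

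\emph{Summing over layers.} On that event, $\max_{a\in\mathcal A_l}|c^{\mathrm{int}}_l(a)-c^{\mathrm{mm}}_l(a)|\le\epsilon_l$, and likewise for $v$. Summing over $l$ as in the definition of $\varepsilon^c_{\mathrm{int}}$ and $\varepsilon^v_{\mathrm{int}}$ gives the stated bound. Under exact summation over a finite trajectory space, $P^{\mathrm{int}}=P^{\mathrm{mm}}$ by construction, so every table difference is zero.

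\emph{Where care is needed.} The argument is standard; the only delicate point is the scope of the integrands. The Hoeffding step requires each integrand to be a fixed bounded function of the trajectory, such as payment or completions. A stability score centred at a sample mean is not of this form, because the centring itself depends on the draws. I would therefore restrict the statement to fixed-integrand entries, matching the scope remark made for Corollary~\ref{cor:rates}.

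I would also state that if $\mathrm{osc}_l$ differs between the cost and value integrands, the larger one is used, or each entry type is bounded separately with its own range.
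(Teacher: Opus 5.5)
Your proposal is correct and matches the paper's proof: Hoeffding's inequality per entry, a union bound over the $2N$ cost and value entries at level $\delta/(2N)$ each (which gives the $\log(4N/\delta)$), and the observation that no independence across entries is needed, so draws shared across plans are allowed. Your restriction to fixed integrands also agrees with the scope remark the paper places immediately after the lemma.
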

\begin{proof}
Hoeffding's inequality for each of the $2N$ entries and a union bound. Draws may be shared across plans (common random numbers): the bound is per entry and needs no independence across entries.
\end{proof}

\paragraph{Scope of the rates.}
Lemmas~\ref{lem:syn}--\ref{lem:int} concern fixed integrands; Appendix~\ref{app:rates} gives the rates in expectation, including the conditions under which they extend to the mean-centred stability score and an example in which they fail. The controlled experiment uses pure uplift as its primary endpoint for this reason.

\begin{lemma}[Decision term]\label{lem:dec}
A budget-grid program with step $h$ that rounds each layer cost to the nearest grid point and optimizes exactly over the rounded tables is an exact planner ($\eta=0$) on tables with $\varepsilon^c_{\mathrm{dec}}\le Lh/2$ and $\varepsilon^v_{\mathrm{dec}}=0$. Rounding increments above each layer's minimum upward gives $0\le c^{\mathrm{dec}}-c^{\mathrm{int}}<h$ per layer, so $\varepsilon^c_{\mathrm{dec}}\le Lh$; exact enumeration and the segment program on its declared monetary grid have $\varepsilon_{\mathrm{dec}}=0$.
\end{lemma}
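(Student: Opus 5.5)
The grid program is a separate planner, so I will treat its effect as one more perturbation of the tables and check that the optimization over that perturbed table is exact. Let $\tilde c_l(a)$ denote the rounded cost of plan $a$ in layer $l$, and define the decision tables by $c^{\mathrm{dec}}_l=\tilde c_l$ and $v^{\mathrm{dec}}_l=v^{\mathrm{int}}_l$. The program only moves costs onto grid points and never alters values, so $\varepsilon^v_{\mathrm{dec}}=0$ holds by definition.

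The grid program runs the multiple-choice Bellman recursion over the budget states $x\in h\mathbb Z$ (shifted by each layer's offset). Once every layer cost lies on the grid, the set of reachable total spends is exactly the grid sums $\sum_l\tilde c_l(a_l)$. The recursion keeps, for every reachable state, the best value, and it reads feasibility off the band intersected with the grid. It therefore returns $\max\{v^{\mathrm{dec}}(a):c^{\mathrm{dec}}(a)\in\mathcal B'\}$ exactly, where $\mathcal B'$ is whatever tightened band the planner is handed, for instance $\mathcal B_{E_c}$. This gives $\eta=0$. This step is the one needing care: I must check that band endpoints are converted exactly, as in the integer-interval formula before Eq.~\eqref{eq:segment-bellman}, so that no grid sum inside the band is dropped and none outside is admitted. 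I would also appeal to Proposition~\ref{prop:segment-dense} to cover the segment variant.

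For the cost error, nearest rounding moves each entry by at most $h/2$. Taking the maximum over the menu in each layer and summing over layers gives $\varepsilon^c_{\mathrm{dec}}=\sum_l\max_a|\tilde c_l(a)-c^{\mathrm{int}}_l(a)|\le Lh/2$. For upward rounding of increments above the layer minimum $o_l$, which is itself placed exactly as the offset, each increment $c-o_l$ is replaced by $h\lceil (c-o_l)/h\rceil$. This yields $0\le \tilde c_l-c^{\mathrm{int}}_l<h$, hence $\varepsilon^c_{\mathrm{dec}}\le Lh$.

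Exact enumeration works on the int tables directly, so its decision tables equal them and $\varepsilon_{\mathrm{dec}}=0$. For the segment program, costs are by construction declared on its monetary grid $\mu_l=\delta(x_l+o_l)$, and the scores are evaluated at those points. By Proposition~\ref{prop:segment-dense} it coincides with dense exact allocation on that grid, so it introduces no table change and again $\varepsilon_{\mathrm{dec}}=0$.
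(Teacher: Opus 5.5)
Your proposal is correct and follows the paper's own argument. Rounding moves each layer cost by at most $h/2$, or by less than $h$ when increments are rounded up; values are untouched, so $\varepsilon^v_{\mathrm{dec}}=0$; and the program optimizes exactly over the rounded tables, so $\eta=0$. Your check that the band endpoints are converted exactly to the grid, and your appeal to Proposition~\ref{prop:segment-dense} for the segment program, spell out steps the paper's one-line proof takes for granted.
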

\begin{proof}
Rounding moves each layer cost by at most $h/2$ (respectively less than $h$) and leaves values unchanged; the program maximizes exactly over the rounded problem.
\end{proof}

\paragraph{Cross-day dependence.}
Suppose a member's day-activity indicators are conditionally independent Bernoulli$(p(\xi))$ given a shared factor $\xi$ over $H$ days, so $A$ counts active days, and let $A^{\mathrm{ind}}\sim\mathrm{Bin}(H,\E p(\xi))$ be the independent-day law with the same daily marginals. Then, by Jensen's inequality,
\[
 \Pr(A=H)=\E p(\xi)^H\ge(\E p(\xi))^H=\Pr(A^{\mathrm{ind}}=H),
\]
strictly unless $p(\xi)$ is constant. Since
\[
 \E A=\E A^{\mathrm{ind}}=\sum_{d\ge1}\Pr(A\ge d),
\]
the two survival functions must cross: composing days independently understates the full-attendance requirement and overstates some lower one.
\subsection{Rates of the moment-matching and integration terms}\label{app:rates}
The rates quoted in Section~\ref{sec:theory} hold in expectation for every table entry with a fixed integrand. They do not hold for a stability score whose band is centred at the law's own mean payment: in the two examples below such an entry keeps an error of constant size while the pilot moments converge. This subsection states the rates with the conditions they need.

\paragraph{Setting.}
Fix a layer and a piloted offer. The trajectory space $\mathcal Z$ is finite, the base $p_0$ has full support, and the features $\phi:\mathcal Z\to[0,1]^d$ do not all lie in one hyperplane. Write
\begin{align*}
 Q_\lambda&\propto p_0e^{\lambda^\top\phi},\\
 \Sigma_\lambda&=\mathrm{Cov}_{Q_\lambda}(\phi),\\
 m(\lambda)&=\E_{Q_\lambda}\phi .
\end{align*}
The pilot mean $\widehat m$ averages $\phi$ over $n$ independent trajectories drawn from the true offer law $P$; when riders of one week share a shock, $n$ counts independent weeks or blocks, not rider-weeks. With $m^\star=\E_P\phi$,
\begin{align*}
 P^{\mathrm{syn}}&=Q_{\lambda(m^\star)},\\
 P^{\mathrm{mm}}&=Q_{\lambda(\widehat m)};
\end{align*}
the base is held fixed, and its estimation adds the term stated after Lemma~\ref{lem:mm}. The closed ball $\bar B(m^\star,\rho)$ lies in the interior of $\mathrm{conv}\,\phi(\mathcal Z)$, and $\kappa_\rho$ is the smallest eigenvalue of $\Sigma_{\lambda(m)}$ over that ball. If $\widehat m$ is not interior, the fit is replaced by any law or by abstention, and the entry stays in its range.

The map $\lambda\mapsto m(\lambda)$ is a bijection from $\R^d$ onto the interior of $\mathrm{conv}\,\phi(\mathcal Z)$ with a smooth inverse, so $\lambda(m)$ is well defined and $\kappa_\rho>0$. Indeed, $u^\top\Sigma_\lambda u=\mathrm{Var}_{Q_\lambda}(u^\top\phi)>0$ for $u\neq0$. For interior $m$, the strictly convex function
\[
 \psi(\lambda)=\log\sum_zp_0(z)e^{\lambda^\top\phi(z)}-\lambda^\top m
\]
satisfies, for every unit vector $u$,
\[
 \psi(tu)\ge t\{\max_zu^\top\phi(z)-u^\top m\}+\log\min_zp_0(z)\to\infty ,
\]
so it has a unique minimizer, where $m(\lambda)=m$. The inverse function theorem gives smoothness, and continuity on the compact ball gives $\kappa_\rho>0$.

\paragraph{Mean-centred stability.}
For a law $q$ and a payment $y:\mathcal Z\to[0,\bar y]$ let
\begin{align*}
 M(q)&=\E_qy,\\
 S(q)&=\{z:\alpha M(q)\le y(z)\le\beta M(q)\}
\end{align*}
with $0<\alpha<1<\beta$ (the paper uses $0.8$ and $1.2$). The monetary stability score is $F(q)=M(q)\,q(S(q))$, and its \emph{margin}
\[
 \gamma(q)=\min_z\min\{|y(z)-\alpha M(q)|,|y(z)-\beta M(q)|\}
\]
is the distance from the band's endpoints to the nearest payment value.

\begin{corollary}[Rates of the moment-matching and integration terms]\label{cor:rates}\label{cor:resources}
In the setting above let $M^\star=M(P^{\mathrm{syn}})$, $\gamma=\gamma(P^{\mathrm{syn}})$ and $r=\min\{\rho,\gamma\kappa_\rho^{1/2}/(\beta\bar y)\}$.

(a) \emph{Fixed integrand.} For every $f$ (payment, completions, uplift, an award indicator, or stability in a band with a fixed centre),
\begin{align*}
 \E|\E_{P^{\mathrm{mm}}}f-\E_{P^{\mathrm{syn}}}f|&\le\mathrm{osc}(f)\bigl\{\sqrt d/(4\sqrt{\kappa_\rho n})+2de^{-2n\rho^2/d}\bigr\},
\end{align*}
and for $S$ independent draws from $P^{\mathrm{mm}}$,
\[
 \E|\E_{P^{\mathrm{int}}}f-\E_{P^{\mathrm{mm}}}f|\le\mathrm{osc}(f)/(2\sqrt S).
\]
Summed over the finite menu, $\E\varepsilon_{\mathrm{mm}}=O(n^{-1/2})$ and $\E\varepsilon_{\mathrm{int}}=O(S^{-1/2})$ for these entries; exact summation gives $\varepsilon_{\mathrm{int}}=0$.

(b) \emph{Centred band with a margin.} If $\gamma>0$, then
\[
 \E|F(P^{\mathrm{mm}})-F(P^{\mathrm{syn}})|\le\bar y\bigl\{\sqrt d/(2\sqrt{\kappa_\rho n})+2de^{-2nr^2/d}\bigr\}.
\]
Given $P^{\mathrm{mm}}$ with margin $\gamma'>0$,
\[
 \E|F(P^{\mathrm{int}})-F(P^{\mathrm{mm}})|\le\bar y\bigl\{1/\sqrt S+2e^{-2S\gamma'^2/(\beta\bar y)^2}\bigr\}.
\]

(c) \emph{No margin.} If $\gamma=0$, $\mathrm{Cov}_{P^{\mathrm{syn}}}(\phi,y)\neq0$ and $\mathrm{Cov}_P(\phi)$ is nonsingular, then, as $n\to\infty$,
\[
 \E|F(P^{\mathrm{mm}})-F(P^{\mathrm{syn}})|\to\tfrac12M^\star(w_-+w_+)>0,
\]
where $w_-$ and $w_+$ are the $P^{\mathrm{syn}}$-masses of the trajectories that pay exactly $\alpha M^\star$ and $\beta M^\star$.

For an offer whose tilt is interpolated between piloted offers, (a) and (b) hold with the constants of Lemma~\ref{lem:mm-rate}(c).
\end{corollary}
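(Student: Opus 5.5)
The plan is to treat every entry as a smooth function of the pilot mean $\widehat m$ on a good event, and to charge the complement to the range of the entry. The tool throughout is the inverse map $m\mapsto\lambda(m)$ constructed in the setting paragraph. For a fixed integrand $f$, the chain rule gives $\nabla_m\E_{Q_{\lambda(m)}}f=\Sigma_{\lambda(m)}^{-1}\mathrm{Cov}_{Q_{\lambda(m)}}(\phi,f)$. By Cauchy--Schwarz in the $\Sigma^{-1}$ geometry, its norm is at most $\mathrm{sd}(f)\,\kappa_\rho^{-1/2}\le\tfrac12\mathrm{osc}(f)\kappa_\rho^{-1/2}$ whenever $m$ lies in $\bar B(m^\star,\rho)$. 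The ball is convex, so on the event $G=\{\|\widehat m-m^\star\|\le\rho\}$ the mean value theorem gives $|\E_{P^{\mathrm{mm}}}f-\E_{P^{\mathrm{syn}}}f|\le\tfrac12\mathrm{osc}(f)\kappa_\rho^{-1/2}\|\widehat m-m^\star\|$. This is a sharper, moment-side version of Lemma~\ref{lem:mm}.

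\emph{Part (a).} Split the expectation on $G$ and its complement. On $G$, bound $\E\|\widehat m-m^\star\|\le(\E\|\widehat m-m^\star\|^2)^{1/2}\le\sqrt{d/(4n)}$, since each coordinate of $\phi$ lies in $[0,1]$ and so has variance at most $\tfrac14$. This yields the term $\mathrm{osc}(f)\sqrt d/(4\sqrt{\kappa_\rho n})$. For the complement, apply Hoeffding per coordinate at deviation $\rho/\sqrt d$ with a union bound, giving $\Pr(G^c)\le2de^{-2n\rho^2/d}$. On $G^c$ the error is at most $\mathrm{osc}(f)$, including when the fit is replaced or the planner abstains. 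The integration bound is the standard one, $\E|\bar f_S-\E f|\le\mathrm{sd}(f)/\sqrt S\le\mathrm{osc}(f)/(2\sqrt S)$. Summing over the finite menu gives the $O(n^{-1/2})$ and $O(S^{-1/2})$ rates. For interpolated offers, $\lambda$ is a convex combination of its neighbours' parameters, so I would carry the same argument through with the constants of Lemma~\ref{lem:mm-rate}(c).

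\emph{Part (b).} Apply the gradient bound to $y$ itself to get $|M(P^{\mathrm{mm}})-M^\star|\le\tfrac12\bar y\kappa_\rho^{-1/2}\|\widehat m-m^\star\|$. On the event $\|\widehat m-m^\star\|\le r$, the definition of $r$ makes $\beta|M(P^{\mathrm{mm}})-M^\star|<\gamma$. Then no payment value crosses $\alpha M$ or $\beta M$, so $S(P^{\mathrm{mm}})=S(P^{\mathrm{syn}})=:S$ is a fixed set. On this event $F$ is a product of two fixed-integrand entries, and I would use
\[
|F(P^{\mathrm{mm}})-F(P^{\mathrm{syn}})|\le|\Delta M|+\bar y\,|\Delta q(S)|.
\]
Each term contributes at most $\tfrac12\bar y\kappa_\rho^{-1/2}\|\widehat m-m^\star\|$, which accounts for the factor $\sqrt d/2$ instead of $\sqrt d/4$. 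The complement is handled by Hoeffding at radius $r$, and the error there is at most $\bar y$. The integration statement has the same structure. The empirical mean $M(P^{\mathrm{int}})$ stays within $\gamma'/\beta$ of $M(P^{\mathrm{mm}})$ except with probability $2e^{-2S\gamma'^2/(\beta\bar y)^2}$, by Hoeffding on $y\in[0,\bar y]$. On that event the band set is again fixed, and the two $S^{-1/2}$ terms add to $\bar y/\sqrt S$.

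\emph{Part (c).} This is the main obstacle, because the entry is discontinuous exactly at $m^\star$. I would argue as follows.
\begin{itemize}
\item[1.] Since $\mathrm{Cov}_P(\phi)$ is nonsingular, the central limit theorem makes $\sqrt n(\widehat m-m^\star)$ asymptotically normal and nondegenerate.
\item[2.] By the delta method, $\sqrt n\,(M(P^{\mathrm{mm}})-M^\star)$ is asymptotically normal with variance $g^\top\mathrm{Cov}_P(\phi)g$, where $g=\Sigma^{-1}\mathrm{Cov}_{P^{\mathrm{syn}}}(\phi,y)\neq0$. So the variance is positive, and $\Pr(M(P^{\mathrm{mm}})>M^\star)\to\tfrac12$.
\item[3.] When $M$ moves up by a vanishing amount, the trajectories paying exactly $\beta M^\star$ leave the band while those paying $\alpha M^\star$ stay. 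When $M$ moves down, the reverse happens.
\item[4.] All other band memberships are eventually stable, and $M$ and $q$ move only $O_p(n^{-1/2})$.
\item[5.] Hence $|\Delta F|\to M^\star w_+$ or $M^\star w_-$, each with limiting probability $\tfrac12$. Bounded convergence ($|\Delta F|\le\bar y$) then gives the limit $\tfrac12M^\star(w_-+w_+)$.
\end{itemize}
The delicate points are the nondegeneracy in step~2 and excluding the null event $\Delta M=0$. Both follow from the continuity of the limiting normal law.
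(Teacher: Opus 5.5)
Your proposal is correct and follows the paper's proof essentially step for step: the gradient bound $\|\Sigma^{-1}\mathrm{Cov}(\phi,f)\|_2\le\tfrac12\mathrm{osc}(f)\kappa_\rho^{-1/2}$, combined with the $L^2$ and Hoeffding bounds on $\widehat m$, is Lemma~\ref{lem:mm-rate}(a,b); the frozen band set and the product split of $F$ are Lemma~\ref{lem:band}(i); and the CLT, delta method and one-sided band limits are how the paper proves (c) via Lemma~\ref{lem:band}(ii). One slip in step 3 of (c) needs fixing: when the mean moves up to $M'>M^\star$, the trajectories that leave the band are those paying $\alpha M^\star<\alpha M'$, while those paying $\beta M^\star\le\beta M'$ stay, and the reverse holds when the mean moves down; this does not change the limit, since each sign has limiting probability $\tfrac12$ and $\tfrac12M^\star(w_-+w_+)$ is symmetric in $w_\pm$.
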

Under (b) the rate is $n^{-1/2}$, but it sets in only once $n\gtrsim d\beta^2\bar y^2/(\kappa_\rho\gamma^2)$, so a small margin delays it. Neither (b) nor (c) affects the upper bound of Proposition~\ref{prop:four-term}(i), which uses the measured table errors, whatever their size. Centring the band at a quantity fixed before estimation, such as the requested spend share or the mean issued with the plan, turns the score into a fixed-integrand entry covered by (a).

\begin{lemma}[Moment-matching term in expectation]\label{lem:mm-rate}
Let $g_f(m)=\E_{Q_{\lambda(m)}}f$.
(a) For $m,m'\in\bar B(m^\star,\rho)$,
\[
 |g_f(m')-g_f(m)|\le\tfrac12\mathrm{osc}(f)\,\kappa_\rho^{-1/2}\|m'-m\|_2 .
\]
(b) $\E\|\widehat m-m^\star\|_2\le\sqrt{d/(4n)}$, and $\Pr(\|\widehat m-m^\star\|_2>s)\le2de^{-2ns^2/d}$ for $s>0$.
(c) Let an offer's tilt be $\lambda_t=(1-t)\lambda_j+t\lambda_k$ for piloted neighbours $j,k$ (at the no-offer plan $\lambda=0$ is exact), and let both pilot means lie in their balls, with $\kappa_\rho$ the smaller of the two constants. Then
\[
 |\E_{Q_{\widehat\lambda_t}}f-\E_{Q_{\lambda^\star_t}}f|\le\tfrac{\sqrt d}{4}\mathrm{osc}(f)\,\kappa_\rho^{-1}\max_{i\in\{j,k\}}\|\widehat m_i-m^\star_i\|_2 .
\]
\end{lemma}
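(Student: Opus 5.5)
The plan is to prove the three parts separately. All of them rest on the smooth inverse $m\mapsto\lambda(m)$ established in the setting, which is defined on the interior of $\mathrm{conv}\,\phi(\mathcal Z)$ and has Jacobian $d\lambda/dm=\Sigma_{\lambda(m)}^{-1}$. Two facts come from the range $\phi\in[0,1]^d$. Popoviciu's inequality gives $\mathrm{sd}_q(f)\le\tfrac12\mathrm{osc}(f)$ under every law $q$. Each coordinate of $\phi$ has variance at most $1/4$, so $\Sigma_\lambda\preceq\mathrm{tr}(\Sigma_\lambda)I\preceq\tfrac d4 I$ for every $\lambda$. This upper eigenvalue bound is global, so in (c) it will not need to be checked along a segment.

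\emph{Part (a).} Since $\bar B(m^\star,\rho)$ is convex, the segment $m_s=m+s(m'-m)$, $s\in[0,1]$, stays in the ball, and there $\Sigma_{\lambda(m_s)}\succeq\kappa_\rho I$. Put $v=m'-m$. Differentiating the tilted expectation in $\lambda$ and applying the chain rule gives
\[
 \frac{d}{ds}g_f(m_s)=\mathrm{Cov}_{\lambda(m_s)}(f,\phi)^\top\Sigma^{-1}v=\mathrm{Cov}(f,u^\top\phi),\qquad u=\Sigma^{-1}v .
\]
By Cauchy--Schwarz this is at most $\mathrm{sd}(f)\,(u^\top\Sigma u)^{1/2}=\mathrm{sd}(f)\,(v^\top\Sigma^{-1}v)^{1/2}$. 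Popoviciu and the eigenvalue floor then bound it by $\tfrac12\mathrm{osc}(f)\kappa_\rho^{-1/2}\|v\|_2$, and integrating over $s$ gives (a). This is the same argument as Step~1 of Lemma~\ref{lem:mm}, except that it is run in moment coordinates, so only $\kappa_\rho^{-1/2}$ appears and not $\Lambda^{1/2}/\kappa$.

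\emph{Part (b).} The pilot mean has independent coordinates' worth of variance: $\E\|\widehat m-m^\star\|_2^2=\sum_i\mathrm{Var}_P(\phi_i)/n\le d/(4n)$. Jensen's inequality then bounds $\E\|\widehat m-m^\star\|_2$ by the square root. For the tail, if $\|\widehat m-m^\star\|_2>s$ then some coordinate deviates by more than $s/\sqrt d$. Hoeffding's inequality for that coordinate gives $2e^{-2ns^2/d}$, and a union bound over the $d$ coordinates gives the stated tail.

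\emph{Part (c).} By linearity of the interpolation,
\[
 \widehat\lambda_t-\lambda^\star_t=(1-t)(\widehat\lambda_j-\lambda^\star_j)+t(\widehat\lambda_k-\lambda^\star_k),
\]
so $\|\widehat\lambda_t-\lambda^\star_t\|_2\le\max_i\|\widehat\lambda_i-\lambda^\star_i\|_2$. The derivative bound of Lemma~\ref{lem:mm}, Step~1, applied along the $\lambda$-segment with the global bound $\Lambda\le d/4$, gives
\[
 |\E_{Q_{\widehat\lambda_t}}f-\E_{Q_{\lambda^\star_t}}f|\le\tfrac12\mathrm{osc}(f)\,\tfrac{\sqrt d}{2}\,\|\widehat\lambda_t-\lambda^\star_t\|_2 .
\]
Each neighbour's parameter error is then converted to a moment error by integrating $d\lambda/dm=\Sigma^{-1}$ along the segment from $m^\star_i$ to $\widehat m_i$. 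That segment lies in the ball by convexity, so the integral is bounded by $\kappa_\rho^{-1}\|\widehat m_i-m^\star_i\|_2$. Combining the two displays gives the constant $\tfrac{\sqrt d}{4}\kappa_\rho^{-1}$. At the no-offer endpoint $\lambda=0$ is exact and contributes zero, so the maximum is over the piloted neighbours only.

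The main obstacle is part (c). There the tilt $\lambda_t$ is not the moment-matching parameter of any interpolated moment, so the moment-coordinate argument of (a) cannot be used. It has to be replaced by a parameter-space argument, which costs the worse factor $\kappa_\rho^{-1}$ and relies on the global covariance ceiling $d/4$.
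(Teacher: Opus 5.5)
Your proof is correct and follows the paper's argument step for step. In (b) you use the trace of the covariance together with coordinatewise Hoeffding at level $s/\sqrt d$ and a union bound, and in (c) you use the global ceiling $\Sigma\preceq\tfrac d4 I$ along the $\lambda$-segment, convexity of the interpolation, and integration of $d\lambda/dm=\Sigma^{-1}$ inside the ball. In (a) you apply Cauchy--Schwarz to the directional derivative $\mathrm{Cov}(f,u^\top\phi)$ with $u=\Sigma^{-1}v$, where the paper bounds the gradient norm through $c^\top\Sigma^{-1}c\le\mathrm{Var}(f)$; this is the same inequality in a different form. One wording fix in (b): the identity $\E\|\widehat m-m^\star\|_2^2=\mathrm{tr}\,\mathrm{Cov}_P(\phi)/n$ needs independence across the $n$ trajectories, not across coordinates.
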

\begin{proof}
(a) By the chain rule, $\nabla g_f(m)=\Sigma^{-1}c$ with $\Sigma=\Sigma_{\lambda(m)}$ and $c=\mathrm{Cov}_{Q_{\lambda(m)}}(\phi,f)$. Since $c^\top\Sigma^{-1}c$ is the variance of the best linear predictor of $f$ from $\phi$,
\begin{align*}
 \|\Sigma^{-1}c\|_2^2
 &\le\kappa_\rho^{-1}c^\top\Sigma^{-1}c\\
 &\le\kappa_\rho^{-1}\mathrm{Var}(f)\\
 &\le\kappa_\rho^{-1}\mathrm{osc}(f)^2/4 .
\end{align*}
Integrate along the segment from $m$ to $m'$, which stays in the ball.

(b) Since $\phi\in[0,1]^d$,
\begin{align*}
 \E\|\widehat m-m^\star\|_2^2
 &=\mathrm{tr}\,\mathrm{Cov}_P(\phi)/n\\
 &\le d/(4n).
\end{align*}
For the tail apply Hoeffding's inequality \citep{hoeffding1963probability} to each coordinate at level $s/\sqrt d$ and take a union bound.

(c) \emph{Step 1: covariance bound.} For $\phi\in[0,1]^d$,
\[
 \Sigma\preceq(\mathrm{tr}\,\Sigma)I\preceq\tfrac d4I .
\]
Along $\lambda^\star_t+\tau u$ with $u=\widehat\lambda_t-\lambda^\star_t$, the derivative of the expectation is $\mathrm{Cov}(f,u^\top\phi)$, and
\begin{align*}
 |\mathrm{Cov}(f,u^\top\phi)|
 &\le\tfrac12\mathrm{osc}(f)(u^\top\Sigma u)^{1/2}\\
 &\le\tfrac12\mathrm{osc}(f)\tfrac{\sqrt d}{2}\|u\| .
\end{align*}

\emph{Step 2: integrate along the segment.} By convexity,
\[
 \|u\|\le\max_i\|\widehat\lambda_i-\lambda^\star_i\| .
\]
Integrating $d\lambda/dm=\Sigma^{-1}$ along the segment from $m^\star_i$ to $\widehat m_i$ gives
\[
 \|\widehat\lambda_i-\lambda^\star_i\|\le\kappa_\rho^{-1}\|\widehat m_i-m^\star_i\| .
\]
\end{proof}

\begin{lemma}[Stability of the centred band]\label{lem:band}
Let $q,q'$ be laws on $\mathcal Z$.
(i) If $\beta|M(q')-M(q)|<\gamma(q)$, then $S(q')=S(q)$ and
\[
 |F(q')-F(q)|\le|M(q')-M(q)|+M(q)\,|q'(S(q))-q(S(q))|.
\]
(ii) Let $q$ have full support and $M(q)>0$, let $Z_-$ and $Z_+$ be the trajectories paying exactly $\alpha M(q)$ and $\beta M(q)$, and let $w_\pm=q(Z_\pm)$. If $q_k\to q$ with $M(q_k)>M(q)$ for all $k$, then
\[
 F(q_k)\to M(q)\{q(S(q))-w_-\};
\]
if $M(q_k)<M(q)$ for all $k$, then $F(q_k)\to M(q)\{q(S(q))-w_+\}$.
\end{lemma}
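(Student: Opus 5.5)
Both parts rest on one fact: the support $\mathcal Z$ is finite and the payment $y$ takes finitely many values. Hence the band set $S(q)$ is determined by which payment values fall in $[\alpha M(q),\beta M(q)]$. The whole argument reduces to tracking how the band endpoints $\alpha M$ and $\beta M$ move relative to these finitely many values as the law changes.

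\emph{Part (i).} Suppose $\beta|M(q')-M(q)|<\gamma(q)$. Since $0<\alpha<1<\beta$, each endpoint moves by at most $\beta|M(q')-M(q)|$:
\[
 |\alpha M(q')-\alpha M(q)|\le\beta|M(q')-M(q)|,\qquad |\beta M(q')-\beta M(q)|=\beta|M(q')-M(q)|.
\]
Both shifts are strictly less than $\gamma(q)$. By definition of the margin, every payment value $y(z)$ lies at distance at least $\gamma(q)$ from both endpoints $\alpha M(q)$ and $\beta M(q)$. So no payment value can cross an endpoint, and each $z$ keeps its in-band or out-of-band status; thus $S(q')=S(q)$.

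With the band sets equal, write
\[
 F(q')-F(q)=(M(q')-M(q))\,q'(S)+M(q)\bigl(q'(S)-q(S)\bigr).
\]
Since $q'(S)\le1$, the stated bound follows.

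\emph{Part (ii).} Let $q_k\to q$ on a finite space. Then $M(q_k)\to M(q)$ and $q_k(A)\to q(A)$ for every fixed set $A$. Because there are finitely many payment values, there is $\epsilon>0$ such that no value $y(z)$ lies in $(\alpha M,\alpha M+\alpha\epsilon]$ or $[\beta M-\beta\epsilon,\beta M)$, apart from the values exactly at the endpoints; here $M=M(q)$.

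Take the case $M(q_k)>M$, and consider $k$ large enough that $0<M(q_k)-M<\epsilon$. The band $[\alpha M(q_k),\beta M(q_k)]$ then sits strictly to the right of $[\alpha M,\beta M]$, within less than $\epsilon$ in each endpoint.
\begin{itemize}
\item Trajectories in $Z_-$ pay exactly $\alpha M<\alpha M(q_k)$, so they leave the band.
\item Trajectories in $Z_+$ pay $\beta M<\beta M(q_k)$, so they stay in.
\item No other payment value enters or leaves, by the choice of $\epsilon$.
\end{itemize}
Hence $S(q_k)=S(q)\setminus Z_-$ for all large $k$. Therefore
\[
 F(q_k)=M(q_k)\,q_k\bigl(S(q)\setminus Z_-\bigr)\to M(q)\{q(S(q))-w_-\}.
\]
The case $M(q_k)<M$ is symmetric: $Z_+$ leaves the band, $Z_-$ stays, and the limit is $M(q)\{q(S(q))-w_+\}$.

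The full-support and $M(q)>0$ assumptions serve two purposes. They guarantee the endpoints are distinct, since $\alpha M<\beta M$. They also make $w_\pm$ genuine masses, which is what makes the jump nonzero when $Z_\pm\neq\varnothing$.

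\emph{Main obstacle.} The delicate point is the choice of $\epsilon$ in (ii). It must exclude payment values strictly near an endpoint without excluding the ones exactly at it. I would handle this with the gap
\[
 \epsilon=\min\bigl\{|y(z)-cM|/\beta:\ c\in\{\alpha,\beta\},\ y(z)\neq cM\bigr\}>0,
\]
which is positive because there are finitely many payment values and every term in the minimum is nonzero.
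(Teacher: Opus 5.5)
Your proof is correct and follows the paper's argument essentially step for step. In (i), both band endpoints move by less than the margin $\gamma(q)$, so no payment value crosses an endpoint, and $F(q')-F(q)=(M(q')-M(q))\,q'(S)+M(q)\{q'(S)-q(S)\}$. In (ii), values off the endpoints keep their membership once $|M(q_k)-M(q)|$ is small, while $Z_-$ (respectively $Z_+$) leaves the band.

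One small slip: your first description of $\epsilon$ excludes values in $[\beta M-\beta\epsilon,\beta M)$, but for $M(q_k)>M$ the risk at the upper endpoint is a value just above $\beta M$ entering the band. The two-sided $\epsilon$ you define at the end, with absolute values, rules out both sides, so the argument stands.
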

\begin{proof}
(i) Write $M=M(q)$, $M'=M(q')$, $\gamma=\gamma(q)$ and $\Delta=|M'-M|<\gamma/\beta$. A trajectory in $S(q)$ has $\alpha M+\gamma\le y\le\beta M-\gamma$, so
\begin{align*}
 \alpha M'&\le\alpha M+\alpha\Delta<\alpha M+\gamma\le y,\\
 y&\le\beta M-\gamma<\beta M-\beta\Delta\le\beta M'.
\end{align*}
A trajectory outside $S(q)$ has either
\[
 y\le\alpha M-\gamma<\alpha M'
\]
or
\[
 y\ge\beta M+\gamma>\beta M'.
\]
Hence $S(q')=S(q)=:S$ and
\[
 F(q')-F(q)=(M'-M)\,q'(S)+M\{q'(S)-q(S)\}.
\]
(ii) A trajectory with $y\notin\{\alpha M,\beta M\}$ keeps its membership once $\beta|M(q_k)-M|$ is below its distance to the endpoints, as in (i). If $M(q_k)>M$, then $\alpha M(q_k)>y$ on $Z_-$ and $\beta M(q_k)>y$ on $Z_+$, so eventually $S(q_k)=S(q)\setminus Z_-$ and
\[
 F(q_k)=M(q_k)\,q_k(S(q)\setminus Z_-)\to M\{q(S(q))-w_-\}.
\]
The other case is symmetric.
\end{proof}

\begin{proof}[Proof of Corollary~\ref{cor:rates}]
(a) On $\{\|\widehat m-m^\star\|\le\rho\}$, Lemma~\ref{lem:mm-rate}(a,b) bounds the expected difference by $\tfrac12\mathrm{osc}(f)\kappa_\rho^{-1/2}\sqrt{d/(4n)}$. Off this event the difference is at most $\mathrm{osc}(f)$, and the event has probability at most $2de^{-2n\rho^2/d}$. Given $P^{\mathrm{mm}}$, $\E_{P^{\mathrm{int}}}f$ averages $S$ independent values with variance at most $\mathrm{osc}(f)^2/4$. The interpolated case uses Lemma~\ref{lem:mm-rate}(c) in the same way.

(b) \emph{Moment term.} On $G=\{\|\widehat m-m^\star\|\le r\}$, Lemma~\ref{lem:mm-rate}(a) with $f=y$ gives $\beta|M(P^{\mathrm{mm}})-M^\star|\le\gamma/2<\gamma$. Lemma~\ref{lem:band}(i), with Lemma~\ref{lem:mm-rate}(a) for $f=y$ and $f=\mathbf 1_{S(P^{\mathrm{syn}})}$, then gives
\begin{align*}
 |F(P^{\mathrm{mm}})-F(P^{\mathrm{syn}})|
 &\le\tfrac12(\bar y+M^\star)\kappa_\rho^{-1/2}\|\widehat m-m^\star\|\\
 &\le\bar y\kappa_\rho^{-1/2}\|\widehat m-m^\star\|.
\end{align*}
Off $G$ the difference is at most $\bar y$, since $0\le F\le\bar y$.

\emph{Integration term.} Hoeffding's inequality gives
\[
 \Pr\bigl(\beta|M(P^{\mathrm{int}})-M(P^{\mathrm{mm}})|\ge\gamma'\bigr)\le2e^{-2S\gamma'^2/(\beta\bar y)^2};
\]
otherwise Lemma~\ref{lem:band}(i) applies, with $\E|\Delta M|\le\bar y/(2\sqrt S)$ and $\E|\Delta q(S)|\le1/(2\sqrt S)$.

(c) \emph{Step 1: asymptotic normality.} Almost surely $\widehat m\to m^\star$, and $\sqrt n(\widehat m-m^\star)$ is asymptotically normal with covariance $\mathrm{Cov}_P(\phi)$.

\emph{Step 2: delta method.} The gradient of $M$ with respect to the moments at $m^\star$ is $v=\Sigma^{-1}\mathrm{Cov}_{P^{\mathrm{syn}}}(\phi,y)\neq0$, so $\sqrt n\{M(P^{\mathrm{mm}})-M^\star\}$ is asymptotically normal with variance $v^\top\mathrm{Cov}_P(\phi)v>0$, and each sign has limiting probability $\tfrac12$.

\emph{Step 3: the limit.} By Lemma~\ref{lem:band}(ii) and continuity of $m\mapsto Q_{\lambda(m)}$, with $\Delta M=M(P^{\mathrm{mm}})-M^\star$,
\[
 |F(P^{\mathrm{mm}})-F(P^{\mathrm{syn}})|-M^\star\{w_-\mathbf 1(\Delta M>0)+w_+\mathbf 1(\Delta M<0)\}\to0
\]
in probability. The difference is bounded by $\bar y$, so its expectation converges to $\tfrac12M^\star(w_-+w_+)$.
\end{proof}

\paragraph{Two examples.}
Both fall under (c).

\emph{First example.} If $Y\in\{1,\tfrac32\}$ with equal probabilities, then $M^\star=\tfrac54$, the band is $[1,\tfrac32]$, and both payment values sit on its endpoints ($w_-=w_+=\tfrac12$): shifting mass $t\neq0$ between them gives
\[
 F=(\tfrac54+\tfrac t2)(\tfrac12+|t|)\to\tfrac58,
\]
whereas $F=\tfrac54$ at $t=0$.

\emph{Second example.} If a unit prize is won with probability $p^\star=\tfrac56$, the prize sits on the upper endpoint $\beta M^\star=1$ ($w_+=\tfrac56$), and the limit is $\tfrac12\cdot\tfrac56\cdot\tfrac56=\tfrac{25}{72}$. This example runs through Eq.~\eqref{eq:tilt} with a nonsingular covariance: four trajectories $(C,D)\in\{(1,1),(2,1),(3,2),(4,2)\}$, features $(C/4,D/2)$, a uniform base, a prize for $C\ge3$, and truth $Q_{\lambda^\star}$ with $\lambda^\star=(0,2\log5)$.

\subsection{What the lower bounds cover}\label{app:chain-lower}
Proposition~\ref{prop:four-term}(ii) is a statement about tables. Its instances perturb the entries of a fixed menu directly, and the truths consistent with the final tables range over all tables within the stage errors, not over laws produced by an I-projection, a finite pilot and finite sampling. The next proposition asks what the statistical chain itself attains.

\begin{proposition}[Lower bounds attained by the statistical chain]\label{prop:chain-lower}
Take one layer with two plans $x,y$ that pay nothing, so that every portfolio is feasible and $\omega\equiv0$, and a value integrand $h$.

(a) \emph{Synthetic stage, constant attained.} Let $p_0$ and $\phi$ be as in Appendix~\ref{app:rates}, and let $h:\mathcal Z\to[0,1]$ not be affine in $\phi$. There is $\bar\varepsilon>0$ such that for every $\varepsilon\in(0,\bar\varepsilon]$ there are offer laws $P_x,P_y$ with the same I-projection $Q^\star$ and
\[
 \E_{P_x}h-\E_{Q^\star}h=\varepsilon=\E_{Q^\star}h-\E_{P_y}h .
\]
With population moments, exact summation and exact enumeration, $\varepsilon^v_{\mathrm{syn}}=\varepsilon$ and the other terms vanish, and every deterministic planner that uses the laws only through their $\phi$-moments has regret $2\varepsilon$ under one of the two labellings. Thus Eq.~\eqref{eq:four-term} holds with equality for the actual I-projection.

(b) \emph{Moment-matching stage, order attained.} Let $\mathcal Z=\{0,1\}$, $\phi(z)=h(z)=z$ and any full-support base, so that the tilt family contains every nondegenerate Bernoulli law and $P^{\mathrm{mm}}$ is the pilot frequency. Let $P_x,P_y$ be Bernoulli$(\tfrac12\pm\delta)$ with $\delta=(3/(128n))^{1/2}$, and let each plan receive $n$ independent pilot trajectories. Every rule that picks a plan from the pilot data has expected regret at least
\[
 \delta/2=(3/(512n))^{1/2}
\]
under one of the two labellings.

(c) \emph{Integration stage, order attained by sampling only.} The construction of (b), with $S$ independent draws from $P^{\mathrm{mm}}$ in place of the pilot, gives the $S$-draw integrator expected regret at least $(3/(512S))^{1/2}$ on some instance. Exact summation over a finite $\mathcal Z$ has no integration error (Lemma~\ref{lem:int}).
\end{proposition}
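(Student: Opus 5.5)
The plan has three separate constructions on the one-layer, two-plan, zero-cost menu. In each, $\omega\equiv0$ and every term other than the one under study is exact. The lower bound then comes from a two-point (relabelling) argument: the planner's inputs have the same law under both labellings, while the true values of $x$ and $y$ swap.

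For (a), I would use the fact that the I-projection depends on an offer law only through its $\phi$-moments $\E_P\phi$ and the fixed base $p_0$. So it suffices to perturb $Q^\star$ without moving its moments. Let $g$ be the residual of $h$ after $L^2(Q^\star)$-projection onto $\mathrm{span}\{1,\phi_1,\dots,\phi_d\}$. Since $Q^\star$ has full support (Appendix~\ref{app:rates}, $p_0$ full support) and $h$ is not affine in $\phi$, $g\neq0$ and $\E_{Q^\star}[hg]=\E_{Q^\star}[g^2]>0$. Put $P_{x,y}=Q^\star(1\pm t g)$. These are probability laws for $0<t\le 1/\max|g|$, they are absolutely continuous with respect to $p_0$, and they have exactly the moments of $Q^\star$ because $g\perp 1,\phi$. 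Hence both have I-projection $Q^\star$, and $\E_{P_{x}}h-\E_{Q^\star}h=tE_{Q^\star}[g^2]=\E_{Q^\star}h-\E_{P_y}h$. Setting $\varepsilon=t\E g^2$ and $\bar\varepsilon=\E g^2/\max|g|$ gives the required pair. The tables then read $\E_{Q^\star}h$ for both plans, so $\varepsilon^v_{\mathrm{syn}}=\varepsilon$ (one layer, max over plans) and every other term is zero. The right side of Eq.~\eqref{eq:four-term} is therefore $2\varepsilon$. A planner that sees only $\phi$-moments receives identical inputs under the swapped labelling, so it picks the worse plan under one of the two and has regret $2\varepsilon$, which is equality.

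For (b), with $\mathcal Z=\{0,1\}$ the tilt family contains every nondegenerate Bernoulli law, so $P^{\mathrm{mm}}$ is the pilot frequency and any rule is a function of the $2n$ pilot draws. I would apply Le Cam's two-point method. Let $\mathbb P_1$ be the joint data law under the labelling $(P_x,P_y)=(\mathrm{Ber}(\tfrac12+\delta),\mathrm{Ber}(\tfrac12-\delta))$ and $\mathbb P_2$ the law under the swapped labelling. A wrong pick costs $2\delta$, so the larger of the two expected regrets is at least $\delta\,(1-\mathrm{TV}(\mathbb P_1,\mathbb P_2))$. By Pinsker's inequality, $\mathrm{TV}\le\sqrt{\mathrm{KL}/2}$ with $\mathrm{KL}=2n\cdot2\delta\log\frac{1+2\delta}{1-2\delta}$. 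Bounding the logarithm by $4\delta(1+O(\delta^2))$ gives $\mathrm{KL}\approx16n\delta^2=3/8$ at $\delta^2=3/(128n)$, so $\mathrm{TV}\le\sqrt{3/16}<\tfrac12$ and the regret is at least $\delta/2$. The main obstacle is this constant bookkeeping. The higher-order term in the logarithm, which matters at small $n$ where $\delta\le0.16$, must still leave $\mathrm{TV}\le\tfrac12$. If the crude expansion is not enough, I would bound the log-ratio directly on $[0,0.16]$, or switch to a Hellinger-distance bound, adjusting the $3/128$ only if forced to.

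For (c), I would take the pilot moments to be exact, so that $P^{\mathrm{mm}}$ equals the true law and the synthetic and moment terms vanish. The integrator's tables are then empirical means of $S$ independent draws from $\mathrm{Ber}(\tfrac12\pm\delta)$, and the planner's input is a function of those $2S$ draws. The argument of (b) applies verbatim with $n$ replaced by $S$. For the final remark, Lemma~\ref{lem:int} gives $\varepsilon_{\mathrm{int}}=0$ under exact summation, so the sampling lower bound is specific to the sampled integrator.
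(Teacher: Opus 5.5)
Your proposal is correct and follows the paper's proof in all three parts. In (a) you perturb a tilt $Q^\star=Q_\lambda$ (a choice you should state explicitly) along a direction invisible to $1$ and $\phi$; you use the $L^2(Q^\star)$ residual and $Q^\star(1\pm tg)$, where the paper uses the Euclidean residual $w$ and $Q^\star\pm\varepsilon w$, an equivalent choice. Part (b) is Le Cam's two-point bound with Pinsker, and (c) is (b) with the $S$ draws as the data. The constant bookkeeping you flag is closed by the elementary bound $\log\frac{1+2\delta}{1-2\delta}\le\frac{16}{3}\delta$ for $\delta\le\frac14$, as in the paper: it gives $\mathrm{KL}\le\frac{64}{3}n\delta^2$ and hence $\mathrm{TV}\le\frac12$ exactly at $\delta^2=3/(128n)$.
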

\begin{proof}
(a) \emph{Step 1: a direction invisible to the moments.} Since $h\notin\mathrm{span}\{1,\phi_1,\dots,\phi_d\}$ in $\R^{\mathcal Z}$, its residual $w$ after orthogonal projection onto that span is nonzero, with
\begin{align*}
 \sum_zw(z)&=0,\\
 \sum_zw(z)\phi(z)&=0,\\
 \langle w,h\rangle&=\|w\|^2;
\end{align*}
rescale $w$ so that $\langle w,h\rangle=1$.

\emph{Step 2: the two laws.} Let $Q^\star=Q_\lambda$ for any $\lambda$ and $\bar\varepsilon=\min_{w(z)\neq0}Q^\star(z)/|w(z)|$. Then
\begin{align*}
 P_x&=Q^\star+\varepsilon w,\\
 P_y&=Q^\star-\varepsilon w
\end{align*}
are laws with moments $\E_{Q^\star}\phi$. By the identity of Lemma~\ref{lem:syn}, for every $q$ with these moments,
\[
 \mathrm{KL}(q\Vert p_0)=\mathrm{KL}(q\Vert Q^\star)+\mathrm{KL}(Q^\star\Vert p_0),
\]
so $Q^\star$ is the I-projection of both, and their values are $\E_{Q^\star}h\pm\varepsilon$.

\emph{Step 3: the regret.} All later tables of $x$ and $y$ coincide, so a deterministic planner's choice depends only on the labels, and the labelling under which it picks the plan with law $P_y$ has regret $2\varepsilon$. A randomized planner has regret at least $\varepsilon$ averaged over the two labellings.

(b) \emph{Step 1: two-point reduction.} Let $\mathsf P_1,\mathsf P_2$ be the laws of the $2n$ pilot outcomes under the two labellings. A rule $\psi$ has regrets $2\delta\,\mathsf P_1(\psi=y)$ and $2\delta\,\mathsf P_2(\psi=x)$, whose sum is at least
\[
 2\delta\{1-\mathrm{TV}(\mathsf P_1,\mathsf P_2)\}.
\]

\emph{Step 2: bounding the distance.} For $\delta\le\frac14$,
\[
 \mathrm{KL}(\mathsf P_1\Vert\mathsf P_2)=4n\delta\log\frac{1+2\delta}{1-2\delta}\le\frac{64}{3}n\delta^2,
\]
so Pinsker's inequality gives
\[
 \mathrm{TV}\le\Bigl(\frac{32}{3}n\delta^2\Bigr)^{1/2}=\frac12 .
\]

(c) This is (b) with the $S$ draws as the data.
\end{proof}
Part~(a) shows that the I-projection itself attains the constant $2$ of the synthetic term, and that this term is the price of compressing the pilot to $d$ moments: $P_x$ and $P_y$ have different trajectory histograms, which a trial over every offer would observe. Parts~(b) and~(c) show that the $n^{-1/2}$ and $S^{-1/2}$ orders of Corollary~\ref{cor:rates} cannot be improved in general: the first by no use of the same pilot, the second only by giving up sampling.

\paragraph{Scope of the decomposition.}
Four kinds of loss are distinct, and only some are bounded.
(1) Proposition~\ref{prop:four-term}(i) is a perturbation bound: it compares the planner with the best portfolio over the \emph{same layer menus}. The value lost by restricting individuals to layer menus,
\[
 \Delta_{\mathrm{menu}}=V^\star_{\mathrm{ind}}-V^\star(0),
\]
with $V^\star_{\mathrm{ind}}$ the best true value over individual plans, is not bounded here, and neither are the ranking and roster choices of M1--M2; they are measured only against the layerings that were tested.
(2) The loss from tightening the budget, $\omega(2E_c)$, is bounded only under a margin or growth condition: on a finite menu $V^\star$ is a step function, so $\omega(2E_c)=0$ when an optimal portfolio's true spend lies in $\mathcal B_{2E_c}$, and $\omega(2E_c)\le2\kappa E_c$ when $\omega(\delta)\le\kappa\delta$. The tightening itself needs $E_c$, which is unknown when a plan is issued unless a certificate supplies it, and Algorithm~\ref{alg:request} uses the untightened band.
(3) Part~(ii) of the proposition holds at the level of tables; Proposition~\ref{prop:chain-lower} is the statement for the chain of laws.
(4) Part~(iii) and Proposition~\ref{prop:chain-lower} assert that certain instances exist. They do not say that on a given instance an accurate stage cannot partly offset another: errors can cancel, and the direct error in Table~\ref{tab:loss-check} is below the sum of the stage errors. Nor do they say that realized regret grows whenever one stage term grows. That a stage is needed in practice rests on the ablations and holds relative to the replacements tested there.

\clearpage
\section{Registered test families}\label{app:registered}
\begin{table}[ht]
\caption{Pre-specified decision rules and their outcomes (rules fixed before the runs, and except for the integrated study also the analysis scripts). Rider held-out validation (Appendix~\ref{app:holdout}): paired change, held-out minus reference, of plan-cost error and selection regret at budget fraction 1, 95\% intervals over origins; the rule requires upper bounds $\le0.045$ and $\le0.01$. Synthetic trajectories and a short pilot (Section~\ref{sec:synthetic}): the pre-specified primary arm is the framework with sampled integration ($S=2{,}048$); differences of penalized relative regret at 10\% scoring on the 18 response laws, 95\% intervals over laws. Integrated study (Appendix~\ref{app:integrated}): the registered confirmatory family on 36 response laws; H1 is the framework minus the same-calendar trial in relative regret (band within 20\% of the budget), H2 the geometric-mean speed-up of answering all 60 requests of an origin by reuse, one-off preparation charged, over re-running the whole pipeline per request; 95\% intervals over laws. Its analysis script was completed after the run; H2 is computed on the registered 36 response laws. New generator (Appendix~\ref{app:structural}): the same family on its 36 response laws, with the rules and the analysis script fixed before the run. Stronger controls, behaviour change, joint fit and population reduction (Appendix~\ref{app:strong}, studies C--F): registered families of seven, five, five and three one-sided tests, Holm step-down at level 0.025 within each family; estimates with 95\% law-bootstrap intervals, relative regret within 20\% of the budget (study F: retained share of the individual-level optimum). Study E: full re-run after the correction described in Appendix~\ref{app:strong}.}
\label{tab:prereg}
\centering\scriptsize\setlength{\tabcolsep}{3pt}
\begin{tabular}{@{}>{\raggedright\arraybackslash}p{0.21\linewidth}lll@{}}
\toprule
Held-out condition & Change in plan-cost error & Change in regret & Met\\
\midrule
Unseen period (fit once, 16 origins) & $+$0.051 [$+$0.002, $+$0.098] & $+$0.000 [$+$0.000, $+$0.000] & no\\
Unseen cities (four folds, 45 origins) & $+$0.003 [$-$0.018, $+$0.020] & $+$0.000 [$+$0.000, $+$0.000] & yes\\
\midrule
Pilot rule & 1-week pilot & 2-week pilot & 4-week pilot\\
\midrule
Near-lossless vs.\ 18-week trial (upper $\le0.05$) & $+$0.156 [$+$0.114, $+$0.201] (no) & $+$0.093 [$+$0.061, $+$0.126] (no) & $+$0.059 [$+$0.039, $+$0.077] (no)\\
Beats same-calendar trial (upper $<0$) & $-$0.256 [$-$0.290, $-$0.222] (yes) & $-$0.160 [$-$0.186, $-$0.134] (yes) & $-$0.076 [$-$0.109, $-$0.043] (yes)\\
Without history is worse (lower $>0$) & $+$0.371 [$+$0.266, $+$0.488] (yes) & $+$0.388 [$+$0.261, $+$0.525] (yes) & $+$0.380 [$+$0.258, $+$0.514] (yes)\\
Without moment matching is worse (lower $>0$) & $+$0.369 [$+$0.235, $+$0.499] (yes) & $+$0.433 [$+$0.302, $+$0.561] (yes) & $+$0.467 [$+$0.359, $+$0.574] (yes)\\
Without joint integration is worse (lower $>0$) & $+$0.296 [$+$0.226, $+$0.368] (yes) & $+$0.376 [$+$0.299, $+$0.458] (yes) & $+$0.428 [$+$0.374, $+$0.487] (yes)\\
Without budget allocation is worse (lower $>0$) & $+$0.581 [$+$0.516, $+$0.652] (yes) & $+$0.643 [$+$0.568, $+$0.719] (yes) & $+$0.676 [$+$0.612, $+$0.737] (yes)\\
\midrule
Integrated-study rule & 1-week pilot & 2-week pilot & 4-week pilot\\
\midrule
H1: beats same-calendar trial within 20\% (upper $<0$) & $-$0.171 [$-$0.201, $-$0.140] (yes) & $-$0.124 [$-$0.160, $-$0.089] (yes) & $-$0.069 [$-$0.104, $-$0.035] (yes)\\
H2: 60 requests faster with preparation (lower $>1$, same answers) & 14.1$\times$ [13.8, 14.5] (yes) & 14.2$\times$ [13.8, 14.5] (yes) & 14.3$\times$ [14.0, 14.7] (yes)\\
\midrule
New-generator rule & 1-week pilot & 2-week pilot & 4-week pilot\\
\midrule
H1: beats same-calendar trial within 20\% (upper $<0$) & $-$0.289 [$-$0.311, $-$0.267] (yes) & $-$0.176 [$-$0.192, $-$0.160] (yes) & $-$0.123 [$-$0.140, $-$0.108] (yes)\\
H2: 60 requests faster with preparation (lower $>1$, same answers) & 2.70$\times$ [2.67, 2.73] (yes) & 2.72$\times$ [2.69, 2.75] (yes) & 2.78$\times$ [2.76, 2.81] (yes)\\
\midrule
Stronger-control rule & 1-week pilot & 2-week pilot & 4-week pilot\\
\midrule
H6.1: tilted trial minus trial ($<0$) & $-$0.075 [$-$0.078, $-$0.072] (yes) & & \\
H6.2: minus dose interpolation ($<0$) & $-$0.022 [$-$0.024, $-$0.021] (yes) & & \\
H6.3: minus normal, exact truncated moments ($<0$) & $-$0.053 [$-$0.055, $-$0.051] (yes) & & \\
H6.4: calendar shocks, minus trial ($<0$) & $-$0.267 [$-$0.274, $-$0.259] (yes) & $-$0.214 [$-$0.222, $-$0.206] (yes) & \\
H6.5: ESS deficit vs.\ cost error, Spearman ($>0$) & $+$0.415 [$+$0.400, $+$0.431] (yes) & & \\
H6.6: support 1,024 minus 16,384 weeks ($>0$) & $+$0.006 [$+$0.006, $+$0.006] (yes) & & \\
\midrule
Behaviour-change rule & 1-week pilot & 2-week pilot & 4-week pilot\\
\midrule
H7.1--H7.3: request-aware minus direct reuse ($<0$) & $-$0.195 [$-$0.202, $-$0.189] (yes) & $-$0.255 [$-$0.261, $-$0.249] (yes) & $-$0.300 [$-$0.306, $-$0.293] (yes)\\
H7.4: same, selected subpopulations ($<0$) & $-$0.347 [$-$0.356, $-$0.338] (yes) & & \\
H7.5: request-aware minus trial, 288 laws ($<0$) & $-$0.129 [$-$0.136, $-$0.123] (yes) & & \\
\midrule
Joint-fit rule (study E) & 1-week pilot & 2-week pilot & 4-week pilot\\
\midrule
H9.1: framework minus joint fit of the trial ($<0$) & $-$0.055 [$-$0.059, $-$0.051] (yes) & $-$0.052 [$-$0.056, $-$0.048] (yes) & $-$0.060 [$-$0.066, $-$0.053] (yes)\\
H9.2: joint fit minus per-offer tilt of the trial ($<0$) & $-$0.152 [$-$0.158, $-$0.146] (yes) & & \\
H9.3: cost-table error, framework minus joint fit ($<0$) & $-$0.014 [$-$0.015, $-$0.014] (yes) & & \\
\midrule
Reduction rule (study F) & \multicolumn{3}{l}{retained share of the individual-level optimum, difference}\\
\midrule
H8.1: five layers minus group-level plan ($>0$) & $+$0.120 [$+$0.116, $+$0.125] (yes) & & \\
H8.2: five quantile layers minus five equal-width bins ($>0$) & $-$0.002 [$-$0.002, $-$0.001] (no) & & \\
H8.3: five layers minus two layers ($>0$) & $+$0.046 [$+$0.044, $+$0.047] (yes) & & \\
\bottomrule
\end{tabular}
\end{table}

\end{document}